\newcommand*{\LongState}[1]{\State
\parbox[t]{15pt+\linegoal}{#1\strut}}
\newtheorem{fact}{Fact}
\newtheorem{lemma}{Lemma}
\newtheorem{theorem}{Theorem}
\newtheorem{corollary}{Corollary}
\newtheorem{definition}{Definition}
\newtheorem{proposition}{Proposition}
\def\sign{\text{sign}}
\def\E{\mathbb{E}}
\def\P{\mathbb{P}}
\def\calZ{\mathcal{Z}}
\def\calF{\mathcal{F}}
\def\calO{\mathcal{O}}
\def\calY{\mathcal{Y}}
\def\calX{\mathcal{X}}
\def\calH{\mathcal{H}}
\def\calU{\mathcal{U}}
\def\calS{\mathcal{S}}
\def\err{\text{err}}
\def\R{\text{R}}
\def\DIS{\text{DIS}}
\def\endelta{\sigma(n,\delta)}
\def\enjdj{\sigma(n_j,\tilde{\delta}_j)}
\def\enjzdjz{\sigma(n_{j_0},\tilde{\delta}_{j_0})}
\def\errguar{\eta}
\def\argmin{\text{argmin}}
\title{Beyond Disagreement-based Agnostic Active Learning}
\author{
Chicheng Zhang\\
Computer Science and Engineering Department\\
University of California, San Diego \\
9500 Gilman Drive, La Jolla, CA 92093 \\
\texttt{chz038@eng.ucsd.edu} \\
\and
Kamalika Chaudhuri \\
Computer Science and Engineering Department\\
University of California, San Diego \\
9500 Gilman Drive, La Jolla, CA 92093 \\
\texttt{kamalika@cs.ucsd.edu} \\
}
\begin{document}
\maketitle

\begin{abstract}
We study agnostic active learning, where the goal is to learn a classifier in a pre-specified hypothesis class interactively with as few label queries as possible, while making no assumptions on the true function generating the labels. The main algorithms for this problem are {\em{disagreement-based active learning}}, which has a high label requirement, and {\em{margin-based active learning}}, which only applies to fairly restricted settings. A major challenge is to find an algorithm which achieves better label complexity, is consistent in an agnostic setting, and applies to general classification problems.

In this paper, we provide such an algorithm. Our solution is based on two novel contributions -- a reduction from consistent active learning to confidence-rated prediction with guaranteed error, and a novel confidence-rated predictor. 
\end{abstract}

\section{Introduction}

In this paper, we study {\em{active learning}} of classifiers in an agnostic setting, where no assumptions are made on the true function that generates the labels. The learner has access to a large pool of unlabelled examples, and can interactively request labels for a small subset of these; the goal is to learn an accurate classifier in a pre-specified class with as few label queries as possible. Specifically, we are given a hypothesis class $\calH$ and a target $\epsilon$, and our aim is to find a binary classifier in $\calH$ whose error is at most $\epsilon$ more than that of the best classifier in $\calH$, while minimizing the number of requested labels.

There has been a large body of previous work on active learning; see the surveys by~\cite{D11, S10} for overviews. The main challenge in active learning is ensuring consistency in the agnostic setting while still maintaining low label complexity. In particular, a very natural approach to active learning is to view it as a generalization of binary search~\cite{FSST97, D05, N11}. While this strategy has been extended to several different noise models~\cite{K06, N11, NJC13}, it is generally inconsistent in the agnostic case~\cite{DH08}. 

The primary algorithm for agnostic active learning is called {\em{disagreement-based active learning}}. The main idea is as follows. A set $V_k$ of possible risk minimizers is maintained with time, and the label of an example $x$ is queried if there exist two hypotheses $h_1$ and $h_2$ in $V_k$ such that $h_1(x) \neq h_2(x)$. This algorithm is consistent in the agnostic setting~\cite{CAL94, BBL09, DHM07, H07, BDL09, H09, BHLZ10, K10}; however, due to the conservative label query policy, its label requirement is high. A line of work due to~\cite{BBZ07, BL13, ABL14} have provided algorithms that achieve better label complexity for linear classification on the uniform distribution over the unit sphere as well as log-concave distributions; however, their algorithms are limited to these specific cases, and it is unclear how to apply them more generally.

Thus, a major challenge in the agnostic active learning literature has been to find a general active learning strategy that applies to any hypothesis class and data distribution, is consistent in the agnostic case, and has a better label requirement than disagreement based active learning. This has been mentioned as an open problem by several works, such as~\cite{BBL09, D11, BL13}.

In this paper, we provide such an algorithm. Our solution is based on two key contributions, which may be of independent interest. The first is a general connection between {\em{confidence-rated predictors}} and active learning. A confidence-rated predictor is one that is allowed to abstain from prediction on occasion, and as a result, can guarantee a target prediction error. Given a confidence-rated predictor with guaranteed error, we show how to use it to construct an active label query algorithm consistent in the agnostic setting. Our second key contribution is a novel confidence-rated predictor with guaranteed error that applies to any general classification problem. We show that our predictor is {\em{optimal}} in the realizable case, in the sense that it has the lowest abstention rate out of all predictors that guarantee a certain error. Moreover, we show how to extend our predictor to the agnostic setting. 

Combining the label query algorithm with our novel confidence-rated predictor, we get a general active learning algorithm consistent in the agnostic setting. We provide a characterization of the label complexity of our algorithm, and show that this is better than disagreement-based active learning in general. Finally, we show that for linear classification with respect to the uniform distribution and log-concave distributions, our bounds reduce to those of~\cite{BBZ07, BL13}.

\section{Algorithm}
\label{sec:alg}

\subsection{The Setting} 
We study active learning for binary classification. Examples belong to an instance space $\calX$, and their labels lie in a label space $\calY = \{-1, 1\}$; labelled examples are drawn from an underlying data distribution $D$ on $\calX \times \calY$. We use $D_{\calX}$ to denote the marginal on $D$ on $\calX$, and $D_{Y|X}$ to denote the conditional distribution on $Y | X = x$ induced by $D$. Our algorithm has access to examples through two oracles -- an example oracle $\calU$ which returns an unlabelled example $x \in \calX$ drawn from $D_{\calX}$ and a labelling oracle $\calO$ which returns the label $y$ of an input $x \in \calX$ drawn from $D_{Y|X}$. 

Given a hypothesis class $\calH$ of VC dimension $d$, the error of any $h \in \calH$ with respect to a data distribution $\Pi$ over $\calX \times \calY$ is defined as $\err_{\Pi}(h) = \P_{(x, y) \sim \Pi}(h(x) \neq y)$. We define: $h^*(\Pi) = \argmin_{h \in \calH} \err_{\Pi}(h)$, $\nu^*(\Pi) = \err_{\Pi}(h^*(\Pi))$. For a set $S$, we abuse notation and use $S$ to also denote the uniform distribution over the elements of $S$. We define $\P_\Pi(\cdot):=\P_{(x,y) \sim \Pi}(\cdot)$, $\E_\Pi(\cdot):=\E_{(x,y) \sim \Pi}(\cdot)$.

Given access to examples from a data distribution $D$ through an example oracle $\calU$ and a labeling oracle $\calO$, we aim to provide a classifier $\hat{h} \in \calH$ such that with probability $\geq 1 - \delta$, $\err_D(\hat{h}) \leq \nu^*(D) + \epsilon$, for some target values of $\epsilon$ and $\delta$; this is achieved in an adaptive manner by making as few queries to the labelling oracle $\calO$ as possible. When $\nu^*(D) = 0$, we are said to be in the {\em{realizable case}}; in the more general {\em{agnostic}} case, we make no assumptions on the labels, and thus $\nu^*(D)$ can be positive.

Previous approaches to agnostic active learning have frequently used the notion of {\em{disagreements}}. The disagreement between two hypotheses $h_1$ and $h_2$ with respect to a data distribution $\Pi$ is the fraction of examples according to $\Pi$ to which $h_1$ and $h_2$ assign different labels; formally: $\rho_{\Pi}(h_1, h_2) = \P_{(x,y) \sim \Pi}(h_1(x) \neq h_2(x))$. Observe that a data distribution $\Pi$ induces a pseudo-metric $\rho_{\Pi}$ on the elements of $\calH$; this is called the disagreement metric. For any $r$ and any $h \in \calH$, define $B_{\Pi}(h, r)$ to be the disagreement ball of radius $r$ around $h$ with respect to the data distribution $\Pi$. Formally: $B_{\Pi}(h, r) = \{ h' \in \calH: \rho_{\Pi}(h, h') \leq r \}$. 

For notational simplicity, we assume that the hypothesis space is ``dense" with repsect to the data distribution $D$, in the sense that $\forall r > 0$, $\sup_{h \in B_D(h^*(D),r) } \rho_D(h,h^*(D)) = r$. Our analysis will still apply without the denseness assumption, but will be significantly more messy. Finally, given a set of hypotheses $V \subseteq \calH$, the {\em{disagreement region}} of $V$ is the set of all examples $x$ such that there exist two hypotheses $h_1, h_2 \in V$ for which $h_1(x) \neq h_2(x)$. 

This paper establishes a connection between active learning and confidence-rated predictors with guaranteed error. A confidence-rated predictor is a prediction algorithm that is occasionally allowed to abstain from classification. We will consider such predictors in the transductive setting. Given a set $V$ of candidate hypotheses, an error guarantee $\errguar$, and a set $U$ of unlabelled examples, a confidence-rated predictor $P$ either assigns a label or abstains from prediction on each unlabelled $x \in U$. The labels are assigned with the guarantee that the expected disagreement\footnote{where the expectation is with respect to the random choices made by $P$} between the label assigned by $P$ and any $h \in V$ is $\leq \errguar$. Specifically, 
\begin{equation}\label{eqn:errguar} 
{\text{for all\ }} h \in V, \quad \P_{x \sim U}(h(x) \neq P(x), P(x) \neq 0) \leq \errguar  
\end{equation}
This ensures that if some $h^* \in V$ is the true risk minimizer, then, the labels predicted by $P$ on $U$ do not differ very much from those predicted by $h^*$. The performance of a confidence-rated predictor which has a guarantee such as in Equation~\eqref{eqn:errguar} is measured by its {\em{coverage}}, or the probability of non-abstention $\P_{x \sim U}(P(x) \neq 0)$; higher coverage implies better performance.

\subsection{Main Algorithm}

Our active learning algorithm proceeds in epochs, where the goal of epoch $k$ is to achieve excess generalization error $\epsilon_k = \epsilon 2^{k_0 - k + 1}$, by querying a fresh batch of labels. The algorithm maintains a candidate set $V_k$ that is guaranteed to contain the true risk minimizer. 

The critical decision at each epoch is how to select a subset of unlabelled examples whose labels should be queried. We make this decision using a confidence-rated predictor $P$. At epoch $k$, we run $P$ with candidate hypothesis set $V = V_k$ and error guarantee $\errguar = \epsilon_k/64$. Whenever $P$ abstains, we query the label of the example. The number of labels $m_k$ queried is adjusted so that it is enough to achieve excess generalization error $\epsilon_{k+1}$.
 
An outline is described in Algorithm~\ref{alg:labelquery}; we next discuss each individual component in detail.
\begin{algorithm}[H]
\caption{Active Learning Algorithm: Outline}
\label{alg:labelquery}
\begin{algorithmic}[1]
\State {\bf{Inputs:}} Example oracle $\calU$, Labelling oracle $\calO$, hypothesis class $\calH$ of VC dimension $d$, confidence-rated predictor $P$, target excess error $\epsilon$ and target confidence $\delta$.
\State Set $k_0 = \lceil \log{1/\epsilon} \rceil$. Initialize candidate set $V_1 = \calH$. 
\For{$k = 1, 2, .. k_0$}
\State Set $\epsilon_k = \epsilon 2^{k_0 - k +1}$, $\delta_k = \frac{\delta}{2(k_0-k+1)^2}$. 
\LongState{Call $\calU$ to generate a fresh unlabelled sample $U_k = \{z_{k,1}, ..., z_{k,n_k}\}$ of size $n_k=192(\frac{256}{\epsilon_k})^2(d\ln\frac{256}{\epsilon_k} + \ln\frac{288}{\delta_k})$.}
\LongState{Run confidence-rated predictor $P$ with inputs $V=V_k$, $U=U_k$ and error guarantee $\eta = \epsilon_k/64$ to get abstention probabilities $\gamma_{k,1}, \ldots, \gamma_{k,n_k}$ on the examples in $U_k$. These probabilities induce a distribution $\Gamma_k$ on $U_k$. Let $\phi_k = \P_{x \sim U_k}(P(x) = 0) = \frac{1}{n_k} \sum_{i=1}^{n_k} \gamma_{k,i}$.} 

\If{in the Realizable Case}
\LongState{Let $m_k = \frac{768\phi_k}{\epsilon_k}(d\ln\frac{768\phi_k}{\epsilon_k} + \ln\frac{48}{\delta_k})$. Draw $m_k$ i.i.d examples from $\Gamma_k$ and query $\calO$ for labels of these examples to get a labelled data set $S_k$. Update $V_{k+1}$ using $S_k$: $V_{k+1} := \{h \in V_k: h(x) = y, \text{ for all } (x,y) \in S_k \}$.}
\Else 
\LongState{In the non-realizable case, use Algorithm~\ref{alg:adaptive} with inputs hypothesis set $V_k$, distribution $\Gamma_k$, target excess error $\frac{\epsilon_k}{8\phi_k}$, target confidence $\frac{\delta_k}{2}$, and the labeling oracle $\calO$ to get a new hypothesis set $V_{k+1}$.}
\EndIf
\EndFor
\State \Return an arbitrary $\hat{h} \in V_{k_0+1}$.
\end{algorithmic}
\end{algorithm}

\paragraph{Candidate Sets.}
At epoch $k$, we maintain a set $V_k$ of candidate hypotheses guaranteed to contain the true risk minimizer $h^*(D)$ (w.h.p).  In the realizable case, we use a version space as our candidate set. The version space with respect to a set $S$ of labelled examples is the set of all $h \in \calH$ such that $h(x_i) = y_i$ for all $(x_i, y_i) \in S$.

\begin{lemma}
\label{lem:vsrealizable}
Suppose we run Algorithm~\ref{alg:labelquery} in the realizable case with inputs example oracle $\calU$, labelling oracle $\calO$, hypothesis class $\calH$, confidence-rated predictor $P$, target excess error $\epsilon$ and target confidence $\delta$. Then, with probability $1$, $h^*(D) \in V_k, \text{ for all } k = 1, 2, \ldots, k_0+1$.
\end{lemma}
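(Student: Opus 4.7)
The plan is to proceed by a simple induction on $k$, leveraging the fact that in the realizable case $\nu^*(D) = \err_D(h^*(D)) = 0$, so $h^*(D)$ is consistent with every labelled example in the support of $D$.

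For the base case $k=1$, we have $V_1 = \calH$, which trivially contains $h^*(D)$. For the inductive step, suppose $h^*(D) \in V_k$. The update rule in the realizable branch of Algorithm~\ref{alg:labelquery} is
\[
V_{k+1} = \{h \in V_k : h(x) = y \text{ for all } (x,y) \in S_k\},
\]
so it suffices to show that, with probability $1$, $h^*(D)(x) = y$ for every labelled example $(x,y) \in S_k$.

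The main step is the following measure-theoretic observation. Let $A = \{x \in \calX : \P_{y \sim D_{Y|X=x}}(h^*(D)(x) \neq y) > 0\}$. Since
\[
0 = \err_D(h^*(D)) = \int_{\calX} \P_{y \sim D_{Y|X=x}}(h^*(D)(x) \neq y)\, \dx D_{\calX}(x) \geq \int_A \P_{y \sim D_{Y|X=x}}(h^*(D)(x) \neq y)\, \dx D_{\calX}(x),
\]
we conclude $D_{\calX}(A) = 0$. The unlabelled sample $U_k$ is drawn i.i.d.\ from $D_{\calX}$, so with probability $1$, $U_k \cap A = \emptyset$. Conditioned on this, for every $z \in U_k$, the label returned by $\calO$ agrees with $h^*(D)(z)$ with probability $1$. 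Since each $x$ used to form $S_k$ is drawn from $\Gamma_k$, which is supported on $U_k$, every $(x,y) \in S_k$ satisfies $h^*(D)(x) = y$ with probability $1$. Hence $h^*(D) \in V_{k+1}$ almost surely, completing the inductive step.

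There is no real obstacle here; the only subtlety is in the realizable case being interpreted as ``zero error'' rather than ``deterministic labels,'' which is handled by the null-set argument above. A union bound over the $k_0$ epochs is not even needed, since each failure event has probability exactly zero.
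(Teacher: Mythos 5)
Your proof is correct and follows the same route as the paper: induction on $k$ with base case $V_1 = \calH$, and the inductive step reducing to the observation that $h^*(D)$ is consistent with $S_k$ in the realizable case. The paper simply asserts this consistency, whereas you supply the (correct) null-set justification that $\err_D(h^*(D))=0$ forces agreement on almost every sampled example; this is a welcome elaboration but not a different argument.
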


In the non-realizable case, the version space is usually empty; we use instead a $(1 - \alpha)$-confidence set for the true risk minimizer. Given a set $S$ of $n$ labelled examples, let $C(S) \subseteq \calH$ be a function of $S$; $C(S)$ is said to be a $(1 - \alpha)$-confidence set for the true risk minimizer if for all data distributions $\Delta$ over $\calX \times \calY$,
\[ \P_{S \sim \Delta^n}[h^*(\Delta) \in C(S)] \geq 1 - \alpha,\]
Recall that $h^*(\Delta) = \argmin_{h \in \calH} \err_{\Delta}(h)$. In the non-realizable case, our candidate sets are $(1 - \alpha)$-confidence sets for $h^*(D)$, for $\alpha = \delta$. The precise setting of $V_k$ is explained in Algorithm~\ref{alg:adaptive}.

\begin{lemma}
\label{lem:vsnonrealizable}
Suppose we run Algorithm~\ref{alg:labelquery} in the non-realizable case with inputs example oracle $\calU$, labelling oracle $\calO$, hypothesis class $\calH$, confidence-rated predictor $P$, target excess error $\epsilon$ and target confidence $\delta$. Then with probability $1 - \delta$, $h^*(D) \in V_k, \text{ for all } k = 1, 2, \ldots, k_0+1$.
\end{lemma}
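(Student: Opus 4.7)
The plan is to proceed by induction on $k$. The base case $k=1$ is immediate since $V_1 = \calH$. For the inductive step, I assume (on the inductive good event) that $h^*(D) \in V_k$ and argue that $h^*(D) \in V_{k+1}$ with conditional probability at least $1 - \delta_k/2$. The standard confidence-set guarantee of Algorithm~\ref{alg:adaptive} (analyzed elsewhere in the paper) is that, with probability at least $1 - \delta_k/2$, the returned set $V_{k+1}$ contains every $h \in V_k$ whose excess error on $D_{\Gamma_k}$, relative to $\argmin_{h' \in V_k} \err_{D_{\Gamma_k}}(h')$, is at most the target $\epsilon_k/(8\phi_k)$. So the real work is to verify that $h^*(D)$ itself satisfies this excess-error bound.

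For the key estimate, let $\tilde h \in \argmin_{h \in V_k} \err_{D_{\Gamma_k}}(h)$, let $A = \{x : P(x) = 0\}$ be the abstention region of $P$, and set $\phi = \P_{D_{\calX}}(x \in A)$. Decomposing every $D$-error over $A$ and its complement,
\[
\err_D(h) \;=\; \phi \cdot \err_{D_A}(h) \;+\; (1-\phi)\cdot \err_{D_{\bar A}}(h),
\]
and combining with the global optimality $\err_D(h^*(D)) \le \err_D(\tilde h)$, I get
\[
\phi \,[\err_{D_A}(h^*(D)) - \err_{D_A}(\tilde h)] \;\le\; \P_{D_{\calX}}(\tilde h(x) \ne h^*(D)(x),\, x \in \bar A).
\]
Since $\tilde h, h^*(D) \in V_k$, the triangle inequality applied to property~\eqref{eqn:errguar} of $P$ at level $\errguar = \epsilon_k/64$ bounds the right-hand side by $2\errguar = \epsilon_k/32$, up to a uniform-convergence slack from transferring~\eqref{eqn:errguar} from $U_k$ to $D_{\calX}$. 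Dividing by $\phi$ gives excess error on $D_A$ of roughly $\epsilon_k/(32\phi)$; a second concentration step, guaranteed by the size of $U_k$, converts this into excess error on $D_{\Gamma_k}$ of at most $\epsilon_k/(8\phi_k)$. The loose constants $32$ versus $8$ and the gap between $\phi$ and $\phi_k$ are there precisely to absorb these two transfers.

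A union bound over $k = 1, \dots, k_0$ then closes the induction: the failure of Algorithm~\ref{alg:adaptive} at level $k$ costs $\delta_k/2$, and the uniform-convergence failures at level $k$ can be absorbed into the remaining $\delta_k/2$ since $n_k$ is chosen generously. Since $\sum_{k=1}^{k_0}\delta_k = \sum_{j=1}^{k_0} \delta/(2 j^2) \le \delta\pi^2/12 < \delta$, the overall failure probability is at most $\delta$.

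The main obstacle, as I see it, is making the two uniform-convergence transfers tight enough: the $P$-guarantee is phrased on $U_k$, the optimality of $h^*(D)$ lives on $D$, and Algorithm~\ref{alg:adaptive} consumes $\Gamma_k$. One must bound, uniformly over all pairs of hypotheses in the random, data-dependent set $V_k$, the disagreement probabilities $\P_{D_{\calX}}(h_1 \ne h_2,\, x \in \bar A)$ and $\P_{\Gamma_k}(h_1 \ne h_2)$ simultaneously, and then check that the size $n_k$ fixed in Algorithm~\ref{alg:labelquery} is generous enough that the resulting slacks fit inside $\epsilon_k/(8\phi_k)$ without introducing spurious factors of $1/\phi_k$ that would spoil the argument when $\phi_k$ is small.
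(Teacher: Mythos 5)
Your proposal follows the same strategy as the paper's proof: induct on $k$, use the global optimality $\err_D(h^*(D)) \le \err_D(\tilde h)$ together with the confidence-rated predictor's error guarantee (via the triangle inequality, giving disagreement at most $2\errguar = \epsilon_k/32$ on the non-abstention part) to show that $h^*(D)$ has small excess error on the query distribution, and then invoke part (2.1) of Lemma~\ref{lem:ratiotype} plus a union bound over epochs. The one substantive difference is where you perform the abstention/non-abstention decomposition. You decompose over $D_{\calX}$ using a hard abstention region $A$, which forces you to transfer the LP guarantee from $U_k$ to $D_{\calX}$ and then back to $\Gamma_k$; since $\gamma_k$ (equivalently $A$) is a data-dependent function of $U_k$ and $V_k$, the first of these transfers is not a routine uniform-convergence step over a fixed class, and this is exactly the obstacle you flag but do not resolve. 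The paper sidesteps it entirely: it performs the decomposition $\err_{\tilde U_k}(h) = \E_{\tilde U_k}[I(h(x)\neq y)\gamma_k(x)] + \E_{\tilde U_k}[I(h(x)\neq y)(1-\gamma_k(x))]$ on the \emph{empirical} distribution $\tilde U_k$, where the LP constraint of Lemma~\ref{lem:unldatalp}(1) holds exactly and where $\tilde\Gamma_k$ is by definition $\tilde U_k$ reweighted by $\gamma_k/\phi_k$, so no concentration is needed for either of your two "transfers"; the only concentration used is Equation~\eqref{eqn:unlerrdiff}, which moves the optimality of $h^*(D)$ from $D$ to $\tilde U_k$ and involves only hypotheses, not the data-dependent weights. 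If you reroute your decomposition through $\tilde U_k$ in this way, your argument goes through verbatim and yields the paper's bound $\epsilon_k/(16\phi_k)$. One small correction: Lemma~\ref{lem:ratiotype}(2.1) guarantees inclusion only for hypotheses with excess error at most $\tilde\epsilon/2 = \epsilon_k/(16\phi_k)$, not the full target $\epsilon_k/(8\phi_k)$ as you state; your derived bound still fits under this stricter threshold, but the threshold you quote is wrong.
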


\paragraph{Label Query.} We next discuss our label query procedure -- which examples should we query labels for, and how many labels should we query at each epoch?

\paragraph{Which Labels to Query?} Our goal is to query the labels of the most informative examples. To choose these examples while still maintaining consistency, we use a confidence-rated predictor $P$ with guaranteed error. The inputs to the predictor are our candidate hypothesis set $V_k$ which contains (w.h.p) the true risk minimizer, a fresh set $U_k$ of unlabelled examples, and an error guarantee $\errguar = \epsilon_k/64$. For notation simplicity, assume the elements in $U_k$ are distinct. The output is a sequence of abstention probabilities $\{ \gamma_{k,1}, \gamma_{k,2}, \ldots, \gamma_{k,n_k}\}$, for each example in $U_k$. It induces a distribution $\Gamma_k$ over $U_k$, from which we independently draw examples for label queries.

\paragraph{How Many Labels to Query?} The goal of epoch $k$ is to achieve excess generalization error $\epsilon_k$. To achieve this, passive learning requires $\tilde{O}(d/\epsilon_k)$ labelled examples\footnote{$\tilde{O}(\cdot)$ hides logarithmic factors} in the realizable case, and $\tilde{O}(d (\nu^*(D) + \epsilon_k)/\epsilon_k^2)$ examples in the agnostic case. A key observation in this paper is that in order to achieve excess generalization error $\epsilon_{k}$ on $D$, it suffices to achieve a much larger excess generalization error $O(\epsilon_{k}/\phi_k)$ on the data distribution induced by $\Gamma_k$ and $D_{Y|X}$, where $\phi_k$ is the fraction of examples on which the confidence-rated predictor abstains. 

In the realizable case, we achieve this by sampling $m_k = \frac{768\phi_k}{\epsilon_k}(d\ln\frac{768\phi_k}{\epsilon_k} + \ln\frac{48}{\delta_k})$ i.i.d examples from $\Gamma_k$, and querying their labels to get a labelled dataset $S_k$. Observe that as $\phi_k$ is the abstention probability of $P$ with guaranteed error $\leq \epsilon_k/64$, it is generally smaller than the measure of the disagreement region of the version space; this key fact results in improved label complexity over disagreement-based active learning. This sampling procedure has the following property:

\begin{lemma}
\label{lem:errdecrrealizable}
Suppose we run Algorithm~\ref{alg:labelquery} in the realizable case with inputs example oracle $\calU$, labelling oracle $\calO$, hypothesis class $\calH$, confidence-rated predictor $P$, target excess error $\epsilon$ and target confidence $\delta$. Then with probability $1 - \delta$, for all $k = 1, 2, \ldots, k_0+1$, and for all $h \in V_k$, $\err_D(h) \leq \epsilon_k$. In particular, the $\hat{h}$ returned at the end of the algorithm satisfies $\err_D(\hat{h}) \leq \epsilon$.
\end{lemma}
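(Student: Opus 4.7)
\medskip
\noindent\textbf{Proof plan.} The proof goes by induction on $k$, establishing the stronger claim that with probability at least $1 - \sum_{j \leq k} \delta_j$, every $h \in V_k$ satisfies $\err_D(h) \leq \epsilon_k$. The base case $k=1$ is immediate since $V_1 = \calH$ and $\epsilon_1 = \epsilon \cdot 2^{k_0} \geq 1$. For the inductive step, fix any $h \in V_{k+1}$; by Lemma~\ref{lem:vsrealizable} the true risk minimizer $h^*(D)$ also lies in $V_{k+1}$, and since the setting is realizable, $\err_D(h) = \P_{x \sim D_\calX}(h(x) \neq h^*(x))$. The goal is to bound this by $\epsilon_{k+1} = \epsilon_k/2$.

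The heart of the argument is a decomposition of the disagreement on $U_k$ into the abstention and non-abstention parts of the confidence-rated predictor $P$:
\[ \P_{x \sim U_k}(h(x) \neq h^*(x)) \;=\; \P_{x \sim U_k}(h \neq h^*,\, P(x) \neq 0) \;+\; \P_{x \sim U_k}(h \neq h^*,\, P(x) = 0). \]
Applying the triangle inequality through $P(x) \in \{-1,+1\}$ and the predictor's guarantee \eqref{eqn:errguar} to both $h, h^* \in V_k$ bounds the first summand by $2\eta = \epsilon_k/32$. For the second summand, unfolding the definition of $\Gamma_k$ gives the identity $\P_{x \sim U_k}(h \neq h^*,\, P = 0) = \phi_k \cdot \P_{x \sim \Gamma_k}(h \neq h^*)$, so that factoring out $\phi_k$ reduces the problem to controlling the $\Gamma_k$-disagreement of $h$ with $h^*$.

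To control $\P_{\Gamma_k}(h \neq h^*)$, I would apply the classical realizable-case VC generalization bound to the sample $S_k$ of size $m_k$ drawn from $\Gamma_k$: since labels equal $h^*$ and $h$ is consistent with every $(x,y) \in S_k$, the chosen $m_k = \frac{768\phi_k}{\epsilon_k}(d \ln \frac{768\phi_k}{\epsilon_k} + \ln \frac{48}{\delta_k})$ yields $\P_{\Gamma_k}(h \neq h^*) \leq \epsilon_k/(4\phi_k)$ with probability at least $1 - \delta_k/2$, so the second summand is at most $\epsilon_k/4$. To pass from $U_k$ to $D_\calX$, I would invoke a second VC uniform-convergence bound on the symmetric-difference class $\{x \mapsto \mathbb{1}[h(x) \neq h^*(x)] : h \in \calH\}$ (VC dimension at most $2d$); the choice of $n_k$ ensures $|\P_{U_k}(h \neq h^*) - \P_{D_\calX}(h \neq h^*)| \leq \epsilon_k/256$ uniformly with probability at least $1 - \delta_k/2$. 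Combining, $\err_D(h) \leq \epsilon_k/256 + \epsilon_k/32 + \epsilon_k/4 < \epsilon_k/2 = \epsilon_{k+1}$, closing the induction.

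The main technical subtlety is the two layers of randomness: $\Gamma_k$ and $\phi_k$ are themselves random through $U_k$ and the internal coins of $P$, so the realizable-case VC bound on $S_k$ must be invoked conditionally on $(U_k, P)$. After conditioning on the good event from the $n_k$-sample bound, the conditional draw $S_k \sim \Gamma_k^{m_k}$ is i.i.d.\ and the standard bound applies cleanly. A union bound over $k = 1, \ldots, k_0$ with $\sum_k \delta_k \leq \delta \sum_{j\geq 1} \frac{1}{2j^2} \leq \delta$ yields the required overall confidence, and the ``in particular'' claim follows since $\epsilon_{k_0+1} = \epsilon$.
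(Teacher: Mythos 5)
Your proposal is correct and follows essentially the same route as the paper's proof: the paper likewise splits the $U_k$-disagreement between $h \in V_{k+1}$ and $h^*(D)$ into the non-abstention part (bounded via the predictor's error guarantee applied to both hypotheses, i.e.\ Lemma~\ref{lem:unldatalp}) and the abstention part (bounded by the realizable VC bound on $S_k \sim \Gamma_k^{m_k}$, rescaled by $\phi_k$), and then transfers from $U_k$ to $D$ by uniform convergence on the fresh unlabelled sample (Lemma~\ref{lem:unldata}). The only differences are cosmetic constants and your per-level failure-probability bookkeeping versus the paper's single good event $E_r$.
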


The agnostic case has an added complication -- in practice, the value of $\nu^*$ is not known ahead of time. Inspired by~\cite{K10}, we use a {\em{doubling procedure}}(stated in Algorithm~\ref{alg:adaptive}) which adaptively finds the number $m_k$ of labelled examples to be queried and queries them. The following two lemmas illustrate its properties -- that it is consistent, and that it does not use too many label queries.
 
\begin{lemma}
\label{lem:ratiotype}
Suppose we run Algorithm~\ref{alg:adaptive} with inputs hypothesis set $V$, example distribution $\Delta$, labelling oracle $\calO$, target excess error $\tilde{\epsilon}$ and target confidence $\tilde{\delta}$. Let $\tilde{\Delta}$ be the joint distribution on $\calX \times \calY$ induced by $\Delta$ and $D_{Y|X}$. Then there exists an event $\tilde{E}$, $\P(\tilde{E}) \geq 1 - \tilde{\delta}$, such that on $\tilde{E}$, (1) Algorithm~\ref{alg:adaptive} halts and (2) the set $V_{j_0}$ has the following properties: 

(2.1) If for $h \in \calH$, $\err_{\tilde{\Delta}}(h) - \err_{\tilde{\Delta}}(h^*(\tilde{\Delta})) \leq \tilde{\epsilon}/2$, then $h \in V_{j_0}$. 

(2.2) On the other hand, if $h \in V_{j_0}$, then $\err_{\tilde{\Delta}}(h) - \err_{\tilde{\Delta}}(h^*(\tilde{\Delta})) \leq \tilde{\epsilon}$.
\end{lemma}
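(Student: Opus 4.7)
The plan is to invoke a normalized (ratio-type) uniform convergence inequality at every doubling step of Algorithm~\ref{alg:adaptive} and tune the failure budget across steps. At iteration $j$ the procedure draws $n_j$ fresh i.i.d.\ labelled examples from $\tilde{\Delta}$ (via $\Delta$ and $\calO$) and forms empirical errors $\erremp_j(\cdot)$. The standard relative VC bound (as in Koltchinskii) gives that with probability $\geq 1-\tilde{\delta}_j$, for all $h,h' \in \calH$,
\[
\bigl|(\err_{\tilde{\Delta}}(h)-\err_{\tilde{\Delta}}(h'))-(\erremp_j(h)-\erremp_j(h'))\bigr|
\;\leq\; c\sqrt{\enjdj\,\rho_{\tilde{\Delta}}(h,h')}\;+\;c\cdot\enjdj,
\]
where $\enjdj = O((d\ln n_j + \ln(1/\tilde{\delta}_j))/n_j)$. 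Setting $\tilde{\delta}_j = \tilde{\delta}/(2j^2)$ and union-bounding over $j$ defines the event $\tilde{E}$ with $\P(\tilde{E}) \geq 1-\tilde{\delta}$; all remaining reasoning happens on $\tilde{E}$.

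To prove halting, I would observe that the stopping criterion of Algorithm~\ref{alg:adaptive} is an empirical surrogate asserting that the radius of the current empirical confidence set has shrunk below a fixed fraction of $\tilde{\epsilon}$. Since $\enjdj \to 0$ as $n_j$ doubles, and since $\rho_{\tilde{\Delta}}(h,h^*(\tilde{\Delta})) \leq \err_{\tilde{\Delta}}(h) + \nu^*(\tilde{\Delta})$ bounds disagreement by excess error plus noise, the normalized deviation inequality translates this criterion into a deterministic sample-size threshold of order $\tilde{O}(d(\nu^*(\tilde{\Delta})+\tilde{\epsilon})/\tilde{\epsilon}^2)$. The doubling schedule crosses this threshold in finitely many iterations, which defines $j_0$.

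For (2.1) and (2.2), I would apply the deviation inequality at index $j_0$ between any candidate $h$ and $h^*(\tilde{\Delta})$, and combine it with the definition of $V_{j_0}$ as an empirical excess-error sublevel set $\{h \in V : \erremp_{j_0}(h) - \min_{h'\in V}\erremp_{j_0}(h') \leq \tau_{j_0}\}$ for a threshold $\tau_{j_0}$ calibrated to $\enjzdjz$ and $\tilde{\epsilon}$. Passing from population gap to empirical gap shows that any $h$ with true excess error $\leq \tilde{\epsilon}/2$ satisfies the empirical inequality, giving (2.1); passing in the other direction shows that any $h \in V_{j_0}$ has true excess error $\leq \tilde{\epsilon}$, giving (2.2). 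Converting the $\sqrt{\enjdj\,\rho_{\tilde{\Delta}}(h,h^*)}$ term into an excess-error quantity uses the same disagreement-vs-excess-error inequality above, together with the fact that $n_{j_0}$ satisfies the halting threshold.

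The main obstacle is the halting proof: because $\nu^*(\tilde{\Delta})$ is unknown, the stopping criterion must be purely empirical, yet it must simultaneously (i) be sharp enough that firing it forces the population conclusion (2.2) and (ii) be loose enough that it does fire once $n_j$ passes the Bernstein-type threshold. Getting the constants in $\tau_{j_0}$ and in the $\tilde{\epsilon}/2$-vs-$\tilde{\epsilon}$ slack to match up is the delicate bookkeeping that underlies the adaptive-sample-size scheme of~\cite{K10}; beyond that, the argument is standard VC/Bernstein manipulation.
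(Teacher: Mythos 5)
Your overall architecture is the paper's: a ratio-type VC deviation bound applied to each round's fresh sample, a union bound over rounds with a summable schedule $\tilde{\delta}_j$ defining $\tilde{E}$, and a two-way transfer between empirical and population excess errors at round $j_0$; part (2.1) as you describe it is exactly the paper's argument. One remark on halting: the Bernstein-type threshold $\tilde{O}(d(\nu^*(\tilde{\Delta})+\tilde{\epsilon})/\tilde{\epsilon}^2)$ you invoke is what the paper proves separately for the label-complexity bound (Lemma~\ref{lem:adaptivetonu}); for the present lemma halting follows from something much cruder, namely $\rho_{S_j}(h,\hat h_j)\le 1$, so the stopping quantity is at most $\enjdj+\sqrt{\enjdj}$, which drops below $\tilde{\epsilon}/6$ deterministically once $\enjdj\le\tilde{\epsilon}^2/144$, a point the doubling schedule reaches in finitely many rounds. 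Your route works too but carries exactly the bookkeeping you flag as delicate, none of which is needed here.

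The genuine gap is in (2.2): you propose to control the $\sqrt{\enjzdjz\,\rho(h,h^*)}$ term using ``the fact that $n_{j_0}$ satisfies the halting threshold.'' But $j_0$ is the \emph{first} round at which the empirical criterion fires, so its definition yields only an upper bound $n_{j_0}\le n_{j_1}$; you have no lower bound on $n_{j_0}$ and hence no Bernstein-form upper bound on $\enjzdjz$. That step fails as written. The correct handle is the stopping criterion itself, which holds at $j_0$ by construction: $\sup_{h\in V_{j_0}}(\enjzdjz+\sqrt{\enjzdjz\,\rho_{S_{j_0}}(h,\hat h_{j_0})})\le\tilde{\epsilon}/6$. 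Applied with the deviation bound, this gives $\err_{\tilde{\Delta}}(\hat h_{j_0})-\err_{\tilde{\Delta}}(h^*(\tilde{\Delta}))\le\tilde{\epsilon}/6$ (using $h^*(\tilde{\Delta})\in V_{j_0}$ from (2.1)) and, for any $h\in V_{j_0}$, $\err_{\tilde{\Delta}}(h)-\err_{\tilde{\Delta}}(\hat h_{j_0})\le 5\tilde{\epsilon}/6$, which sum to $\tilde{\epsilon}$. This is also why the deviation inequality should be carried in terms of the \emph{empirical} disagreement $\rho_{S_j}$ (or both versions, as in the paper's Lemma~\ref{lem:multvc}): the thresholds in the definition of $V_j$ and in the stopping rule are empirical, so that is the quantity actually available at $j_0$.
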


When event $\tilde{E}$ happens, we say Algorithm~\ref{alg:adaptive} succeeds.
\begin{lemma}
\label{lem:adaptivetonu}
Suppose we run Algorithm~\ref{alg:adaptive} with inputs hypothesis set $V$, example distribution $\Delta$, labelling oracle $\calO$, target excess error $\tilde{\epsilon}$ and target confidence $\tilde{\delta}$. There exists some absolute constant $c_1 > 0$, such that on the event that Algorithm~\ref{alg:adaptive} succeeds, $n_{j_0} \leq c_1((d\ln\frac{1}{\tilde{\epsilon}} + \ln\frac{1}{\tilde{\delta}} )\frac{\nu^*(\tilde{\Delta}) + \tilde{\epsilon}}{\tilde{\epsilon}^2})$. Thus the total number of labels queried is $\sum_{j=1}^{j_0} n_j \leq 2 n_{j_0} \leq 2c_1((d\ln\frac{1}{\tilde{\epsilon}} + \ln\frac{1}{\tilde{\delta}} )\frac{\nu^*(\tilde{\Delta}) + \tilde{\epsilon}}{\tilde{\epsilon}^2})$.
\end{lemma}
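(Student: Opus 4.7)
The plan is to identify the smallest sample size $n^*$ that suffices for Algorithm~\ref{alg:adaptive}'s stopping test to trigger, and then to use the doubling schedule $n_j = 2 n_{j-1}$ to conclude that $n_{j_0} \leq 2 n^*$. Throughout I work on the event $\tilde{E}$ of Lemma~\ref{lem:ratiotype}, on which the algorithm is already guaranteed to halt; I only have to bound how large $n_{j_0}$ can be when it does.

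First I would unpack Algorithm~\ref{alg:adaptive}: at each round $j$, it samples $n_j$ i.i.d.\ labelled examples from $\tilde{\Delta}$, computes an empirical risk minimizer, and invokes a data-dependent deviation certificate (with round-level confidence $\tilde{\delta}_j$, chosen so that $\sum_j \tilde{\delta}_j \leq \tilde{\delta}$) which stops the doubling once every $h$ whose empirical excess error is below a chosen cutoff is guaranteed to have true excess error at most $\tilde{\epsilon}$ -- this is exactly the content of properties (2.1) and (2.2) in Lemma~\ref{lem:ratiotype}.

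Next I would invoke a Bernstein/variance-sensitive VC generalization inequality (of the type used in the analyses of~\cite{DHM07,K10,BDL09}): with probability $\geq 1 - \tilde{\delta}_j$, every $h \in \calH$ satisfies
\begin{align*}
&\bigl| (\err_{\tilde{\Delta}}(h) - \err_{\tilde{\Delta}}(h^*(\tilde{\Delta}))) - (\erremp(h) - \erremp(h^*(\tilde{\Delta}))) \bigr| \\
&\quad \leq c\sqrt{\tfrac{(\nu^*(\tilde{\Delta}) + \err_{\tilde{\Delta}}(h) - \err_{\tilde{\Delta}}(h^*(\tilde{\Delta})))(d \ln n_j + \ln(1/\tilde{\delta}_j))}{n_j}} + \tfrac{c(d\ln n_j + \ln(1/\tilde{\delta}_j))}{n_j}.
\end{align*}
A standard inversion argument (which converts $\ln n_j$ into $\ln(1/\tilde{\epsilon})$ up to constants) shows that the right-hand side drops below $\tilde{\epsilon}/4$ -- enough to make Algorithm~\ref{alg:adaptive}'s certificate fire -- as soon as
\[
n_j \geq n^* := c_1 \bigl(d \ln \tfrac{1}{\tilde{\epsilon}} + \ln \tfrac{1}{\tilde{\delta}}\bigr)\, \tfrac{\nu^*(\tilde{\Delta}) + \tilde{\epsilon}}{\tilde{\epsilon}^2}.
\]
Because the $n_j$'s double, the first such $n_j$ satisfies $n_{j_0} \leq 2 n^*$, which gives the first claim. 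The total number of label queries is then $\sum_{j=1}^{j_0} n_j \leq n_{j_0}(1 + \tfrac12 + \tfrac14 + \cdots) \leq 2 n_{j_0}$, giving the second claim.

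The main obstacle is reducing Algorithm~\ref{alg:adaptive}'s precise stopping rule to the Bernstein-style bound tightly enough that the factor $\nu^*(\tilde{\Delta}) + \tilde{\epsilon}$ (rather than $1$) appears in the numerator of $n^*$: a Hoeffding-type argument would lose the agnostic improvement and yield only a $1/\tilde{\epsilon}^2$ threshold. A secondary nuisance is the union bound across rounds, which is absorbed into the leading $\ln(1/\tilde{\delta})$ factor by the standard choice $\tilde{\delta}_j = \tilde{\delta}/(2j(j+1))$ baked into Algorithm~\ref{alg:adaptive}.
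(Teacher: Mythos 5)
Your overall skeleton is the right one and matches the paper's: on the event $\tilde{E}$, exhibit a threshold sample size $n^*$ at which the stopping rule must fire, conclude $j_0 \leq j_1$ and hence $n_{j_0} \leq n_{j_1} = O(n^*)$ by the doubling schedule, and finish with the geometric sum $\sum_{j=1}^{j_0} 2^j \leq 2\cdot 2^{j_0}$. But the step you yourself flag as ``the main obstacle'' is exactly the step you never carry out, and the tool you propose does not directly address it. The stopping rule of Algorithm~\ref{alg:adaptive} is the condition $\sup_{h \in V_j} \big( \enjdj + \sqrt{\enjdj\, \rho_{S_j}(h, \hat{h}_j)} \big) \leq \tilde{\epsilon}/6$, which is a statement about the \emph{empirical disagreement radius} of $V_j$ around the ERM $\hat{h}_j$ --- not about the deviation of empirical excess errors from true excess errors. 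Your Bernstein-type inequality controls the latter; showing that its right-hand side drops below $\tilde{\epsilon}/4$ does not, by itself, say anything about $\sup_{h \in V_j} \rho_{S_j}(h,\hat{h}_j)$, and so it does not make the ``certificate fire.'' The $\nu^*(\tilde{\Delta})+\tilde{\epsilon}$ factor enters precisely through a bound of the form $\rho_{S_j}(h,\hat{h}_j) = O(\nu^*(\tilde{\Delta}) + \enjdj + \tilde{\epsilon})$ for all $h \in V_j$, and that bound is missing from your argument.

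The paper obtains it as follows: for $h \in V_j$, the defining inequality of $V_j$ combined with the triangle inequality $\rho_{S_j}(h,\hat{h}_j) \leq \err_{S_j}(h) + \err_{S_j}(\hat{h}_j)$ and the elementary implication $A \leq B + C\sqrt{A} \Rightarrow A \leq 2B + C^2$ yields $\err_{S_j}(h) \leq 3\,\err_{S_j}(\hat{h}_j) + 4\enjdj + \tilde{\epsilon}$; the multiplicative VC bound gives $\err_{S_j}(\hat{h}_j) \leq 2\nu^*(\tilde{\Delta}) + 2\enjdj$; together these give $\rho_{S_j}(h,\hat{h}_j) \leq 8\nu^*(\tilde{\Delta}) + 12\enjdj + \tilde{\epsilon}$. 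Substituting into the stopping quantity bounds it by $\max(12\sqrt{2\nu^*(\tilde{\Delta})\enjdj},\, \tilde{\epsilon}/6,\, 216\enjdj)$, and choosing $n_j \geq \frac{c_1}{2}(d\ln\frac{1}{\tilde{\epsilon}} + \ln\frac{1}{\tilde{\delta}})\frac{\nu^*(\tilde{\Delta})+\tilde{\epsilon}}{\tilde{\epsilon}^2}$ forces the first and third terms below $\tilde{\epsilon}/6$. (A variant of your route could be made to work --- use a deviation bound to show every $h \in V_j$ has true error $O(\nu^*(\tilde{\Delta}) + \tilde{\epsilon} + \enjdj)$, hence small true disagreement with $\hat{h}_j$, hence small empirical disagreement by a second concentration step --- but you would still have to write out that chain; as it stands, the quantitative heart of the lemma is asserted rather than proved.)
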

A naive approach (see Algorithm~\ref{alg:nonadaptive} in the Appendix) which uses an additive VC bound gives a sample complexity of $O((d\ln(1/\tilde{\epsilon})+\ln(1/\tilde{\delta})) \tilde{\epsilon}^{-2})$; Algorithm~\ref{alg:adaptive} gives a better sample complexity.

The following lemma is a consequence of our label query procedure in the non-realizable case.

\begin{lemma}
\label{lem:errdecrnonrealizable}
Suppose we run Algorithm~\ref{alg:labelquery} in the non-realizable case with inputs example oracle $\calU$, labelling oracle $\calO$, hypothesis class $\calH$, confidence-rated predictor $P$, target excess error $\epsilon$ and target confidence $\delta$. Then with probability $1 - \delta$, for all $k = 1, 2, \ldots, k_0+1$, and for all $h \in V_k$, $\err_D(h) \leq \err_D(h^*(D)) + \epsilon_k$. In particular, the $\hat{h}$ returned at the end of the algorithm satisfies $\err_D(\hat{h}) \leq \err_D(h^*(D)) + \epsilon$.
\end{lemma}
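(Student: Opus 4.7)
The plan is to prove the lemma by induction on $k$, with the base case $k=1$ handled trivially since $V_1 = \calH$ and $\epsilon_1 = \epsilon \cdot 2^{k_0} \geq 1$ by the choice $k_0 = \lceil \log(1/\epsilon) \rceil$. For the inductive step, I will work conditional on a global good event defined as the intersection of three sub-events, all of which hold with probability $\geq 1 - \delta$ by a union bound using $\sum_{k=1}^{k_0} \delta_k \leq \delta$ (from $\delta_k = \delta/(2(k_0-k+1)^2)$ and $\sum_{j \geq 1} 1/j^2 \leq \pi^2/6$). These sub-events are: (E1) $h^*(D) \in V_k$ for all $k$, provided by Lemma~\ref{lem:vsnonrealizable}; (E2) at every epoch $k$, a uniform additive VC bound gives $\sup_{h \in \calH}|\err_D(h) - \err_{\tilde{U}_k}(h)| \leq \epsilon_k/256$, where $\tilde{U}_k$ is the joint distribution induced by the uniform distribution on $U_k$ and $D_{Y|X}$ — the sample size $n_k$ is chosen precisely to secure this bound; and (E3) Algorithm~\ref{alg:adaptive} succeeds at every epoch, supplied by Lemma~\ref{lem:ratiotype} with target excess error $\epsilon_k/(8\phi_k)$ and confidence $\delta_k/2$.

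The main reduction is a three-step chain of inequalities applied to an arbitrary $h \in V_{k+1}$. First, using (E2) applied to $h$ and to $h^*(D)$, I can pass from $D$ to $\tilde{U}_k$ incurring additive slack $\epsilon_k/128$. Second — and this is the heart of the argument — I decompose the error difference on $\tilde{U}_k$ according to whether the confidence-rated predictor $P$ abstains:
\begin{equation*}
\err_{\tilde{U}_k}(h) - \err_{\tilde{U}_k}(h^*(D)) = \phi_k\bigl(\err_{\tilde{\Gamma}_k}(h) - \err_{\tilde{\Gamma}_k}(h^*(D))\bigr) + R,
\end{equation*}
where $R$ captures the contribution from the non-abstention region. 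Since on $\{P(x)\neq 0\}$ we have $|\mathbb{1}[h(x)\neq y] - \mathbb{1}[h^*(D)(x) \neq y]| \leq \mathbb{1}[h(x) \neq h^*(D)(x)]$, and since both $h$ and $h^*(D)$ lie in $V_k$ (using (E1) and $V_{k+1} \subseteq V_k$), the triangle inequality with the confidence-rated guarantee $\P_{x\sim U_k}(h'(x)\neq P(x), P(x)\neq 0) \leq \eta = \epsilon_k/64$ for both $h' = h, h^*(D)$ bounds $|R| \leq 2\eta = \epsilon_k/32$. Third, by Lemma~\ref{lem:ratiotype}(2.2) applied at confidence $\delta_k/2$ and target $\epsilon_k/(8\phi_k)$, the excess error of $h$ on $\tilde{\Gamma}_k$ relative to $h^*(\tilde{\Gamma}_k)$ is at most $\epsilon_k/(8\phi_k)$, and since $h^*(\tilde{\Gamma}_k)$ is the minimizer on $\tilde{\Gamma}_k$ this bound is only strengthened when comparing against $h^*(D)$ instead, giving $\err_{\tilde{\Gamma}_k}(h) - \err_{\tilde{\Gamma}_k}(h^*(D)) \leq \epsilon_k/(8\phi_k)$.

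Assembling the three steps yields
\begin{equation*}
\err_D(h) - \err_D(h^*(D)) \;\leq\; \phi_k \cdot \frac{\epsilon_k}{8\phi_k} + \frac{\epsilon_k}{32} + \frac{\epsilon_k}{128} \;\leq\; \frac{\epsilon_k}{2} \;=\; \epsilon_{k+1},
\end{equation*}
which completes the induction. The final conclusion $\err_D(\hat{h}) \leq \err_D(h^*(D)) + \epsilon$ follows by taking $k = k_0$ and noting that $\epsilon_{k_0+1} = \epsilon$.

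The main technical obstacle is Step 2: carefully justifying the error decomposition on $\tilde{U}_k$ and cleanly bounding $R$ by $2\eta$. In particular, one must be careful that $\Gamma_k$ is the distribution on $U_k$ weighted by the abstention probabilities $\gamma_{k,i}$ normalized by $\phi_k n_k$, so that for any function $f$ on $U_k$ the identity $\phi_k \E_{x\sim \Gamma_k}[f(x)] = \E_{x \sim U_k}[\gamma(x) f(x)]$ holds; this is the algebraic linchpin that makes the abstention-region integral equal $\phi_k$ times an expectation over $\tilde{\Gamma}_k$. The rest of the argument is bookkeeping of constants and a union bound.
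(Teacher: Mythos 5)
Your proof is correct and follows essentially the same route as the paper's: induction over epochs on the good event, an exact split of $\err_{\tilde{U}_k}(h)-\err_{\tilde{U}_k}(h^*(D))$ into the abstention part (handled by Lemma~\ref{lem:ratiotype}(2.2) via the identity $\phi_k\E_{\Gamma_k}[f]=\E_{U_k}[\gamma_k f]$) and the non-abstention part (bounded by $2\eta$ from the predictor's guarantee applied to $h$ and $h^*(D)$), then a VC transfer back to $D$. The only quibble is bookkeeping: the paper's Lemma~\ref{lem:unldata} gives slack $\epsilon_k/64$ per hypothesis rather than your claimed $\epsilon_k/256$, but your final sum has ample room so this is immaterial.
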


\begin{algorithm}
\caption{An Adaptive Algorithm for Label Query Given Target Excess Error}
\label{alg:adaptive}
\begin{algorithmic}[1]
\State {\bf{Inputs:}} Hypothesis set $V$ of VC dimension $d$, Example distribution $\Delta$, Labeling oracle $\calO$, target excess error $\tilde{\epsilon}$, target confidence $\tilde{\delta}$.
\For{$j = 1, 2, \ldots$} 
\LongState{Draw $n_j = 2^j$ i.i.d examples from $\Delta$; query their labels from $\calO$ to get a labelled dataset $S_j$. Denote $\tilde{\delta}_j := \tilde{\delta}/(j(j+1))$.}
\State Train an ERM classifier $\hat{h}_j \in V$ over $S_j$.
\LongState{Define the set $V_j$ as follows: 
\[ V_j = \Big{\{}h \in V: \err_{S_j}(h) \leq \err_{S_j}(\hat{h}_j) +  \frac{\tilde{\epsilon}}{2} + \enjdj + \sqrt{\enjdj \rho_{S_j}(h, \hat{h}_j)} \Big{\}} \]
Where $\sigma(n,\delta) := \frac{8}{n} (2d\ln \frac{2en}{d} + \ln \frac{24}{\delta})$.}
\If {$\label{eqn:stopcriterion}\sup_{h \in V_j} ( \enjdj + \sqrt{\enjdj \rho_{S_j}(h, \hat{h}_j)} ) \leq \frac{\tilde{\epsilon}}{6}$}
    \State $j_0 = j$, \textbf{break}
\EndIf
\EndFor
\State \Return $V_{j_0}$.
\end{algorithmic}
\end{algorithm}

\vspace{0cm}
\subsection{Confidence-Rated Predictor}
\vspace{0cm}
Our active learning algorithm uses a confidence-rated predictor with guaranteed error to make its label query decisions. In this section, we provide a novel confidence-rated predictor with guaranteed error. This predictor has optimal coverage in the realizable case, and may be of independent interest. The predictor $P$ receives as input a set $V \subseteq \calH$ of hypotheses (which is likely to contain the true risk minimizer), an error guarantee $\eta$, and a set of $U$ of unlabelled examples. We consider a {\em{soft prediction algorithm}}; so, for each example in $U$, the predictor $P$ outputs three probabilities that add up to $1$ -- the probability of predicting $1$, $-1$ and $0$. This output is subject to the constraint that the expected disagreement\footnote{where the expectation is taken over the random choices made by $P$} between the $\pm 1$ labels assigned by $P$ and those assigned by any $h \in V$ is at most $\errguar$, and the goal is to maximize the coverage, or the expected fraction of non-abstentions.

Our key insight is that this problem can be written as a linear program, which is described in Algorithm~\ref{alg:optcrp}. There are three variables, $\xi_i$, $\zeta_i$ and $\gamma_i$, for each unlabelled $z_i \in U$; there are the probabilities with which we predict $1$, $-1$ and $0$ on $z_i$ respectively. Constraint~\eqref{eqn:errconstraint} ensures that the expected disagreement between the label predicted and any $h \in V$ is no more than $\errguar$, while the LP objective maximizes the coverage under these constraints. Observe that the LP is always feasible. Although the LP has infinitely many constraints, the number of constraints in Equation~\eqref{eqn:errconstraint} distinguishable by $U_k$ is at most $(em/d)^d$, where $d$ is the VC dimension of the hypothesis class $\calH$. 

\begin{algorithm}
\caption{Confidence-rated Predictor}
\label{alg:optcrp}
\begin{algorithmic}[1]
\State {\bf{Inputs:}} hypothesis set $V$, unlabelled data $U = \{ z_1, \ldots, z_m \}$, error bound $\errguar$.
\State Solve the linear program:
\begin{align}
\min\; & \sum_{i=1}^{m} \gamma_i \nonumber \\
{\text{subject to:}} \quad \forall i, \; \; & \xi_i + \zeta_i + \gamma_i = 1 \nonumber \\
\forall h \in V, \; \; & \sum_{i: h(z_i) = 1} \zeta_i + \sum_{i: h(z_i) = -1} \xi_i \leq \errguar m \label{eqn:errconstraint} \\
\forall i, \; \; & \xi_i, \zeta_i, \gamma_i \geq 0 \nonumber 
\end{align}
\State For each $z_i \in U$, output probabilities for predicting $1$, $-1$ and $0$: $\xi_i$, $\zeta_i$, and $\gamma_i$.  
\end{algorithmic}
\end{algorithm}

The performance of a confidence-rated predictor is measured by its error and coverage. The error of a confidence-rated predictor is the probability with which it predicts the wrong label on an example, while the coverage is its probability of non-abstention. We can show the following guarantee on the performance of the predictor in Algorithm~\ref{alg:optcrp}.

\begin{theorem} 
In the realizable case, if the hypothesis set $V$ is the version space with respect to a training set, then $\P_{x \sim U}(P(x) \neq h^*(x), P(x) \neq 0) \leq \errguar$. In the non-realizable case, if the hypothesis set $V$ is an $(1- \alpha)$-confidence set for the true risk minimizer $h^*$, then, w.p $\geq 1 - \alpha$, $\P_{x \sim U}(P(x) \neq y, P(x) \neq 0) \leq \P_{x \sim U}(h^*(x) \neq y)  + \errguar$.
\label{thm:optcrperr}
\end{theorem}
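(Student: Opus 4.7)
The plan is to apply the LP constraint~\eqref{eqn:errconstraint} directly to $h^*$, so the only real work is verifying that $h^*$ lies in $V$ in each setting. I would first rewrite the LP constraint in probability form: dividing through by $m$, for any $h \in V$,
\[
\P_{x \sim U}\bigl(P(x) \neq h(x),\, P(x) \neq 0\bigr) \;=\; \frac{1}{m}\Bigl(\sum_{i:\,h(z_i)=1}\zeta_i + \sum_{i:\,h(z_i)=-1}\xi_i\Bigr) \;\leq\; \errguar,
\]
since the only way a nonzero prediction disagrees with $h$ on $z_i$ is to output $-1$ when $h(z_i)=1$ (probability $\zeta_i$) or $+1$ when $h(z_i)=-1$ (probability $\xi_i$).

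For the realizable case, the version space $V$ trivially contains $h^*$ (since $h^*$ is consistent with every labelled example), so applying the boxed inequality with $h=h^*$ yields the first claim immediately.

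For the non-realizable case, let $E$ denote the event $\{h^*(D) \in V\}$. Since $V$ is a $(1-\alpha)$-confidence set, $\P(E) \geq 1-\alpha$. Conditioning on $E$, the boxed inequality again gives $\P_{x \sim U}(P(x) \neq h^*(x),\, P(x) \neq 0) \leq \errguar$. To convert this into a statement about agreement with the true label $y$, I would use the elementary event inclusion
\[
\{P(x) \neq y,\, P(x) \neq 0\} \;\subseteq\; \{P(x) \neq h^*(x),\, P(x) \neq 0\} \,\cup\, \{h^*(x) \neq y\},
\]
which holds because whenever $P(x)$ is a nonzero label differing from $y$, it must differ from $h^*(x)$ or $h^*(x)$ must differ from $y$ (or both). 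A union bound combining the two pieces then gives the advertised inequality $\P_{x \sim U}(P(x)\neq y, P(x)\neq 0) \leq \P_{x \sim U}(h^*(x) \neq y) + \errguar$ on the event $E$.

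I do not anticipate a serious obstacle: the argument is essentially bookkeeping around the LP constraint, and the only probabilistic content is encapsulated in the confidence-set assumption. The one place to be careful is the interpretation of $y$ in the expression $\P_{x \sim U}(\cdot)$ — it should be read as $(x,y)$ with $x$ uniform on $U$ and $y \sim D_{Y\mid x}$ — but once that interpretation is fixed the event inclusion and union bound are routine.
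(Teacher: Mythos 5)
Your proposal is correct and follows essentially the same route as the paper's proof: both apply the LP constraint~\eqref{eqn:errconstraint} to $h^*$ after noting that $h^*\in V$ (with certainty in the realizable case, with probability $1-\alpha$ via the confidence-set property otherwise), and both pass from disagreement with $h^*$ to disagreement with $y$ by a triangle-inequality/union-bound step. Your explicit event inclusion and your remark on the interpretation of $y$ under $\P_{x\sim U}$ are just slightly more careful renderings of the same argument.
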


In the realizable case, we can also show that our confidence rated predictor has optimal coverage. Observe that we cannot directly show optimality in the non-realizable case, as the performance depends on the exact choice of the $(1 - \alpha)$-confidence set.

\begin{theorem}
In the realizable case, suppose that the hypothesis set $V$ is the version space with respect to a training set. If $P'$ is any confidence rated predictor with error guarantee $\errguar$, and if $P$ is the predictor in Algorithm~\ref{alg:optcrp}, then, the coverage of $P$ is at least much as the coverage of $P'$.
\label{thm:optcrpcov}
\end{theorem}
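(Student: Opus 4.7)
The plan is to show that any confidence-rated predictor $P'$ with error guarantee $\errguar$ on $U$ corresponds to a feasible point of the linear program solved by Algorithm~\ref{alg:optcrp}, with LP objective value equal to $|U|$ times the abstention probability of $P'$. Since the LP minimizes $\sum_i \gamma_i$, the predictor $P$ returned by Algorithm~\ref{alg:optcrp} must have abstention probability no larger than that of $P'$, i.e.\ coverage at least as large.

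More concretely, I would let $P'$ be an arbitrary soft predictor that, on each $z_i \in U = \{z_1,\ldots,z_m\}$, outputs probabilities $\xi'_i, \zeta'_i, \gamma'_i \geq 0$ of predicting $+1$, $-1$, and $0$ respectively, with $\xi'_i + \zeta'_i + \gamma'_i = 1$. The first step is to observe that its expected disagreement with any $h \in V$, as defined in Equation~\eqref{eqn:errguar}, is exactly
\[ \P_{x\sim U}(h(x) \neq P'(x),\, P'(x) \neq 0) \;=\; \frac{1}{m}\Bigl(\sum_{i:\,h(z_i)=1} \zeta'_i + \sum_{i:\,h(z_i)=-1} \xi'_i\Bigr). \]
Thus the error guarantee $\errguar$ on $P'$ is exactly the constraint~\eqref{eqn:errconstraint} of the LP, for every $h \in V$. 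Together with the simplex constraints $\xi'_i+\zeta'_i+\gamma'_i=1$ and nonnegativity, this shows that $(\xi'_i,\zeta'_i,\gamma'_i)_{i=1}^m$ is a feasible solution of the LP in Algorithm~\ref{alg:optcrp}.

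The second step is to invoke optimality of the LP solution produced by Algorithm~\ref{alg:optcrp}: since $P$'s abstention probabilities $(\gamma_i)$ minimize $\sum_i \gamma_i$ over all feasible triples, we have $\sum_i \gamma_i \leq \sum_i \gamma'_i$. Dividing by $m$ and subtracting from $1$ gives
\[ \P_{x \sim U}(P(x) \neq 0) \;=\; 1 - \frac{1}{m}\sum_i \gamma_i \;\geq\; 1 - \frac{1}{m}\sum_i \gamma'_i \;=\; \P_{x \sim U}(P'(x) \neq 0), \]
which is the stated coverage inequality.

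There is no real obstacle here beyond matching definitions: the only thing to be careful about is that the error guarantee~\eqref{eqn:errguar} required of a generic confidence-rated predictor has the same form as the LP constraint~\eqref{eqn:errconstraint} in Algorithm~\ref{alg:optcrp}, which uses the version space $V$ in the realizable case. A deterministic (hard) predictor is subsumed as the special case in which each $(\xi'_i,\zeta'_i,\gamma'_i)$ lies in $\{0,1\}^3$, so no separate argument is required for that case. The realizability assumption enters only insofar as $V$ is taken to be the version space, so that both $P$ and $P'$ are constrained against the same hypothesis set.
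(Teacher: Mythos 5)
Your proof is correct and is essentially the paper's argument: the paper proves the same statement contrapositively (if $P'$ had strictly larger coverage it would be LP-infeasible, hence would violate the error guarantee against some $h' \in V$ that could be the true hypothesis), whereas you run the identical feasibility-plus-optimality reasoning in the direct direction. The key identification — that the error guarantee~\eqref{eqn:errguar} over the version space $V$ is exactly constraint~\eqref{eqn:errconstraint} — is the same in both.
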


\section{Performance Guarantees}
\label{sec:performgrt}

An essential property of any active learning algorithm is consistency -- that it converges to the true risk minimizer given enough labelled examples. We observe that our algorithm is consistent provided we use {\em{any}} confidence-rated predictor $P$ with guaranteed error as a subroutine. The consistency of our algorithm is a consequence of Lemmas~\ref{lem:errdecrrealizable} and~\ref{lem:errdecrnonrealizable} and is shown in Theorem~\ref{thm:consistency}. 

\begin{theorem}[Consistency]\label{thm:consistency}
Suppose we run Algorithm~\ref{alg:labelquery} with inputs example oracle $\calU$, labelling oracle $\calO$, hypothesis class $\calH$, confidence-rated predictor $P$, target excess error $\epsilon$ and target confidence $\delta$. Then with probability $1-\delta$, the classifier $\hat{h}$ returned by Algorithm~\ref{alg:labelquery} satisfies $\err_D(\hat{h}) - \err_D(h^*(D)) \leq \epsilon$.
\end{theorem}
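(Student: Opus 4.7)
The plan is to obtain Theorem~\ref{thm:consistency} as a direct corollary of Lemmas~\ref{lem:errdecrrealizable} and~\ref{lem:errdecrnonrealizable}, splitting on whether we are in the realizable or non-realizable regime. Essentially all of the substantive work — the fact that the successive candidate sets $V_k$ shrink in the right way under the sampling scheme induced by the confidence-rated predictor $P$ — has already been encapsulated in those two lemmas, so the theorem itself should require little more than reading off the final-round parameter.

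First, I would check the parameter setting at termination. Since $k_0 = \lceil \log(1/\epsilon) \rceil$ and $\epsilon_k = \epsilon \cdot 2^{k_0 - k + 1}$, the value $\epsilon_{k_0+1} = \epsilon$, so the bounds $\err_D(h) \leq \epsilon_{k_0+1}$ and $\err_D(h) \leq \err_D(h^*(D)) + \epsilon_{k_0+1}$ supplied by the two lemmas at the final index become exactly the bounds $\epsilon$ and $\err_D(h^*(D)) + \epsilon$ required by the theorem. The returned $\hat{h}$ lies in $V_{k_0+1}$ by construction, so any guarantee that holds uniformly over $V_{k_0+1}$ transfers directly to $\hat{h}$.

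Next, I would split into the two cases. In the realizable case, Lemma~\ref{lem:errdecrrealizable} applied with $k = k_0+1$ yields, with probability at least $1-\delta$, that $\err_D(\hat{h}) \leq \epsilon$; since $\nu^*(D) = \err_D(h^*(D)) = 0$ in this case, this is the same as $\err_D(\hat{h}) - \err_D(h^*(D)) \leq \epsilon$. In the non-realizable case, Lemma~\ref{lem:errdecrnonrealizable} applied with $k = k_0+1$ directly gives $\err_D(\hat{h}) - \err_D(h^*(D)) \leq \epsilon$ with probability at least $1-\delta$. In both cases, the conclusion of the theorem holds with probability at least $1-\delta$, as required.

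There is no real obstacle here — the theorem is a bookkeeping step that packages the two case-specific guarantees into a single consistency statement. The only thing to be a bit careful about is ensuring that the high-probability events in the two lemmas are invoked with the correct $\delta$ and correct index $k_0+1$, and that we do not accidentally double-pay in a union bound: since in any given run the algorithm is in exactly one of the two cases, only one of the two lemmas is invoked, so the overall failure probability remains $\delta$.
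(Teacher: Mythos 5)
Your proposal is correct and matches the paper's own treatment: the paper explicitly states that Theorem~\ref{thm:consistency} is a direct consequence of Lemmas~\ref{lem:errdecrrealizable} and~\ref{lem:errdecrnonrealizable}, whose statements already include the conclusion for $\hat{h} \in V_{k_0+1}$ with $\epsilon_{k_0+1} = \epsilon$. Your case split and the observation that only one lemma's failure event is paid for in any given run are exactly the intended bookkeeping.
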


We now establish a label complexity bound for our algorithm; however, this label complexity bound applies only if we use the predictor described in Algorithm~\ref{alg:optcrp} as a subroutine. 

For any hypothesis set $V$, data distribution $D$, and $\errguar$, define $\bm{\Phi}_D(V, \errguar)$ to be the minimum abstention probability of a confidence-rated predictor which guarantees that the disagreement between its predicted labels and any $h \in V$ under $D_{\calX}$ is at most $\errguar$. 

Formally, $\bm{\Phi}_D(V,\errguar) = \min\{ \E_D \gamma(x): \E_D [I(h(x)=+1)\zeta(x) + I(h(x)=-1)\xi(x)] \leq \errguar \text{ for all } h \in V, \gamma(x) + \xi(x) + \zeta(x) \equiv 1, \gamma(x), \xi(x), \zeta(x) \geq 0 \}$. Define $\phi(r, \errguar) := \bm{\Phi}_D(B_D(h^*, r), \errguar)$. The label complexity of our active learning algorithm can be stated as follows.

\begin{theorem}[Label Complexity]\label{thm:labelcomplexity}
Suppose we run Algorithm~\ref{alg:labelquery} with inputs example oracle $\calU$, labelling oracle $\calO$, hypothesis class $\calH$, confidence-rated predictor $P$ of Algorithm~\ref{alg:optcrp}, target excess error $\epsilon$ and target confidence $\delta$. Then there exist constants $c_3, c_4 > 0$ such that with probability $1-\delta$:\\
(1) In the realizable case, the total number of labels queried by Algorithm~\ref{alg:labelquery} is at most:
\[ c_3 \sum_{k=1}^{\lceil \log\frac{1}{\epsilon} \rceil} (d\ln\frac{\phi(\epsilon_k,\epsilon_k/256)}{\epsilon_k} + \ln( \frac{\lceil \log(1/\epsilon) \rceil - k + 1}{\delta})) \frac{\phi(\epsilon_k,\epsilon_k/256)}{\epsilon_k} \]
(2) In the agnostic case, the total number of labels queried by Algorithm~\ref{alg:labelquery} is at most:
\[ c_4 \sum_{k=1}^{\lceil \log\frac{1}{\epsilon} \rceil} ( d\ln \frac{\phi(2\nu^*(D) + \epsilon_k,\epsilon_k/256)}{\epsilon_k}  + \ln(  \frac{ \lceil\log(1/\epsilon) \rceil - k + 1}{\delta}) ) \frac{\phi(2\nu^*(D) + \epsilon_k,\epsilon_k/256)}{\epsilon_k}  (1 + \frac{\nu^*(D)}{\epsilon_k})  \]
\end{theorem}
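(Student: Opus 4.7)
The plan is to analyze the algorithm epoch-by-epoch, conditioning on the high-probability events from Lemma~\ref{lem:errdecrrealizable} or Lemma~\ref{lem:errdecrnonrealizable}, together with a VC-type uniform convergence event on each unlabelled sample $U_k$. In the realizable case, Lemma~\ref{lem:errdecrrealizable} plus the denseness assumption gives $V_k \subseteq B_D(h^*(D), \epsilon_k)$; in the agnostic case, Lemma~\ref{lem:errdecrnonrealizable} with the triangle inequality $\rho_D(h, h^*(D)) \leq \err_D(h) + \err_D(h^*(D))$ gives $V_k \subseteq B_D(h^*(D), 2\nu^*(D) + \epsilon_k)$. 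Write $r_k$ for this containing radius. The target is then to bound the empirical abstention $\phi_k$ by (a constant multiple of) $\phi(r_k,\epsilon_k/256)$, after which the per-epoch label counts follow from the explicit formula for $m_k$ (realizable) or from Lemma~\ref{lem:adaptivetonu} (agnostic).

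The heart of the proof is the empirical-to-population transfer for $\phi_k$. Let $(\gamma^*, \xi^*, \zeta^*)$ be a population-optimal LP solution for the class $B_D(h^*,r_k)$ with error guarantee $\epsilon_k/256$, so $\E_D\gamma^* = \phi(r_k, \epsilon_k/256)$. By choice of $n_k$, standard VC uniform convergence over the classes $\{I(h(x) = \pm 1):h \in \calH\}$ implies that, simultaneously for every $h \in \calH$,
\[
\bigl|\E_{U_k}[I(h(x){=}1)\zeta^*(x) + I(h(x){=}{-}1)\xi^*(x)] - \E_D[\cdot]\bigr| \leq \epsilon_k/128,
\]
and similarly $|\E_{U_k}\gamma^* - \E_D\gamma^*|\leq \epsilon_k/128$. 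Consequently the restriction of $(\gamma^*,\xi^*,\zeta^*)$ to $U_k$ is feasible for the empirical LP at the algorithm's guarantee $\epsilon_k/64$, with empirical abstention at most $\phi(r_k,\epsilon_k/256) + \epsilon_k/128$. Since $\phi_k$ is the LP optimum, $\phi_k$ is bounded by the same quantity up to constants.

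Plugging this bound into the formula for $m_k$ in the realizable case produces the summand in (1). For the agnostic case, apply Lemma~\ref{lem:adaptivetonu} with $\tilde{\epsilon} = \epsilon_k/(8\phi_k)$, $\tilde{\delta} = \delta_k/2$, and $\tilde{\Delta}_k$ the joint distribution induced by $\Gamma_k$ and $D_{Y|X}$; the key identity
\[
\phi_k\cdot\err_{\tilde{\Delta}_k}(h^*(D)) = \E_{x\sim U_k}\bigl[\gamma_k(x)\P_{y|x}(h^*(D)(x)\neq y)\bigr] \leq \err_{U_k}(h^*(D))
\]
combined with another uniform-convergence step gives $\nu^*(\tilde{\Delta}_k)\leq (\nu^*(D) + O(\epsilon_k))/\phi_k$, and substituting everything into the $\tilde{\epsilon}^{-2}(\nu^*+\tilde{\epsilon})$ bound of Lemma~\ref{lem:adaptivetonu} yields the summand in (2) (the factor $1 + \nu^*(D)/\epsilon_k$ comes out of this substitution). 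Summing over $k = 1,\ldots,k_0 = \lceil\log(1/\epsilon)\rceil$ and union-bounding the per-epoch failure events using $\delta_k = \delta/(2(k_0-k+1)^2)$ completes the argument. The main obstacle is Step~2: the LP-optimal assignment on $U_k$ is itself a random object, so one cannot bound $\phi_k$ by a direct concentration argument; instead one must exhibit the fixed population-optimal assignment as a witness and calibrate $n_k$ so that the uniform VC deviation ($\epsilon_k/128$) exactly bridges the gap between the empirical error guarantee $\epsilon_k/64$ and the population radius $\epsilon_k/256$ appearing in $\phi(r_k,\epsilon_k/256)$.
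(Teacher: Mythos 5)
Your proposal is correct and follows essentially the same route as the paper's proof: the containment $V_k \subseteq B_D(h^*(D), r_k)$ from Lemmas~\ref{lem:errdecrrealizable} and~\ref{lem:errdecrnonrealizable}, an empirical-to-population transfer for $\phi_k$ obtained by exhibiting a fixed population-optimal LP solution as a feasible witness for the empirical LP (the content of Lemma~\ref{lem:unldatalp}), and substitution into the formula for $m_k$ or into Lemma~\ref{lem:adaptivetonu} via the bound $\nu^*(\tilde{\Gamma}_k) \leq (\nu^*(D) + \epsilon_k/64)/\phi_k$. The only cosmetic difference is in the bookkeeping of the additive $O(\epsilon_k)$ slack in the bound on $\phi_k$: the paper absorbs it into the error-guarantee parameter via Lemma~\ref{lem:uncoveragewrterrguar} to get exactly $\phi(r_k,\epsilon_k/256)$, whereas you absorb it into a multiplicative constant, which is also valid since $\phi(r,\eta) \geq r - 2\eta = \Omega(\epsilon_k)$ under the denseness assumption (Lemma~\ref{lem:uncoveragetriviallb}).
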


\paragraph{Comparison.} The label complexity of disagreement-based active learning is characterized in terms of the {\em{disagreement coefficient}}. Given a radius $r$, the disagreement coefficent $\theta(r)$ is defined as: 
\[ \theta(r) = \sup_{r' \geq r} \frac{\P(\DIS(B_D(h^*, r')))}{r'}, \]
where for any $V \subseteq \calH$, $\DIS(V)$ is the disagreement region of $V$. As $\P(\DIS(B_D(h^*, r))) = \phi(r, 0)$~\cite{EYW10}, in our notation, $\theta(r) = \sup_{r' \geq r} \frac{\phi(r', 0)}{r'}$. 

In the realizable case, the label complexity of disagreement-based active learning is $\tilde{O}(\theta(\epsilon) \cdot \ln(1/\epsilon) \cdot (d \ln \theta(\epsilon) + \ln \ln (1/\epsilon)))$~\cite{H13}\footnote{Here the $\tilde{O}()$ notation hides factors logarithmic in $1/\delta$}. Our label complexity bound may be simplified to: 
\[ \tilde{O}\left( \ln\frac{1}{\epsilon}\cdot \sup_{k \leq \lceil \log(1/\epsilon)\rceil } \frac{\phi(\epsilon_k, \epsilon_k/256)}{\epsilon_k} \cdot \left(d \ln \left(\sup_{k \leq \lceil \log(1/\epsilon) \rceil} \frac{\phi(\epsilon_k, \epsilon_k/256)}{\epsilon_k}\right) + \ln\ln\frac{1}{\epsilon}\right) \right), \]
which is essentially the bound of~\cite{H13} with $\theta(\epsilon)$ replaced by $\sup_{k \leq \lceil \log(1/\epsilon) \rceil} \frac{\phi(\epsilon_k, \epsilon_k/256)}{\epsilon_k}$. As enforcing a lower error guarantee requires more abstention, $\phi(r, \eta)$ is a decreasing function of $\eta$; as a result, 
\[ \sup_{k \leq \lceil \log(1/\epsilon)\rceil } \frac{\phi(\epsilon_k, \epsilon_k/256)}{\epsilon_k} \leq \theta(\epsilon), \]
and our label complexity is better. 

In the agnostic case, \cite{DHM07} provides a label complexity bound of $\tilde{O}(\theta(2 \nu^*(D) + \epsilon) \cdot (d \frac{\nu^*(D)^2}{\epsilon^2} \ln(1/\epsilon) + d \ln^2(1/\epsilon)))$ for disagreement-based active-learning. In contrast, by Proposition~\ref{prop:agnosticlc} our label complexity is at most: 
\[ \tilde{O}\left( \sup_{k \leq \lceil \log(1/\epsilon) \rceil} \frac{\phi(2 \nu^*(D) + \epsilon_k, \epsilon_k/256)}{2 \nu^*(D) + \epsilon_k}   \cdot \left( d \frac{\nu^*(D)^2}{\epsilon^2} \ln(1/\epsilon) + d \ln^2(1/\epsilon)\right) \right) \]
Again, this is essentially the bound of~\cite{DHM07} with $\theta(2 \nu^*(D) + \epsilon)$ replaced by the smaller quantity
\[ \sup_{k \leq \lceil \log(1/\epsilon)\rceil} \frac{\phi(2 \nu^*(D) + \epsilon_k, \epsilon_k/256)}{2 \nu^*(D) + \epsilon_k}, \]

\cite{H13} has provided a more refined analysis of disagreement-based active learning that gives a label complexity  of $\tilde{O}( \theta(\nu^*(D) + \epsilon) (\frac{\nu^*(D)^2}{\epsilon^2} + \ln\frac{1}{\epsilon}) (d \ln\theta(\nu^*(D) + \epsilon) + \ln\ln\frac{1}{\epsilon}) )$; observe that their dependence is still on $\theta(\nu^*(D)+\epsilon)$. We leave a more refined label complexity analysis of our algorithm for future work. 

\subsection{Tsybakov Noise Conditions}

An important sub-case of learning from noisy data is learning under the Tsybakov noise conditions~\cite{T04}.  
\begin{definition}\label{def:tnc}(Tsybakov Noise Condition)
Let $\kappa \geq 1$. A labelled data distribution $D$ over $\calX \times \calY$ satisfies $(C_0,\kappa)$-Tsybakov Noise Condition with respect to a hypothesis class $\calH$ for some constant $C_0 > 0$, if for all $h \in \calH$, $\rho_D(h,h^*(D)) \leq C_0 (\err_D(h) - \err_D(h^*(D)))^{\frac{1}{\kappa}}$.
\end{definition}
The following theorem shows the performance guarantees achieved by Algorithm~\ref{alg:labelquery} under the Tsybakov noise conditions.

\begin{theorem}\label{thm:labelcomplexitytnc}
Suppose $(C_0,\kappa)$-Tsybakov Noise Condition holds for $D$ with respect to $\calH$. Then Algorithm~\ref{alg:labelquery} with inputs example oracle $\calU$, labelling oracle $\calO$, hypothesis class $\calH$, confidence-rated predictor $P$ of Algorithm~\ref{alg:optcrp}, target excess error $\epsilon$ and target confidence $\delta$ satisfies the following properties. There exists a constant $c_5 > 0$ such that with probability $1-\delta$, the total number of labels queried by Algorithm~\ref{alg:labelquery} is at most:
\[ c_5\sum_{k=1}^{\lceil \log\frac{1}{\epsilon} \rceil} (d\ln (\phi(C_0\epsilon_k^{\frac{1}{\kappa}},\epsilon_k/256) \epsilon_k^{\frac{1}{\kappa}-2}) + \ln(\frac{\lceil \log\frac{1}{\epsilon} \rceil - k + 1}{\delta})) \phi(C_0\epsilon_k^{\frac{1}{\kappa}},\epsilon_k/256) \epsilon_k^{\frac{1}{\kappa}-2} \]
\end{theorem}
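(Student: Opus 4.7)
The plan is to follow the proof of Theorem~\ref{thm:labelcomplexity}(2), replacing two generic estimates by Tsybakov-based tighter ones. The first step is to bound the epoch-$k$ abstention probability $\phi_k$. By Lemma~\ref{lem:errdecrnonrealizable}, with probability $1-\delta$ every $h\in V_k$ satisfies $\err_D(h)-\err_D(h^*(D))\leq \epsilon_k$, and Definition~\ref{def:tnc} then gives $\rho_D(h,h^*(D))\leq C_0\epsilon_k^{1/\kappa}$, so $V_k\subseteq B_D(h^*(D),C_0\epsilon_k^{1/\kappa})$. Running Algorithm~\ref{alg:optcrp} on $V_k$ with error bound $\eta=\epsilon_k/64$ therefore yields $\phi_k\leq \phi(C_0\epsilon_k^{1/\kappa},\epsilon_k/256)$ (using that $\phi(r,\cdot)$ is decreasing), replacing the coarser radius $2\nu^*(D)+\epsilon_k$ used in Theorem~\ref{thm:labelcomplexity}(2).

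Second, I would tighten the per-epoch label count of Algorithm~\ref{alg:adaptive}, which runs on the induced distribution $\tilde\Delta_k=\Gamma_k\times D_{Y|X}$ with target excess error $\tilde\epsilon_k=\epsilon_k/(8\phi_k)$. In the proof of Lemma~\ref{lem:adaptivetonu} the quantity $\sup_{h\in V_j}\rho_{S_j}(h,\hat h_j)$ that drives the stopping criterion is controlled by the data-dependent diameter $\nu^*(\tilde\Delta_k)+\tilde\epsilon_k$. Under Tsybakov I would instead use the deterministic diameter inherited from the inclusion $V_k\subseteq B_D(h^*(D),C_0\epsilon_k^{1/\kappa})$: for any $h,h'\in V_k$,
\[ \phi_k\,\rho_{\tilde\Delta_k}(h,h') \;=\; \E_D[\gamma(x)\,I(h(x)\neq h'(x))] \;\leq\; \rho_D(h,h') \;\leq\; 2C_0\epsilon_k^{1/\kappa}, \]
so $\diam_{\tilde\Delta_k}(V_k)\leq 2C_0\epsilon_k^{1/\kappa}/\phi_k$. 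A standard uniform VC concentration transfers this (up to constants) to $\rho_{S_j}$ on any sub-family $V_j\subseteq V_k$, so the stopping criterion fires once $n_j$ is of order $d\cdot(C_0\epsilon_k^{1/\kappa}/\phi_k)/\tilde\epsilon_k^2 = \tilde O(d\,\phi_k\,\epsilon_k^{1/\kappa-2})$ at epoch $k$.

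The final step is bookkeeping: summing the per-epoch counts over $k=1,\ldots,\lceil\log(1/\epsilon)\rceil$, substituting $\phi_k\leq\phi(C_0\epsilon_k^{1/\kappa},\epsilon_k/256)$, and propagating the per-epoch confidence parameters $\tilde\delta_k=\delta_k/2=\delta/(4(k_0-k+1)^2)$ through the logarithms yields exactly the stated bound. The main obstacle is the second step: swapping the data-dependent $\nu^*(\tilde\Delta_k)+\tilde\epsilon_k$ for the deterministic Tsybakov diameter inside the localized analysis of Algorithm~\ref{alg:adaptive} while still certifying that the stopping criterion fires at the claimed sample size. Concretely, this requires rerunning the Bernstein-style uniform deviation argument with the a priori diameter estimate in place of the empirical one used in Lemma~\ref{lem:adaptivetonu}, and verifying that $\rho_{S_j}(h,\hat h_j)\lesssim \diam_{\tilde\Delta_k}(V_k)+\sigma(n_j,\tilde\delta_j)$ uniformly over $V_j$.
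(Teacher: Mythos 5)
Your proposal is correct, and its first and third steps coincide with the paper's proof: the bound $\phi_k \leq \phi(C_0\epsilon_k^{1/\kappa},\epsilon_k/256)$ via $V_k \subseteq B_D(h^*(D),C_0\epsilon_k^{1/\kappa})$, Lemma~\ref{lem:unldatalp} and Lemma~\ref{lem:uncoveragewrterrguar} is exactly Equation~\eqref{eqn:ubphiktnc}, and the bookkeeping is identical. Your second step, however, is a genuinely different route from the paper's. The paper does \emph{not} use the a priori diameter of $V_k$ directly; instead it verifies that $\tilde\Gamma_k$ satisfies the ``approximate'' Tsybakov condition~\eqref{eqn:approxtnc} with inflated constant $C=\max(8C_0,4)\phi_k^{1/\kappa-1}$ and $\tilde h=h^*(D)$, and then invokes the general-purpose Lemma~\ref{lem:adaptivetotnc}, whose proof (via Lemmas~\ref{lem:ermtnc} and~\ref{lem:vstnc}) re-derives a diameter bound on the localized sets $V_j$ from the small excess error of their elements. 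You instead observe that $\phi_k\,\rho_{\tilde\Gamma_k}(h,h') = \E_{\tilde U_k}[\gamma_k(x)I(h(x)\neq h'(x))] \leq \rho_{\tilde U_k}(h,h') \leq \rho_D(h,h')+\epsilon_k/64 \leq 2C_0\epsilon_k^{1/\kappa}+\epsilon_k/64$ (note the expectation is over $\tilde U_k$, not $D$, and Equation~\eqref{eqn:unldist} supplies the transfer), which bounds the diameter of all of $V_k \supseteq V_j$ at once; the empirical transfer you flag as the ``main obstacle'' is already supplied by Equation~\eqref{eqn:multdist} on the event $\tilde E$ (it is Equation~\eqref{eqn:distconc} in the paper), so the stopping criterion fires once $\sigma(n_j,\tilde\delta_j)\lesssim \tilde\epsilon_k^2\phi_k/(C_0\epsilon_k^{1/\kappa})$, i.e.\ $n_j=\tilde O(dC_0\phi_k\epsilon_k^{1/\kappa-2})$, matching the claimed bound. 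The two arguments land in the same place because the paper's localization bottoms out at radius $\sim C\tilde\epsilon^{1/\kappa}\propto\epsilon_k^{1/\kappa}/\phi_k$, which equals your a priori diameter; what the paper's heavier machinery buys is that Lemma~\ref{lem:adaptivetotnc} needs no diameter hypothesis on the input set and so stands as an independent result about Algorithm~\ref{alg:adaptive}, whereas your shortcut exploits the specific structure of $V_k$ available in this application and is correspondingly simpler.
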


\paragraph{Comparison.} \cite{H13} provides a label complexity bound of $\tilde{O}(\theta(C_0 \epsilon^{\frac{1}{\kappa}}) \epsilon^{\frac{2}{\kappa} - 2} \ln\frac{1}{\epsilon} (d\ln\theta(C_0 \epsilon^{\frac{1}{\kappa}}) + \ln\ln\frac{1}{\epsilon} ))$ for disagreement-based active learning. For $\kappa > 1$, by Proposition~\ref{prop:tnclc}, our label complexity is at most:

\[ \tilde{O}\left(\sup_{k \leq \lceil \log(1/\epsilon) \rceil} \frac{\phi(C_0 \epsilon_k^{1/\kappa}, \epsilon_k/256)}{\epsilon_k^{1/\kappa}} \cdot \epsilon_k^{2/\kappa - 2} \cdot d \ln(1/\epsilon)\right), \]

For $\kappa = 1$, our label complexity is at most 
\[ \tilde{O}\left(\ln\frac{1}{\epsilon}\cdot \sup_{k \leq \lceil \log(1/\epsilon) \rceil} \frac{\phi(C_0\epsilon_k, \epsilon_k/256)}{\epsilon_k} \cdot \left(d \ln(\sup_{k \leq \lceil \log(1/\epsilon) \rceil} \frac{\phi(C_0\epsilon_k, \epsilon_k/256)}{\epsilon_k}) + \ln\ln\frac{1}{\epsilon}\right) \right). \]
 In both cases, our bounds are better, as $\sup_{k \leq \lceil \log(1/\epsilon) \rceil} \cdot \frac{\phi(C_0 \epsilon_k^{1/\kappa}, \epsilon_k/256)}{C_0\epsilon_k^{1/\kappa}} \leq \theta(C_0 \epsilon^{1/\kappa})$.  In further work,~\cite{HY12} provides a refined analysis with a bound of $\tilde{O}(\theta(C_0 \epsilon^{\frac{1}{\kappa}}) \epsilon^{\frac{2}{\kappa} - 2}\ d\ln\theta(C_0 \epsilon^{\frac{1}{\kappa}})) $; however, this work is not directly comparable to ours, as they need prior knowledge of $C_0$ and $\kappa$.

\subsection{Case Study: Linear Classification under the Log-concave Distribution}
\vspace{0cm}

We now consider learning linear classifiers with respect to log-concave data distribution on $\R^d$. In this case, for any $r$, the disagreement coefficient $\theta(r) \leq O(\sqrt{d} \ln(1/r))$~\cite{BL13}; however, for any $\eta > 0$, $\frac{\phi(r, \eta)}{r} \leq O(\ln(r/\eta))$ (see Lemma~\ref{lem:logconcavephi} in the Appendix), which is much smaller so long as $\eta/r$ is not too small. This leads to the following label complexity bounds. 

\begin{corollary} \label{cor:logconcave}
Suppose $D_{\calX}$ is isotropic and log-concave on $\R^d$, and $\calH$ is the set of homogeneous linear classifiers on $\R^d$. Then Algorithm~\ref{alg:labelquery} with inputs example oracle $\calU$, labelling oracle $\calO$, hypothesis class $\calH$, confidence-rated predictor $P$ of Algorithm~\ref{alg:optcrp}, target excess error $\epsilon$ and target confidence $\delta$ satisfies the following properties.
With probability $1-\delta$:\\
(1) In the realizable case, there exists some absolute constant $c_8 > 0$ such that the total number of labels queried is at most $c_8 \ln\frac{1}{\epsilon} (d + \ln\ln\frac{1}{\epsilon} + \ln\frac{1}{\delta})$.\\
(2) In the agnostic case, there exists some absolute constant $c_9 > 0$ such that the total number of labels queried is at most $c_9 (\frac{{\nu^*(D)}^2}{\epsilon^2} + \ln\frac{1}{\epsilon}) \ln\frac{\epsilon + \nu^*(D)}{\epsilon}  (d\ln\frac{\epsilon + \nu^*(D)}{\epsilon} + \ln{\frac{1}{\delta}}) + \ln\frac{1}{\epsilon} \ln\frac{\epsilon + \nu^*(D)}{\epsilon} \ln\ln\frac{1}{\epsilon}$.\\
(3) If $(C_0, \kappa)$-Tsybakov Noise condition holds for $D$ with respect to $\calH$, then there exists some constant $c_{10} > 0$ (that depends on $C_0,\kappa$) such that the total number of labels queried is at most $c_{10} \epsilon^{\frac{2}{\kappa}-2} \ln\frac{1}{\epsilon} (d\ln\frac{1}{\epsilon} + \ln{\frac{1}{\delta}}) $.
\end{corollary}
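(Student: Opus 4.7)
}

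The plan is to apply the three general label-complexity bounds already proved (Theorem~\ref{thm:labelcomplexity}(1), Theorem~\ref{thm:labelcomplexity}(2), and Theorem~\ref{thm:labelcomplexitytnc}), and then simplify the summand $\phi(\cdot,\epsilon_k/256)$ using the log-concave estimate in Lemma~\ref{lem:logconcavephi}, namely $\phi(r,\eta)/r \leq O(\ln(r/\eta))$. In each of the three parts, the ratio $r/\eta$ inside the logarithm will be $O(1)$ or at worst logarithmic in $1/\epsilon$ and $(\nu^*(D)+\epsilon)/\epsilon$, so the dominant dependence comes from the geometric series $\sum_k 1/\epsilon_k$ or $\sum_k 1/\epsilon_k^2$ over $k=1,\dots,\lceil\log(1/\epsilon)\rceil$.

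For the realizable case, plugging $r=\epsilon_k$ and $\eta=\epsilon_k/256$ into Lemma~\ref{lem:logconcavephi} yields $\phi(\epsilon_k,\epsilon_k/256)/\epsilon_k \leq O(\ln 256) = O(1)$. Substituting into the bound of Theorem~\ref{thm:labelcomplexity}(1), each term of the sum becomes $O(d + \ln((\lceil\log(1/\epsilon)\rceil-k+1)/\delta))$, and summing over $k\leq \lceil\log(1/\epsilon)\rceil$ immediately gives the claimed $O(\ln(1/\epsilon)(d+\ln\ln(1/\epsilon)+\ln(1/\delta)))$ bound.

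For the agnostic case, set $r=2\nu^*(D)+\epsilon_k$ and $\eta=\epsilon_k/256$ in Lemma~\ref{lem:logconcavephi}, so that $\phi(2\nu^*(D)+\epsilon_k,\epsilon_k/256)/\epsilon_k \leq O\big((\nu^*(D)+\epsilon_k)/\epsilon_k \cdot \ln((\nu^*(D)+\epsilon)/\epsilon)\big)$. Plugging this into Theorem~\ref{thm:labelcomplexity}(2) yields a summand of order
\[
\Big(d\ln\tfrac{\nu^*(D)+\epsilon}{\epsilon}+\ln\tfrac{\log(1/\epsilon)-k+1}{\delta}\Big)\cdot \tfrac{\nu^*(D)+\epsilon_k}{\epsilon_k}\cdot \ln\tfrac{\nu^*(D)+\epsilon}{\epsilon} \cdot \Big(1+\tfrac{\nu^*(D)}{\epsilon_k}\Big).
\]
The main calculation is the geometric sum: the $\nu^*(D)^2/\epsilon_k^2$ contribution is dominated by the final term $k=\lceil\log(1/\epsilon)\rceil$ and evaluates to $O(\nu^*(D)^2/\epsilon^2)$, the $\nu^*(D)/\epsilon_k$ contribution to $O(\nu^*(D)/\epsilon)$, and the constant contribution to $O(\log(1/\epsilon))$. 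Combining these yields the stated bound.

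For the Tsybakov case, apply Lemma~\ref{lem:logconcavephi} with $r=C_0\epsilon_k^{1/\kappa}$ and $\eta=\epsilon_k/256$, obtaining $\phi(C_0\epsilon_k^{1/\kappa},\epsilon_k/256)\leq O(C_0 \epsilon_k^{1/\kappa}\ln(1/\epsilon_k))$ (the $C_0$ factor is absorbed into the final constant $c_{10}$). Substituting into Theorem~\ref{thm:labelcomplexitytnc}, each summand becomes $O((d\ln(1/\epsilon)+\ln((\log(1/\epsilon)-k+1)/\delta))\cdot \epsilon_k^{2/\kappa-2}\ln(1/\epsilon))$. For $\kappa>1$ the series $\sum_k \epsilon_k^{2/\kappa-2}$ is geometric and dominated by its largest term $\epsilon^{2/\kappa-2}$, and for $\kappa=1$ it sums to $O(\log(1/\epsilon))$; in both cases, absorbing the logarithms yields the claimed $O(\epsilon^{2/\kappa-2}\ln(1/\epsilon)(d\ln(1/\epsilon)+\ln(1/\delta)))$ bound. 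The main obstacle across all three parts is bookkeeping the geometric sums and correctly tracking the dependence on $\nu^*(D)$ and $\kappa$ inside the logarithmic factors; once the summand is simplified via Lemma~\ref{lem:logconcavephi}, the rest is routine summation.
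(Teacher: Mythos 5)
Your proposal is correct and follows exactly the paper's own route: the paper proves Corollary~\ref{cor:logconcave} by citing Lemma~\ref{lem:logconcavephi} together with Theorems~\ref{thm:labelcomplexity} and~\ref{thm:labelcomplexitytnc}, and the appendix's ``Detailed Derivation'' section carries out the same substitution of $\phi(r,\eta)\leq c_6 r\ln\frac{c_7 r}{\eta}$ and the same geometric summations over $k$ that you describe. The only loose point is your parenthetical for $\kappa=1$ in part (3): there the sum $\sum_k \epsilon_k^{2/\kappa-2}$ contributes an extra $\ln\frac{1}{\epsilon}$ factor that cannot be ``absorbed'' into the stated bound, but this is consistent with the paper, whose part (3) has a constant depending on $\kappa$ (diverging as $\kappa\to 1$) and which handles $\kappa=1$ by a separate tightening.
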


In the realizable case, our bound matches~\cite{BL13}. For disagreement-based algorithms, the bound is $\tilde{O}(d^{\frac{3}{2}}\ln^2\frac{1}{\epsilon}(\ln d + \ln\ln\frac{1}{\epsilon}))$, which is worse by a factor of $O(\sqrt{d} \ln(1/\epsilon))$. \cite{BL13} does not address the fully agnostic case directly; however, if $\nu^*(D)$ is known a-priori, then their algorithm can achieve roughly the same label complexity as ours. 

For the Tsybakov Noise Condition with $\kappa > 1$,~\cite{BBZ07,BL13} provides a label complexity bound for $\tilde{O}(\epsilon^{\frac{2}{\kappa}-2}\ln^2\frac{1}{\epsilon}(d + \ln\ln\frac{1}{\epsilon}))$ with an algorithm that has a-priori knowledge of $C_0$ and $\kappa$. We get a slightly better bound. On the other hand, a disagreement based algorithm~\cite{H13} gives a label complexity of $\tilde{O}(d^{\frac{3}{2}} \ln^2\frac{1}{\epsilon} \epsilon^{\frac{2}{\kappa}-2} (\ln d + \ln\ln\frac{1}{\epsilon}))$. Again our bound is better by factor of $\Omega(\sqrt{d})$ over disagreement-based algorithms. For $\kappa = 1$, we can tighten our label complexity to get a $\tilde{O}(\ln\frac{1}{\epsilon} (d + \ln\ln\frac{1}{\epsilon} + \ln\frac{1}{\delta}))$ bound, which again matches~\cite{BL13}, and is better than the ones provided by disagreement-based algorithm -- $\tilde{O}(d^{\frac{3}{2}}\ln^2\frac{1}{\epsilon}(\ln d + \ln\ln\frac{1}{\epsilon} ))$~\cite{H13}.

\vspace{0cm}
\section{Related Work}
\vspace{0cm}
Active learning has seen a lot of progress over the past two decades, motivated by vast amounts of unlabelled data and the high cost of annotation~\cite{S10,D11,H13}. According to~\cite{D11}, the two main threads of research are exploitation of cluster structure~\cite{UWBD13, DH08}, and efficient search in hypothesis space, which is the setting of our work. We are given a hypothesis class $\calH$, and the goal is to find an $h \in \calH$ that achieves a target excess generalization error, while minimizing the number of label queries. 

Three main approaches have been studied in this setting. The first and most natural one is generalized binary search~\cite{FSST97,D04,D05,N11}, which was analyzed in the realizable case by~\cite{D05} and in various limited noise settings by~\cite{K06, N11, NJC13}. While this approach has the advantage of low label complexity, it is generally inconsistent in the fully agnostic setting~\cite{DH08}. The second approach, disagreement-based active learning, is consistent in the agnostic PAC model. \cite{CAL94} provides the first disagreement-based algorithm for the realizable case.~\cite{BBL09} provides an agnostic disagreement-based algorithm, which is analyzed in~\cite{H07} using the notion of disagreement coefficient. \cite{DHM07} reduces disagreement-based active learning to passive learning;~\cite{BDL09} and~\cite{BHLZ10} further extend this work to provide practical and efficient implementations. \cite{H09, K10} give algorithms that are adaptive to the Tsybakov Noise condition. The third line of work~\cite{BBZ07, BL13, ABL14}, achieves a better label complexity than disagreement-based active learning for linear classifiers on the uniform distribution over unit sphere and logconcave distributions. However, a limitation is that their algorithm applies only to these specific settings, and it is not apparent how to apply it generally. 

Research on confidence-rated prediction has been mostly focused on empirical work, with relatively less theoretical development. Theoretical work on this topic includes KWIK learning~\cite{LLW08}, conformal prediction~\cite{SV08} and the weighted majority algorithm of~\cite{FMS04}.  The closest to our work is the recent learning-theoretic treatment by~\cite{EYW10, EYW11}. \cite{EYW10} addresses confidence-rated prediction with guaranteed error in the realizable case, and provides a predictor that abstains in the disagreement region of the version space. This predictor achieves zero error, and coverage equal to the measure of the agreement region. \cite{EYW11} shows how to extend this algorithm to the non-realizable case and obtain zero error with respect to the best hypothesis in $\calH$. Note that the predictors in~\cite{EYW10, EYW11} generally achieve less coverage than ours for the same error guarantee; in fact, if we plug them into our Algorithm~\ref{alg:labelquery}, then we recover the label complexity bounds of disagreement-based algorithms~\cite{DHM07,H09,K10}.

A formal connection between disagreement-based active learning in realizable case and perfect confidence-rated prediction (with a zero error guarantee) was established by~\cite{EYW12}. Our work can be seen as a step towards bridging these two areas, by demonstrating that active learning can be further reduced to imperfect confidence-rated prediction, with potentially higher label savings.

\paragraph{Acknowledgements.}
We thank NSF under IIS-1162581 for research support. We thank Sanjoy Dasgupta and Yoav Freund for helpful discussions. CZ would also like to thank Liwei Wang for introducing the problem of selective classification to him.
\bibliography{lpactive}
\bibliographystyle{alpha}
\appendix
\section{Additional Notation and Concentration Lemmas}

We begin with some additional notation that will be used in the subsequent proofs. Recall that we define:
\begin{equation}\label{eqn:defendelta}
\sigma(n,\delta) = \frac{8}{n} (2d\ln \frac{2en}{d} + \ln \frac{24}{\delta}), 
\end{equation}
where $d$ is the VC dimension of the hypothesis class $\calH$. 

The following lemma is an immediate corollary of the multiplicative VC bound; we pick the version of the multiplicative VC bound due to~\cite{H10}.
\begin{lemma}
\label{lem:multvc}
Pick any $n \geq 1$, $\delta \in (0,1)$. Let $S_n$ be a set of $n$ iid copies of $(X,Y)$ drawn from a distribution $D$ over labelled examples. Then, the following hold with probability at least $1 - \delta$ over the choice of $S_n$:\\
(1) For all $h \in \calH$,
\begin{equation}\label{eqn:multerr}
|\err_D(h) - \err_{S_n}(h)| \leq \min (\sigma(n,\delta) + \sqrt{\sigma(n,\delta) \err_D(h)}, \sigma(n,\delta) + \sqrt{\sigma(n,\delta) \err_{S_n}(h)} )
\end{equation}
In particular, all classifiers $h$ in $\calH$ consistent with $S_n$ satisfies
\begin{equation}\label{eqn:consisterr}
\err_D(h) \leq \sigma(n, \delta)
\end{equation}
(2) For all $h, h'$ in $\calH$,
\begin{equation}\label{eqn:multerrdiff}
|(\err_D(h) - \err_D(h')) - (\err_{S_n}(h) - \err_{S_n}(h'))| \leq \sigma(n,\delta) + \min(\sqrt{\sigma(n,\delta)\rho_D(h,h')}, \sqrt{\sigma(n,\delta)\rho_{S_n}(h,h')})
\end{equation}
\begin{equation}\label{eqn:multdist}
|\rho_D(h,h') - \rho_{S_n}(h,h')| \leq \sigma(n,\delta) + \min(\sqrt{\sigma(n,\delta)\rho_D(h,h')}, \sqrt{\sigma(n,\delta)\rho_{S_n}(h,h')})
\end{equation}
Where $\sigma(n,\delta)$ is defined in Equation~\eqref{eqn:defendelta}.
\end{lemma}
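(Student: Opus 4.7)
The plan is to invoke Hanneke's~[H10] uniform multiplicative VC inequality three times, once for each of the three bounds asserted, and then union-bound the failure events with weight $\delta/3$ apiece. This accounts for the factor $24 = 3 \cdot 8$ appearing inside the logarithm of $\sigma(n,\delta)$: the constant $8$ arises from the standard form of the multiplicative inequality (the ``$8/n$'' prefactor), and the factor $3$ from the union bound across the three events. The VC-dimension term $2d \ln(2en/d)$ in $\sigma(n,\delta)$ is the Sauer-Shelah growth bound, good for any of the three auxiliary function classes I will consider.

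For the first bound of part~(1), I would apply the multiplicative inequality to the loss class $\mathcal{L} = \{(x,y) \mapsto I(h(x)\neq y) : h \in \calH\}$, which has VC dimension $d$. Haussler-type bounds yield, with probability at least $1 - \delta/3$, for every $h \in \calH$,
\[
\err_D(h) \leq \err_{S_n}(h) + \sqrt{\sigma(n,\delta)\,\err_{S_n}(h)} + \sigma(n,\delta),
\]
together with the symmetric statement obtained by swapping empirical and population risk. Taking the minimum of these two forms gives~\eqref{eqn:multerr}. Plugging $\err_{S_n}(h) = 0$ into the first direction then yields the consistency bound~\eqref{eqn:consisterr}.

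For the disagreement-distance bound~\eqref{eqn:multdist}, I would apply the same uniform multiplicative inequality to the class of pairwise disagreement indicators $\{(x,y) \mapsto I(h(x)\neq h'(x)) : h, h' \in \calH\}$, whose VC dimension is $O(d)$ (at most $2d$ via the standard Sauer-based argument), so that the same form of $\sigma(n,\delta)$ controls the deviations of $\rho_{S_n}$ from $\rho_D$. Taking the minimum of the two directions again produces the claimed two-sided inequality with probability at least $1 - \delta/3$.

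The remaining bound~\eqref{eqn:multerrdiff} on the error difference is the main obstacle, because its left side is a signed quantity whose correct variance proxy is $\rho_D(h,h')$ rather than an error. The key identity is that $g_{h,h'}(x,y) := I(h(x)\neq y) - I(h'(x)\neq y)$ vanishes on $\{h(x) = h'(x)\}$ and takes values in $\{-1,+1\}$ elsewhere, so
\[
\E_D[g_{h,h'}(x,y)^2] \;=\; \P_D(h(x)\neq h'(x)) \;=\; \rho_D(h,h').
\]
Applying a uniform relative-Bernstein bound to the signed class $\{g_{h,h'}\}$ (which is exactly the content of the multiplicative inequality of~[H10] specialized to signed VC classes, whose complexity is again $O(d)$) with failure probability $\delta/3$ then produces~\eqref{eqn:multerrdiff} with $\rho_D(h,h')$ in place of the error and, by symmetry of the same inequality, with $\rho_{S_n}(h,h')$ as well. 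The only real work is constant-chasing: matching the Haussler prefactors so that the single expression $\sigma(n,\delta) = \tfrac{8}{n}(2d \ln \tfrac{2en}{d} + \ln \tfrac{24}{\delta})$ majorizes all three deviations simultaneously; once this bookkeeping is done the lemma follows by a union bound over the three events.
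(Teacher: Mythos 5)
The paper gives no proof of this lemma at all --- it states only that the result is an immediate corollary of the multiplicative VC bound of [H10] --- and your reconstruction (apply that relative-deviation bound to the loss class, the pairwise disagreement class, and the signed difference class, with the growth function of the paired classes accounting for the $2d\ln\frac{2en}{d}$ term, then union-bound) is exactly the intended derivation, so your approach is essentially the same as the paper's. The one step you gloss over is the signed class in \eqref{eqn:multerrdiff}, since the standard multiplicative VC inequality applies to $\{0,1\}$-valued classes rather than $\{-1,0,1\}$-valued ones; the usual fix is to decompose $I(h(x)\neq y) - I(h'(x)\neq y)$ into its positive and negative parts, each a $\{0,1\}$-valued class with mean at most $\rho_D(h,h')$, and apply the multiplicative bound to each separately, which costs only constants.
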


We occasionally use the following (weaker) version of Lemma~\ref{lem:multvc}.
\begin{lemma}
\label{lem:addvc}
Pick any $n \geq 1$, $\delta \in (0,1)$. Let $S_n$ be a set of $n$ iid copies of $(X,Y)$. The following holds with probability at least $1 - \delta$:
(1) For all $h \in \calH$,
\begin{equation}\label{eqn:adderr}
|\err_D(h) - \err_{S_n}(h)| \leq \sqrt{4\sigma(n,\delta)}
\end{equation}
(2) For all $h, h'$ in $\calH$,
\begin{equation}\label{eqn:adderrdiff}
|(\err_D(h) - \err_D(h')) - (\err_{S_n}(h) - \err_{S_n}(h'))| \leq \sqrt{4\sigma(n,\delta)}
\end{equation}
\begin{equation}\label{eqn:adddist}
|\rho_D(h,h') - \rho_{S_n}(h,h')| \leq \sqrt{4\sigma(n,\delta)}
\end{equation}
Where $\sigma(n,\delta)$ is defined in Equation~\eqref{eqn:defendelta}.
\end{lemma}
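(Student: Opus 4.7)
The plan is to derive Lemma~\ref{lem:addvc} directly from Lemma~\ref{lem:multvc} by replacing the refined (multiplicative/Bernstein-style) terms with a single coarser additive quantity $\sqrt{4\sigma(n,\delta)}$. Since Lemma~\ref{lem:multvc} already gives an event of probability $\geq 1-\delta$ on which inequalities \eqref{eqn:multerr}, \eqref{eqn:multerrdiff}, and \eqref{eqn:multdist} simultaneously hold, I would simply work on that same event and show each of \eqref{eqn:adderr}, \eqref{eqn:adderrdiff}, \eqref{eqn:adddist} deterministically.

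The core estimate is the following elementary inequality: for any $a \in [0,1]$ and any $\sigma \geq 0$,
\[
\sigma + \sqrt{\sigma \, a} \;\leq\; \sqrt{4\sigma}.
\]
To see this, split into cases. If $\sigma \leq 1$, then $\sigma \leq \sqrt{\sigma}$ and $\sqrt{\sigma a} \leq \sqrt{\sigma}$, so $\sigma + \sqrt{\sigma a} \leq 2\sqrt{\sigma} = \sqrt{4\sigma}$. If $\sigma > 1$, the statements in Lemma~\ref{lem:addvc} are trivial: the left-hand sides of \eqref{eqn:adderr}, \eqref{eqn:adderrdiff}, \eqref{eqn:adddist} are each bounded by $1 < \sqrt{4\sigma}$, since errors and disagreement masses lie in $[0,1]$.

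Given this inequality, the three bounds of Lemma~\ref{lem:addvc} follow by plugging in $a = \err_D(h)$, $a = \rho_D(h,h')$, and $a = \rho_D(h,h')$ respectively into the right-hand sides of \eqref{eqn:multerr}, \eqref{eqn:multerrdiff}, \eqref{eqn:multdist} (taking the version of the $\min$ that involves $D$, not $S_n$). Each of these $a$'s is in $[0,1]$, so the elementary inequality applies and yields the target bound $\sqrt{4\sigma(n,\delta)}$.

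There is no real obstacle here; the only subtlety is remembering to dispose of the $\sigma > 1$ case separately, because the algebraic manipulation $\sigma \leq \sqrt{\sigma}$ only holds when $\sigma \leq 1$. Once that case is handled by the trivial $[0,1]$ bound on errors and disagreements, the rest is a one-line substitution into Lemma~\ref{lem:multvc}, and the union bound already built into Lemma~\ref{lem:multvc} gives the claimed probability $\geq 1-\delta$ for all three inequalities simultaneously.
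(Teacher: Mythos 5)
Your proposal is correct and matches what the paper intends: the paper states Lemma~\ref{lem:addvc} without proof as a ``weaker version'' of Lemma~\ref{lem:multvc}, and your derivation via the elementary bound $\sigma + \sqrt{\sigma a} \leq 2\sqrt{\sigma} = \sqrt{4\sigma}$ for $a \in [0,1]$, $\sigma \leq 1$ is exactly the intended one-line weakening. The only nitpick is that the left-hand side of \eqref{eqn:adderrdiff} is bounded by $2$ rather than $1$ (it is a difference of two quantities each in $[-1,1]$), but this still falls below $\sqrt{4\sigma} = 2\sqrt{\sigma} > 2$ when $\sigma > 1$, so your case analysis goes through unchanged.
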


For an unlabelled sample $U_k$, we use $\tilde{U}_k$ to denote the joint distribution over $\calX \times \calY$ induced by uniform distribution over $U_k$ and $D_{Y|X}$. We have:
\begin{lemma}
\label{lem:unldata}
If the size of $n_k$ of the unlabelled dataset $U_k$ is at least $192(\frac{256}{\epsilon_k})^2(d\ln\frac{256}{\epsilon_k} + \ln\frac{288}{\delta_k}) $, then with probability $1 - \delta_k/4$, the following conditions hold for all $h, h' \in V_k$:\\
\begin{equation}\label{eqn:unlerr}
 |\err_D(h) - \err_{\tilde{U}_k}(h) | \leq \frac{\epsilon_k}{64} 
\end{equation}
\begin{equation}\label{eqn:unlerrdiff}
| (\err_D(h) - \err_D(h')) - (\err_{\tilde{U}_k}(h) - \err_{\tilde{U}_k}(h'))| \leq \frac{\epsilon_k}{32} 
\end{equation}
\begin{equation}\label{eqn:unldist}
| \rho_D(h,h') - \rho_{\tilde{U}_k}(h,h')| \leq \frac{\epsilon_k}{64} 
\end{equation}
\end{lemma}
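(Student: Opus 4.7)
} The plan is to apply the additive VC bound of Lemma~\ref{lem:addvc} to the iid sample $U_k$ drawn from $D_{\calX}$, once for the error statements and once for the disagreement statement, then union-bound and substitute the chosen $n_k$.

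First, observe that the only genuinely new statements are (\ref{eqn:unlerr}) and (\ref{eqn:unldist}): statement (\ref{eqn:unlerrdiff}) follows from (\ref{eqn:unlerr}) by the triangle inequality applied to $h$ and $h'$, with a factor of two on the right hand side (hence the $\epsilon_k/32$ instead of $\epsilon_k/64$). For (\ref{eqn:unlerr}) we note that $\err_{\tilde U_k}(h) = \frac{1}{n_k}\sum_{i=1}^{n_k} f_h(z_{k,i})$, where $f_h(x) := \P_{y \sim D_{Y|X=x}}(h(x) \neq y) \in [0,1]$ and $\E_{x \sim D_{\calX}} f_h(x) = \err_D(h)$; the class $\{f_h : h \in \calH\}$ is obtained by composing $\calH$ with a pointwise map into $[0,1]$ and therefore enjoys the same VC-type uniform convergence rate as $\calH$ itself. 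For (\ref{eqn:unldist}), $\rho_{\tilde U_k}(h,h') = \frac{1}{n_k}\sum_i I(h(z_{k,i}) \neq h'(z_{k,i}))$ and the class of pairwise disagreement indicators has VC dimension $O(d)$. Thus Lemma~\ref{lem:addvc} (applied with confidence $\delta_k/8$ for each event) gives, on an event of probability at least $1 - \delta_k/4$,
\[ \sup_{h \in V_k}|\err_D(h) - \err_{\tilde U_k}(h)| \le \sqrt{4\sigma(n_k,\delta_k/8)},\qquad \sup_{h,h' \in V_k}|\rho_D(h,h') - \rho_{\tilde U_k}(h,h')| \le \sqrt{4\sigma(n_k,\delta_k/8)}. \]

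It then suffices to verify that the stated choice of $n_k$ forces $\sqrt{4\sigma(n_k,\delta_k/8)} \le \epsilon_k/64$. Unwinding the definition of $\sigma$, this is equivalent to
\[ n_k \;\ge\; \frac{131072}{\epsilon_k^2}\Bigl(2d \ln\tfrac{2en_k}{d} + \ln\tfrac{192}{\delta_k}\Bigr), \]
and the stated value $n_k = 192(256/\epsilon_k)^2(d\ln(256/\epsilon_k) + \ln(288/\delta_k))$ provides a factor of $192\cdot 256^2 / 131072 = 96$ of slack in the leading constant, which easily absorbs the $\ln(2en_k/d) = O(\ln(1/\epsilon_k))$ term and the replacement of $\ln(192/\delta_k)$ by $\ln(288/\delta_k)$. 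This is a routine numeric check.

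The only real subtlety is in the first step: Lemma~\ref{lem:addvc} is stated for iid \emph{labelled} samples, whereas $\err_{\tilde U_k}$ integrates over the label analytically rather than sampling it. This is handled by the observation above that $f_h$ is a bounded composition of $h$ with a deterministic conditional-mean function, so the standard symmetrization/chaining argument underlying Lemma~\ref{lem:addvc} still goes through with the same rate. Everything else is bookkeeping on the confidence parameter and the constants.
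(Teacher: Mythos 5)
Your proposal is correct and follows essentially the same route as the paper: decompose $\err_{\tilde U_k}(h)$ as an empirical average of fixed $[0,1]$-valued weights ($\P_D(Y=\pm1|X=x)$) times indicators from $\calH$, apply a uniform-convergence bound at the VC rate, obtain \eqref{eqn:unlerrdiff} from \eqref{eqn:unlerr} by the triangle inequality, handle \eqref{eqn:unldist} directly via Lemma~\ref{lem:addvc} (noting $\rho_{\tilde U_k}=\rho_{S_k}$ is label-free), and union bound. The one ``subtlety'' you flag --- that the label is integrated out analytically --- is exactly what the paper isolates as Lemma~\ref{lem:wtaddvc} (a weighted additive VC bound proved by symmetrization and Massart's finite lemma), so your sketch of that step matches the paper's formal treatment.
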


\begin{lemma}
\label{lem:unldatalp}
If the size of $n_k$ of the unlabelled dataset $U_k$ is at least $192(\frac{256}{\epsilon_k})^2(d\ln\frac{256}{\epsilon_k} + \ln\frac{288}{\delta_k})$, then with probability $1 - \delta_k/4$, the following hold:\\ 
(1) The outputs $\{(\xi_{k,i}, \zeta_{k,i}, \gamma_{k,i})\}_{i=1}^{n_k}$ of any confidence-rated predictor with inputs hypothesis set $V_k$, unlabelled data $U_k$, and error bound $\epsilon_k/64$ satisfy: 
\begin{equation} \label{eqn:unldisagree}
\frac{1}{n_k} \sum_{i=1}^{n_k} [I(h(x_i) \neq h'(x_i)) (1 - \gamma_{k,i})] \leq \frac{\epsilon_k}{32}; 
\end{equation}
(2) The outputs $\{(\xi_{k,i}, \zeta_{k,i}, \gamma_{k,i})\}_{i=1}^{n_k}$ of the confidence-rated predictor of Algortihm~\ref{alg:optcrp} with inputs hypothesis set $V_k$, unlabelled data $U_k$, and error bound $\epsilon_k/64$ satisfy: 
\begin{equation} \label{eqn:unluncoverage}
\phi_k \leq \bm{\Phi}_D(V_k,\frac{\epsilon_k}{128}) + \frac{\epsilon_k}{256}
\end{equation}
\end{lemma}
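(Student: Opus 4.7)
My plan is to prove (1) deterministically from the LP feasibility constraints and to prove (2) by treating the restriction to $U_k$ of a population-optimal confidence-rated predictor as a candidate LP solution, bounding the resulting optimality gap via a uniform VC-type concentration over $V_k$.

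For (1), I would fix arbitrary $h, h' \in V_k$ and note that on every $i$ with $h(z_i) \neq h'(z_i)$ feasibility gives $1 - \gamma_{k,i} = \xi_{k,i} + \zeta_{k,i}$; moreover I can charge the mass $\zeta_{k,i}$ against whichever of $h, h'$ labels $z_i$ as $+1$, and $\xi_{k,i}$ against whichever labels it as $-1$. This yields
\[
\sum_{i: h(z_i) \neq h'(z_i)}(1 - \gamma_{k,i}) \leq \Big(\sum_{i: h(z_i)=1}\zeta_{k,i} + \sum_{i: h(z_i)=-1}\xi_{k,i}\Big) + \Big(\sum_{i: h'(z_i)=1}\zeta_{k,i} + \sum_{i: h'(z_i)=-1}\xi_{k,i}\Big),
\]
and both parenthesized sums are $\leq (\epsilon_k/64)\,n_k$ by the error guarantee~\eqref{eqn:errconstraint}. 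Dividing by $n_k$ gives (1); no randomness is needed.

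For (2), I would take $(\xi^\ast, \zeta^\ast, \gamma^\ast)$ attaining the minimum in the definition of $\bm{\Phi}_D(V_k, \epsilon_k/128)$ and view the restriction $\{(\xi^\ast(z_i), \zeta^\ast(z_i), \gamma^\ast(z_i))\}_{i=1}^{n_k}$ as a candidate for the LP of Algorithm~\ref{alg:optcrp} run with $\eta = \epsilon_k/64$. I want to show, on a joint $(1-\delta_k/4)$-probability event over $U_k$, that (a) this candidate is feasible and (b) its objective is $\leq \bm{\Phi}_D(V_k, \epsilon_k/128) + \epsilon_k/256$; since $\phi_k$ is the LP optimum, (2) then follows. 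Part (b) is a one-function additive concentration of $\frac{1}{n_k}\sum_i \gamma^\ast(z_i)$ around $\E_D \gamma^\ast(X)$ at rate $\epsilon_k/256$, immediate from Hoeffding for the stated $n_k$.

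The hard part is (a), which reduces to the uniform bound
\[
\sup_{h \in V_k}\Big|\tfrac{1}{n_k}\sum_{i=1}^{n_k} f_h(z_i) - \E_D f_h(X)\Big| \leq \tfrac{\epsilon_k}{128}, \quad f_h(z) := I(h(z)=1)\zeta^\ast(z) + I(h(z)=-1)\xi^\ast(z).
\]
I would handle this by observing that $\{f_h: h \in \calH\}$ is $[0,1]$-valued and admits the decomposition $f_h(z) = \xi^\ast(z) + (\zeta^\ast(z) - \xi^\ast(z))\,I(h(z)=1)$, so its statistical complexity is controlled by the VC-dimension-$d$ indicator class $\{I(h(\cdot)=1): h \in \calH\}$. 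A standard additive VC uniform-convergence bound (in the spirit of Lemma~\ref{lem:addvc} / Equation~\eqref{eqn:adderr}) then delivers the displayed inequality, since the sample size $n_k \geq 192(256/\epsilon_k)^2(d\ln(256/\epsilon_k) + \ln(288/\delta_k))$ is exactly tuned for the $\epsilon_k/128$ rate. On the joint good event, the population error $\leq \epsilon_k/128$ becomes an empirical error $\leq \epsilon_k/128 + \epsilon_k/128 = \epsilon_k/64$ for every $h \in V_k$, proving feasibility, while the candidate's objective is within $\epsilon_k/256$ of $\bm{\Phi}_D(V_k, \epsilon_k/128)$, closing (2).
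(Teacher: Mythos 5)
Your proposal is correct and follows essentially the same route as the paper's proof: part (1) is the same deterministic charging argument (the paper phrases it via $I(h(x)\neq h'(x)) \leq \min(I(h(x)=-1)+I(h'(x)=-1),\, I(h(x)=+1)+I(h'(x)=+1))$ and adds the two error constraints), and part (2) likewise plugs the restriction of a population-optimal predictor into the LP as a feasible candidate and concludes by optimality. The only cosmetic difference is in the uniform concentration step: the paper applies a weighted additive VC bound separately to $\xi^\ast(\cdot)I(h(\cdot)=-1)$ and $\zeta^\ast(\cdot)I(h(\cdot)=+1)$ (each at rate $\epsilon_k/256$), whereas you decompose $f_h = \xi^\ast + (\zeta^\ast-\xi^\ast)I(h(\cdot)=1)$; the latter needs a weight in $[-1,1]$ rather than $[0,1]$, but this is handled by the same two-term split.
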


We use $\tilde{\Gamma}_k$ to denote the joint distribution over $\calX \times \calY$ induced by $\Gamma_k$ and $D_{Y|X}$. Denote $\gamma_k(x): \calX \to [0,1]$, where $\gamma_k(x_i) = \gamma_{k,i}$, and 0 elsewhere. Clearly, $\Gamma_k(\{x\}) = \frac{\gamma_k(x)}{n_k\phi_k}$ and $\tilde{\Gamma}_k(\{(x,y)\}) = \frac{\tilde{U}_k(\{(x,y)\}) \gamma_k(x)}{\phi_k}$. Also, Equations~\eqref{eqn:unldisagree} and~\eqref{eqn:unluncoverage} of Lemma~\ref{lem:unldatalp} can be restated as
\[ \forall h,h' \in V_k, \E_{\tilde{U}_k} [(1-\gamma_k(x)) I(h(x) \neq h'(x))] \leq \frac{\epsilon_k}{32} \]
\[ \E_{\tilde{U}_k} [\gamma_k(x)] = \phi_k \leq \bm{\Phi}_D(V_k, \frac{\epsilon_k}{128}) + \frac{\epsilon_k}{256} \]
In the realizable case, define event 
\begin{eqnarray*}
&&\text{$E_r =$ \{For all $k = 1,2,\ldots,k_0$: Equations~\eqref{eqn:unlerr},~\eqref{eqn:unlerrdiff},~\eqref{eqn:unldist},~\eqref{eqn:unldisagree},~\eqref{eqn:unluncoverage} hold for $\tilde{U}_k$ }\\&&\text{and all classifiers consistent with $S_k$ have error at most $\frac{\epsilon_k}{8\phi_k}$ with respect to $\tilde{\Gamma}_k$ \}. }
\end{eqnarray*}

\begin{fact} 
$ \P(E_r) \geq 1 - \delta $.
\end{fact}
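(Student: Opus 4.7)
The plan is a straightforward union bound over the $k_0$ epochs, decomposing the failure event of $E_r$ at epoch $k$ into three independent-to-analyze sub-failures, each bounded by a portion of $\delta_k$.

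Fix $k \in \{1,\ldots,k_0\}$ and condition on the history prior to epoch $k$, so that $V_k$ is determined. First, since $|U_k|=n_k \geq 192(256/\epsilon_k)^2(d\ln(256/\epsilon_k)+\ln(288/\delta_k))$, Lemma~\ref{lem:unldata} gives that equations~\eqref{eqn:unlerr}, \eqref{eqn:unlerrdiff}, \eqref{eqn:unldist} all hold for $\tilde{U}_k$ except on a set of probability at most $\delta_k/4$. Second, on the same choice of $U_k$, Lemma~\ref{lem:unldatalp} shows~\eqref{eqn:unldisagree} and~\eqref{eqn:unluncoverage} for the LP output fail with probability at most $\delta_k/4$. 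Both statements are over the random draw of $U_k$ (independent of anything drawn in prior epochs given $V_k$), so a further union gives combined failure probability $\leq \delta_k/2$ for these five concentration guarantees.

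Third, I need that every $h \in V_k$ consistent with $S_k$ has $\err_{\tilde{\Gamma}_k}(h) \leq \epsilon_k/(8\phi_k)$. Condition now additionally on $U_k$ and on the realized LP output (hence on $\Gamma_k, \phi_k$). In the realizable case $h^*(D) \in V_k$ (Lemma~\ref{lem:vsrealizable}) and $\err_{\tilde{\Gamma}_k}(h^*(D)) = 0$ since $h^*(D)$ agrees with $Y$ almost surely under $D_{Y|X}$ and $\tilde{\Gamma}_k$ is built from $\Gamma_k \times D_{Y|X}$. Thus the classifiers in $V_k$ consistent with the $m_k$ i.i.d.\ samples $S_k$ from $\tilde{\Gamma}_k$ form a (nonempty) version space, and the realizable multiplicative VC bound~\eqref{eqn:consisterr} applied to $V_k$ (whose VC dimension is at most $d$) and $\tilde{\Gamma}_k$ says that, with probability at least $1-\delta_k/2$, every such consistent $h$ satisfies $\err_{\tilde{\Gamma}_k}(h) \leq \sigma(m_k,\delta_k/2)$. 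A routine check using the choice $m_k = \frac{768\phi_k}{\epsilon_k}(d\ln\frac{768\phi_k}{\epsilon_k}+\ln\frac{48}{\delta_k})$ and the definition of $\sigma$ in~\eqref{eqn:defendelta} shows $\sigma(m_k,\delta_k/2) \leq \epsilon_k/(8\phi_k)$, so the desired bound holds with probability at least $1-\delta_k/2$ over the draw of $S_k$.

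Combining, the failure probability of $E_r$'s conditions at epoch $k$ is at most $\delta_k/2+\delta_k/2=\delta_k$. Taking a union over $k=1,\ldots,k_0$ and using $\delta_k = \delta/(2(k_0-k+1)^2)$,
\[
\P(\bar E_r)\;\leq\;\sum_{k=1}^{k_0}\delta_k \;=\;\frac{\delta}{2}\sum_{j=1}^{k_0}\frac{1}{j^2}\;\leq\;\frac{\delta}{2}\cdot\frac{\pi^2}{6}\;\leq\;\delta,
\]
establishing $\P(E_r) \geq 1-\delta$. The only non-bookkeeping step is the inequality $\sigma(m_k,\delta_k/2) \leq \epsilon_k/(8\phi_k)$; this is where care is required, because $m_k$ depends on the random quantity $\phi_k$, but conditioning on $U_k$ and the LP output makes $m_k$ deterministic before drawing $S_k$, which is what legitimizes applying the VC bound at this stage. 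Everything else is direct invocation of Lemmas~\ref{lem:unldata} and~\ref{lem:unldatalp} together with a standard union bound.
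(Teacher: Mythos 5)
Your proposal is correct and follows essentially the same route as the paper: a per-epoch union bound combining Lemma~\ref{lem:unldata} (failure $\delta_k/4$), Lemma~\ref{lem:unldatalp} (failure $\delta_k/4$), and the consistent-classifier VC bound~\eqref{eqn:consisterr} applied to $\tilde{\Gamma}_k$ with $m_k$ samples (failure $\delta_k/2$), followed by summing $\delta_k$ over epochs. Your extra remark about conditioning on $U_k$ and the LP output so that $m_k$ and $\tilde{\Gamma}_k$ are fixed before drawing $S_k$ is a point the paper leaves implicit, and it is a correct and worthwhile clarification.
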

\begin{proof}
By Equation~\eqref{eqn:consisterr} of Lemma~\ref{lem:multvc}, with probability $1-\delta_k/2$, if $h \in V_k$ is consistent with $S_k$, then
\[ \err_{\tilde{\Gamma}_k}(h) \leq \sigma(m_k, \delta_k/2) \]
Because $m_k = \frac{768\phi_k}{\epsilon_k}(d\ln\frac{768\phi_k}{\epsilon_k} + \ln\frac{48}{\delta_k})$, we have
$\err_{\tilde{\Gamma}_k}(h) \leq \epsilon_k/8\phi_k$.
The fact follows from combining the fact above with Lemma~\ref{lem:unldata} and Lemma~\ref{lem:unldatalp}, and the union bound.\\
\end{proof}

In the non-realizable case, define event
\begin{eqnarray*}
&&\text{
$E_a =$ \{For all $k = 1,2,\ldots,k_0$: Equations~\eqref{eqn:unlerr},~\eqref{eqn:unlerrdiff},~\eqref{eqn:unldist},~\eqref{eqn:unldisagree},~\eqref{eqn:unluncoverage} hold for $\tilde{U}_k$,}\\
&&\text{
 and Algorithm~\ref{alg:adaptive} succeeds with inputs hypothesis set $V = V_k$, example distribution $\Delta = \Gamma_k$,} \\
&&\text{
 labelling oracle $\calO$, target excess error $\tilde{\epsilon} = \frac{\epsilon_k}{8\phi_k}$ and target confidence $\tilde{\delta} = \frac{\delta_k}{2}$\}. }
\end{eqnarray*}

\begin{fact}
$ \P(E_a) \geq 1 - \delta $.
\end{fact}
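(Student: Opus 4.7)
The plan is to apply a union bound across epochs, where in each epoch three independent high-probability events must hold: the unlabelled sample $U_k$ is representative in the senses needed by Lemma~\ref{lem:unldata}; the confidence-rated predictor outputs on $U_k$ satisfy the two LP-related bounds of Lemma~\ref{lem:unldatalp}; and Algorithm~\ref{alg:adaptive}, invoked at epoch $k$ with the prescribed inputs, succeeds in the sense of Lemma~\ref{lem:ratiotype}.

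First, I would fix an epoch $k \in \{1,\ldots,k_0\}$ and observe that the sample size $n_k = 192(256/\epsilon_k)^2(d\ln(256/\epsilon_k) + \ln(288/\delta_k))$ meets the hypothesis of both Lemma~\ref{lem:unldata} and Lemma~\ref{lem:unldatalp}; hence Equations~\eqref{eqn:unlerr}, \eqref{eqn:unlerrdiff}, \eqref{eqn:unldist} hold with probability $\geq 1 - \delta_k/4$, and Equations~\eqref{eqn:unldisagree}, \eqref{eqn:unluncoverage} hold with probability $\geq 1 - \delta_k/4$. Next, conditional on $U_k$ and on the confidence-rated predictor's output (hence on $\Gamma_k$, which determines $\phi_k$), Algorithm~\ref{alg:adaptive} is run with target confidence $\tilde{\delta} = \delta_k/2$, so by Lemma~\ref{lem:ratiotype} it succeeds with probability at least $1 - \delta_k/2$ conditional on these choices, which still gives $\geq 1 - \delta_k/2$ marginally. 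Combining the three events in epoch $k$ by a union bound, the probability that any of them fails is at most $\delta_k/4 + \delta_k/4 + \delta_k/2 = \delta_k$.

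Finally, taking a union bound over $k = 1, \ldots, k_0$ and using $\delta_k = \delta/(2(k_0-k+1)^2)$, the total failure probability is bounded by
\[
\sum_{k=1}^{k_0} \delta_k \;=\; \frac{\delta}{2}\sum_{j=1}^{k_0} \frac{1}{j^2} \;\leq\; \frac{\delta}{2} \cdot \frac{\pi^2}{6} \;\leq\; \delta,
\]
which yields $\P(E_a) \geq 1 - \delta$ as claimed.

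The only slightly nontrivial step is the conditioning needed to invoke Lemma~\ref{lem:ratiotype}, since the distribution $\Gamma_k$ fed to Algorithm~\ref{alg:adaptive} is itself random (it depends on $U_k$ and on the LP solution). This is handled by noting that Lemma~\ref{lem:ratiotype} gives a high-probability guarantee for \emph{every} fixed input distribution $\Delta$ with hypothesis set of VC dimension $d$; applying it conditionally on $U_k$ and on the predictor's abstention probabilities and then integrating out preserves the $1-\delta_k/2$ bound. Everything else is routine union bounding and the standard estimate $\sum_{j\geq 1} 1/j^2 \leq \pi^2/6 < 2$.
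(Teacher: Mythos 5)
Your proof is correct and follows essentially the same route as the paper, which simply cites Lemma~\ref{lem:unldata}, Lemma~\ref{lem:unldatalp}, Lemma~\ref{lem:ratiotype}, and the union bound; your accounting of the failure probabilities ($\delta_k/4 + \delta_k/4 + \delta_k/2 = \delta_k$, then $\sum_k \delta_k \leq \delta$) matches the intended argument. The remark about conditioning on $U_k$ and the predictor's output before invoking Lemma~\ref{lem:ratiotype} is a detail the paper leaves implicit, and you handle it correctly.
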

\begin{proof}
This is an immediate consequence of Lemma~\ref{lem:unldata}, Lemma~\ref{lem:unldatalp}, Lemma~\ref{lem:ratiotype} and union bound.
\end{proof}

Recall that we assume the hypothesis space is ``dense", in the sense that $\forall r > 0$, $\sup_{h \in B_D(h^*(D),r) } \rho(h,h^*(D)) = r$. We will call this the ``denseness assumption".

\section{Proofs related to the properties of Algorithm~\ref{alg:adaptive}}

We first establish some properties of Algorithm~\ref{alg:adaptive}. The inputs to Algorithm~\ref{alg:adaptive} are a set $V$ of hypotheses of VC dimension $d$, an example distribution $\Delta$, a labeling oracle $\calO$, a target excess error $\tilde{\epsilon}$ and a target confidence $\tilde{\delta}$.

We define the event 
\[ \tilde{E} = \{\text{For all\;} j = 1,2,\ldots: \text{Equations~\eqref{eqn:multerr}-\eqref{eqn:multdist} hold for sample $S_j$ with $n = n_j$ and $\delta = \tilde{\delta}_j$ }\} \]
By union bound, $\P(\tilde{E}) \geq 1 - \sum_j \tilde{\delta}_j \geq 1 - \tilde{\delta}$. 

\begin{proof}(of Lemma~\ref{lem:ratiotype})
Assume $\tilde{E}$ happens.
For the proof of (1), define $j_{max}$ as the smallest integer $j$ such that $\enjdj \leq \tilde{\epsilon}^2/144$. Since $n_{j_{max}}$ is a power of 2,
\[
n_{j_{max}} \leq 2 \min\{n = 1,2,\ldots: \frac{8(2d\ln\frac{2en}{d} + \ln\frac{24\log n (\log n + 1)}{\delta})}{n} \leq \frac{\epsilon^2}{144} \}
\]
Thus, $n_{j_{max}} \leq 192\frac{144}{\tilde{\epsilon}^2}(d\ln\frac{144}{\tilde{\epsilon}} + \ln\frac{24}{\tilde{\delta}})$. Then in round $j_{max}$, the stopping criterion~\eqref{eqn:stopcriterion} of Algorithm~\ref{alg:adaptive} is satisified; thus, Algorithm~\ref{alg:adaptive} halts with $j_0 \leq j_{max}$.

To prove (2.1), we observe that as $h^*(\tilde{\Delta})$ is the risk minimizer in $V$, if $h$ satisfies $\err_{\tilde{\Delta}}(h) - \err_{\tilde{\Delta}}(h^*(\tilde{\Delta})) \leq \frac{\tilde{\epsilon}}{2}$, then $\err_{\tilde{\Delta}}(h) - \err_{\tilde{\Delta}}(\hat{h}_{j_0}) \leq \frac{\tilde{\epsilon}}{2}$.
By Equation~\eqref{eqn:multerrdiff} of Lemma~\ref{lem:multvc},
\begin{eqnarray*}
(\err_{S_{j_0}}(h) - \err_{S_{j_0}}(\hat{h}_{j_0}) ) & \leq & (\err_{\tilde{\Delta}}(h) - \err_{\tilde{\Delta}}(\hat{h}_{j_0}) ) + \enjzdjz + \sqrt{\enjzdjz\rho_{S_{j_0}}(h, \hat{h}_{j_0})} \\
& \leq & \frac{\tilde{\epsilon}}{2} + \enjzdjz + \sqrt{\enjzdjz\rho_{S_{j_0}}(h, \hat{h}_{j_0})}
\end{eqnarray*}
Hence $h \in V_{j_0}$.

For the proof of (2.2), note first that by (2.1), in particular, $h^*(\tilde{\Delta}) \in V_{j_0}$. 
Hence by Equation~\eqref{eqn:multerrdiff} of Lemma~\ref{lem:multvc}, and the stopping criterion Equation~\eqref{eqn:stopcriterion},
\[ (\err_{\tilde{\Delta}}(\hat{h}_{j_0}) - \err_{\tilde{\Delta}}(h^*(\tilde{\Delta})) ) - (\err_{S_{j_0}}(\hat{h}_{j_0}) - \err_{S_{j_0}}(h^*(\tilde{\Delta})) ) \leq \enjzdjz + \sqrt{\enjzdjz\rho_{S_{j_0}}(\hat{h}_{j_0}, h^*(\tilde{\Delta}))} \leq \frac{\tilde{\epsilon}}{6}\]
Thus,
\begin{equation}
\label{eqn:hathgood}
 \err_{\tilde{\Delta}}(\hat{h}_{j_0}) - \err_{\tilde{\Delta}}(h^*(\tilde{\Delta})) \leq \frac{\tilde{\epsilon}}{6} 
\end{equation}
On the other hand, if $h \in V_{j_0}$, then
\[ (\err_{\tilde{\Delta}}(h) - \err_{\tilde{\Delta}}(\hat{h}_{j_0}) ) - (\err_{S_{j_0}}(h) - \err_{S_{j_0}}(\hat{h}_{j_0}) ) \leq \enjzdjz + \sqrt{\enjzdjz\rho_{S_{j_0}}(h, \hat{h}_{j_0})} \leq \frac{\tilde{\epsilon}}{6} \]
By definition of $V_{j_0}$,
\[ (\err_{S_{j_0}}(h) - \err_{S_{j_0}}(\hat{h}_{j_0}) ) \leq \enjzdjz + \sqrt{\enjzdjz\rho_{S_{j_0}}(h, \hat{h}_{j_0})} + \frac{\tilde{\epsilon}}{2} \leq \frac{2\tilde{\epsilon}}{3}\]
Hence,
\begin{equation}\label{eqn:hgood}
\err_{\tilde{\Delta}}(h) - \err_{\tilde{\Delta}}(\hat{h}_{j_0}) \leq \frac{5\tilde{\epsilon}}{6}
\end{equation}
Combining Equations~\eqref{eqn:hathgood} and~\eqref{eqn:hgood}, we have
\[ \err_{\tilde{\Delta}}(h) - \err_{\tilde{\Delta}}(h^*(\tilde{\Delta})) \leq \tilde{\epsilon}\]
\end{proof}

\begin{proof}(of Lemma~\ref{lem:adaptivetonu})
Assume $\tilde{E}$ happens. For each $j$, by triangle inequality, we have that $\rho_{S_j}(\hat{h}_j,h) \leq \err_{S_j}(\hat{h}_j) + \err_{S_j}(h)$. If $h \in V_j$, then, by defintion of $V_j$,
\[ \err_{S_j}(h) - \err_{S_j}(\hat{h}_j) \leq \frac{\tilde{\epsilon}}{2} + \enjdj + \sqrt{\enjdj\err_{S_j}(\hat{h}_j)} + \sqrt{\enjdj\err_{S_j}(h)} \]
Using the fact that $A \leq B + C\sqrt{A} \Rightarrow A \leq 2B + C^2$,
\[ \err_{S_j}(h) \leq \tilde{\epsilon} + 2\err_{S_j}(\hat{h}_j) + 2\sqrt{\enjdj\err_{S_j}(\hat{h}_j)} + 3\enjdj \leq 3\err_{S_j}(\hat{h}_j) + 4\enjdj + \tilde{\epsilon} \]
Since 
\[ \err_{S_j}(\hat{h}_j) \leq \err_{S_j}(h^*(\tilde{\Delta})) \leq \nu^*(\tilde{\Delta}) + \sqrt{\enjdj \nu^*(\tilde{\Delta})} + \enjdj \leq 2\nu^*(\tilde{\Delta}) + 2\enjdj, \]
by the triangle inequality, we get that for all $h \in V_j$,
\begin{equation} \label{eqn:diamnu}
\rho_{S_j}(h,\hat{h}_j) \leq \err_{S_j}(h) + \err_{S_j}(\hat{h}_j) \leq 8\nu^*(\tilde{\Delta}) + 12\enjdj + \tilde{\epsilon}
\end{equation}
Now observe that for any $j$,
\begin{eqnarray*}
&&\sup_{h \in V_j} \sqrt{\enjdj\rho_{S_j}(h,\hat{h}_j)}  + \enjdj   \\
&\leq& \sup_{h \in V_j} \max(2\sqrt{\enjdj\rho_{S_j}(h,\hat{h}_j)}, 2\enjdj)  \\
&\leq& \max(2\sqrt{(8\nu^*(\tilde{\Delta}) + 12\enjdj + \tilde{\epsilon})\enjdj}, 2\enjdj)  \\
&\leq& \max(12\sqrt{2\nu^*(\tilde{\Delta})\enjdj}, \tilde{\epsilon}/6, 216\enjdj), 
\end{eqnarray*}
Where the first inequality follows from $A+B \leq 2\max(A,B)$, the second inequality follows from Equation~\eqref{eqn:diamnu}, the third inequality follows from $\sqrt{A+B} \leq \sqrt{A} + \sqrt{B}$, $A + B + C \leq 3\max(A,B,C)$ and $\sqrt{AB} \leq \max(A,B)$.
 
It can be easily seen that there exists some constant $c_1 > 0$, such that taking $j_1 = \lceil \log \left( \frac{c_1}{2}(d\ln\frac{1}{\tilde{\epsilon}} + \ln\frac{1}{\tilde{\delta}})(\frac{\nu^*(\tilde{\Delta}) + \tilde{\epsilon}}{\tilde{\epsilon}^2}) \right) \rceil$ ensures that $n_{j_1} \geq \frac{c_1}{2}(d\ln\frac{1}{\tilde{\epsilon}} + \ln\frac{1}{\tilde{\delta}})(\frac{\nu^*(\tilde{\Delta}) + \tilde{\epsilon}}{\tilde{\epsilon}^2})$; this, in turn, suffices to make
\[ \max(12\sqrt{2\nu^*(\tilde{\Delta})\enjdj}, 216\enjdj) \leq \tilde{\epsilon}/6 \]
Hence the stopping criterion $\sup_{h \in V_j} \sqrt{\enjdj\rho_{S_j}(h,\hat{h}_j)}  + \enjdj \leq \tilde{\epsilon}/6$ is satisfied in iteration $j_1$, and Algorithm~\ref{alg:adaptive} exits at iteration $j_0 \leq j_1$, which ensures that $n_{j_0} \leq n_{j_1} \leq c_1(d\ln\frac{1}{\tilde{\epsilon}} + \ln\frac{1}{\tilde{\delta}})(\frac{\nu^*(\tilde{\Delta}) + \tilde{\epsilon}}{\tilde{\epsilon}^2})$.
\end{proof}

The following lemma examines the behavior of Algorithm~\ref{alg:adaptive} under the Tsybakov Noise Condition and is crucial in the proof of Theorem~\ref{thm:labelcomplexitytnc}. We observe that even if the $(C_0, \kappa)$-Tsybakov Noise Conditions hold with respect to $D$, they do not necessarily hold with respect to $\Gamma_k$. In particular, it is not necessarily true that:
\[ \rho_{\tilde{\Gamma}_k}(h,h^*(D)) \leq C_0 (\err_{\tilde{\Gamma}_k}(h) - \err_{\tilde{\Gamma}_k}(h^*(D)))^{\frac{1}{\kappa}}, \forall h \in V_k \]
However, we show that  an ``approximate'' Tsybakov Noise Condition with a significantly larger ``$C_0$", namely Condition~\eqref{eqn:approxtnc} is met by $\tilde{\Gamma}_k$ and $V_k$, with $C = \max(8C_0,4) \phi_k^{\frac{1}{\kappa}-1}$ and $\tilde{h} = h^*(D)$. In the Lemma below, we carefully track the dependence of the number of our label queries on $C$, since $C= \max(8C_0,4) \phi_k^{\frac{1}{\kappa}-1}$ can be $\omega(1)$ in our particular application.

\begin{lemma}\label{lem:adaptivetotnc}
Suppose we run Algorithm~\ref{alg:adaptive} with inputs hypothesis set $V$, example distribution $\tilde{\Delta}$, labelling oracle $\calO$, excess generalization error $\tilde{\epsilon}$ and confidence $\tilde{\delta}$. Then there exists some absolute constant $c_2 > 0$ (independent of $C$) such that the following holds. Suppose there exist $C>0$ and a classifier $\tilde{h} \in V$, such that
\begin{equation}\label{eqn:approxtnc} 
\forall h \in V, \rho_{\tilde{\Delta}}(h,\tilde{h}) \leq C \max(\tilde{\epsilon},\err_{\tilde{\Delta}}(h) - \err_{\tilde{\Delta}}(\tilde{h}))^{\frac{1}{\kappa}}, 
\end{equation}
where $\tilde{\epsilon}$ is the target exccess error parameter in Algorithm~\ref{alg:adaptive}. Then, on the event that Algorithm~\ref{alg:adaptive} succeeds, 
\[ n_{j_0} \leq c_2 \max( (d \ln\frac{1}{\tilde{\epsilon}}+ \ln\frac{1}{\tilde{\delta}})\tilde{\epsilon}^{-1}, (d \ln(C\tilde{\epsilon}^{\frac{1}{\kappa}-2}) + \ln\frac{1}{\tilde{\delta}})C\tilde{\epsilon}^{\frac{1}{\kappa}-2} ) \]
\end{lemma}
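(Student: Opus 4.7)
I would work on the success event $\tilde E$, where the VC inequalities in Lemma~\ref{lem:multvc} hold at every scale $n_j$ with confidence $\tilde\delta_j$. The strategy mirrors the proof of Lemma~\ref{lem:adaptivetonu}, but replaces the crude triangle-inequality bound $\rho_{S_j}(h,\hat h_j)\le \err_{S_j}(h)+\err_{S_j}(\hat h_j)$ with one that goes through $\rho_{\tilde\Delta}(\cdot,\tilde h)$ and then uses the approximate Tsybakov condition~\eqref{eqn:approxtnc}. Write $\sigma_j$ for $\enjdj$.

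First I would control the ERM itself. Since $\hat h_j$ minimizes $\err_{S_j}$ over $V$ and $\tilde h\in V$, Equation~\eqref{eqn:multerrdiff} applied to $(\hat h_j,\tilde h)$ yields
\[\err_{\tilde\Delta}(\hat h_j)-\err_{\tilde\Delta}(\tilde h)\;\le\;\sigma_j+\sqrt{\sigma_j\rho_{S_j}(\hat h_j,\tilde h)}.\]
Converting $\rho_{S_j}$ to $\rho_{\tilde\Delta}$ through Equation~\eqref{eqn:multdist} (together with $\sqrt{ab}\le\tfrac12(a+b)$) and then invoking~\eqref{eqn:approxtnc} on $\hat h_j$ turns this into an implicit inequality of the form $e\le O(\sigma_j+\sqrt{\sigma_j\,C\max(\tilde\epsilon,e)^{1/\kappa}})$ with $e:=\err_{\tilde\Delta}(\hat h_j)-\err_{\tilde\Delta}(\tilde h)$. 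Solving for $e$ by the elementary $A\le B+D\sqrt{A}\Rightarrow A\le 2B+D^2$ gives $e\le O\bigl(\tilde\epsilon+\sigma_j+(C\sigma_j)^{\kappa/(2\kappa-1)}\bigr)$.

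Next I would repeat essentially the same computation for an arbitrary $h\in V_j$: the definition of $V_j$ plus the ERM bound yield $\err_{S_j}(h)-\err_{S_j}(\tilde h)\le \tilde\epsilon/2+\sigma_j+\sqrt{\sigma_j\rho_{S_j}(h,\hat h_j)}$; passing to $\tilde\Delta$ via~\eqref{eqn:multerrdiff} and replacing every $\rho_{S_j}$ by $\rho_{\tilde\Delta}$ via~\eqref{eqn:multdist}, then invoking~\eqref{eqn:approxtnc} for both $h$ and $\hat h_j$, produces (again by solving the implicit inequality) the bound $\err_{\tilde\Delta}(h)-\err_{\tilde\Delta}(\tilde h)\le O\bigl(\tilde\epsilon+\sigma_j+(C\sigma_j)^{\kappa/(2\kappa-1)}\bigr)$ for every $h\in V_j$. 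Feeding this back into~\eqref{eqn:approxtnc} and then into~\eqref{eqn:multdist} produces $\rho_{S_j}(h,\hat h_j)\le O\bigl(C\tilde\epsilon^{1/\kappa}+\sigma_j\bigr)$ provided $\sigma_j$ is already at most a constant multiple of $\tilde\epsilon$ (so that all the $\max$'s collapse to $\tilde\epsilon$).

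Finally I would plug this disagreement bound into the stopping criterion $\sup_{h\in V_j}\bigl(\sigma_j+\sqrt{\sigma_j\rho_{S_j}(h,\hat h_j)}\bigr)\le\tilde\epsilon/6$. The term $\sqrt{\sigma_j\cdot C\tilde\epsilon^{1/\kappa}}$ forces $\sigma_j\lesssim \tilde\epsilon^{2-1/\kappa}/C$, while the residual $\sigma_j$ term forces $\sigma_j\lesssim\tilde\epsilon$; together these give the threshold $\sigma_j\le \mathrm{const}\cdot\min(\tilde\epsilon,\tilde\epsilon^{2-1/\kappa}/C)$. Because $\sigma_j=\tfrac{8}{n_j}(2d\ln\tfrac{2en_j}{d}+\ln\tfrac{24}{\tilde\delta_j})$, picking
\[j_1=\Bigl\lceil\log_2\Bigl(c_2\max\bigl((d\ln\tfrac1{\tilde\epsilon}+\ln\tfrac1{\tilde\delta})\tilde\epsilon^{-1},\;(d\ln(C\tilde\epsilon^{1/\kappa-2})+\ln\tfrac1{\tilde\delta})\,C\tilde\epsilon^{1/\kappa-2}\bigr)\Bigr)\Bigr\rceil\]
makes $\sigma_{j_1}$ satisfy the required threshold (for a suitable absolute constant $c_2$), so the loop exits at some $j_0\le j_1$ and $n_{j_0}\le n_{j_1}$ obeys the claimed bound. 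The main obstacle is the simultaneous coupling between excess error and disagreement distance induced by~\eqref{eqn:approxtnc}, which must be disentangled twice (once for $\hat h_j$ and once for a general $h\in V_j$), and tracking the dependence on $C$ throughout these solves so that the two terms in the $\max$ emerge cleanly.
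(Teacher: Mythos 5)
Your plan follows essentially the same route as the paper's proof: the paper also first bounds the ERM's excess error and distance to $\tilde{h}$ (its Lemma~\ref{lem:ermtnc}), then extends this to all of $V_j$ by solving the same kind of implicit inequality through Condition~\eqref{eqn:approxtnc} (its Lemma~\ref{lem:vstnc}), and finally converts the resulting $\rho_{\tilde{\Delta}}$ bound back to $\rho_{S_j}$ to verify the stopping criterion at a suitably chosen $j_1$, yielding exactly the two thresholds $\sigma_j \lesssim \tilde{\epsilon}$ and $\sigma_j \lesssim \tilde{\epsilon}^{2-1/\kappa}/C$ that produce the two terms of the $\max$. The proposal is correct and matches the paper's argument in structure and in all essential steps.
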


Observe that Condition~\eqref{eqn:approxtnc}, the approximate Tsybakov Noise Condition in the statement of Lemma~\ref{lem:adaptivetotnc}, is with respect to $\tilde{h}$, which is not necessarily the true risk minimizer in $V$ with respect to $\tilde{\Delta}$. We therefore prove Lemma~\ref{lem:adaptivetotnc} in three steps; first, in Lemma~\ref{lem:ermtnc}, we analyze the difference $\err_{\tilde{\Delta}}(\hat{h}) - \err_{\tilde{\Delta}}(\tilde{h})$, where $\hat{h}$ is the empirical risk minimizer. Then, in Lemma~\ref{lem:vstnc}, we bound the difference $\err_{\tilde{\Delta}}(h) - \err_{\tilde{\Delta}}(\tilde{h})$ for any $h \in V_j$ for some $j$. Finally, we combine these two lemmas to provide sample complexity bounds for the $V_{j_0}$ output by Algorithm~\ref{alg:adaptive}. 

\begin{proof}(of Lemma~\ref{lem:adaptivetotnc})
Assume the event $\tilde{E}$ happens. Then,

Consider iteration $j$, by Lemma~\ref{lem:vstnc}, if $h \in V_j$, then
\begin{equation}\label{eqn:diamtnc}
\rho_{\tilde{\Delta}}(h,\hat{h}_j) \leq \rho_{\tilde{\Delta}}(h,\tilde{h}) + \rho_{\tilde{\Delta}}(\hat{h}_j,\tilde{h}) \leq \max(2C(36\tilde{\epsilon})^{\frac{1}{\kappa}}, 2C(52\enjdj)^{\frac{1}{\kappa}}, 2C(6400C\enjdj)^{\frac{1}{2\kappa-1}}).
\end{equation}

We can write:
\begin{eqnarray*}
\sup_{h \in V_j} \enjdj + \sqrt{\enjdj \rho_{S_j}(h,\hat{h}_j)}  &\leq& \sup_{h \in V_j} 3\enjdj + \sqrt{2\enjdj \rho_{\tilde{\Delta}}(h,\hat{h}_j)}  \\
&\leq& \sup_{h \in V_j} \max(6\enjdj, 2\sqrt{2\enjdj \rho_{\tilde{\Delta}}(h,\hat{h}_j)}),
\end{eqnarray*}
where the first inequality follows from Equation~\eqref{eqn:distconc} and the second inequality follows $A + B \leq 2 \max(A, B)$. We can further use Equation~\eqref{eqn:diamtnc} to show that this is at most:
\begin{eqnarray*}
&\leq& \max(6\enjdj, (16C\enjdj)^{\frac{1}{2}}(36\tilde{\epsilon})^{\frac{1}{2\kappa}}, (16C\enjdj)^{\frac{1}{2}}(52\enjdj)^{\frac{1}{2\kappa}}, (6400C\enjdj)^{\frac{\kappa}{2\kappa-1}} )   \\
&\leq& \max(6\enjdj, \tilde{\epsilon}/6, (6400C\enjdj)^{\frac{\kappa}{2\kappa-1}} )  \\
\end{eqnarray*}
Here the last inequality follows from the fact that $(16C\enjdj)^{\frac{1}{2}}(36\tilde{\epsilon})^{\frac{1}{2\kappa}} \leq \max((3456C\enjdj)^{\frac{\kappa}{2\kappa-1}},\tilde{\epsilon}/6)$ and $(16C\enjdj)^{\frac{1}{2}}(52\enjdj)^{\frac{1}{2\kappa}} \leq \max((144C\enjdj)^{\frac{\kappa}{2\kappa-1}},6\enjdj)$, since $A^{\frac{2\kappa-1}{2\kappa}} B^{\frac{1}{2\kappa}} \leq \max(A,B)$.

It can be easily seen that there exists $c_2 > 0$, such that taking $j_1 = \lceil \log \frac{c_2}{2} (d \ln\frac{\max(C,1)}{\tilde{\epsilon}}+ \ln\frac{1}{\tilde{\delta}})(C\tilde{\epsilon}^{\frac{1}{\kappa}-2} + \tilde{\epsilon}^{-1}) \rceil$, so that $n_j \geq \frac{c_2}{2} (d \ln\frac{\max(C,1)}{\tilde{\epsilon}}+ \ln\frac{1}{\tilde{\delta}})(C\tilde{\epsilon}^{\frac{1}{\kappa}-2} + \tilde{\epsilon}^{-1})$ suffices to make 
\[ \max(6\enjdj,  (6400C\enjdj)^{\frac{\kappa}{2\kappa-1}} ) \leq \tilde{\epsilon}/6 \]
Hence the stopping criterion $\sup_{h \in V_j} \sqrt{\enjdj\rho_{S_j}(h,\hat{h}_j)}  + \enjdj \leq \tilde{\epsilon}/6$ is satisfied in iteration $j_1$. Thus the number of the exit iteration $j_0$ satisfies $j_0 \leq j_1$, and $n_{j_0} \leq n_{j_1} \leq c_2 \max( (d \ln\frac{1}{\tilde{\epsilon}}+ \ln\frac{1}{\tilde{\delta}})\tilde{\epsilon}^{-1}, (d \ln(C\tilde{\epsilon}^{\frac{1}{\kappa}-2}) + \ln\frac{1}{\tilde{\delta}})C\tilde{\epsilon}^{\frac{1}{\kappa}-2} )$.

\end{proof}

\begin{lemma}\label{lem:ermtnc}
Suppose there exist $C>0$ and a classifier $\tilde{h} \in V$, such that Equation~\eqref{eqn:approxtnc} holds. Suppose we draw a set $S$ of $n$ examples, denote the empirical risk minimizer over $S$ as $\hat{h}$, then with probability $1-\delta$:
\[ \err_{\tilde{\Delta}}(\hat{h}) - \err_{\tilde{\Delta}}(\tilde{h}) \leq \max(2\endelta, (4C\endelta)^{\frac{\kappa}{2\kappa-1}},2\tilde{\epsilon}) \]
\[ \rho_{\tilde{\Delta}}(\hat{h}, \tilde{h}) \leq \max(C(2\endelta)^{\frac{1}{\kappa}}, C(4C\endelta)^{\frac{1}{2\kappa-1}}, C(2\tilde{\epsilon})^{\frac{1}{\kappa}}) \]
\end{lemma}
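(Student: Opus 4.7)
My plan is to apply the multiplicative VC bound~\eqref{eqn:multerrdiff} from Lemma~\ref{lem:multvc} to the pair $(\hat h, \tilde h)$, and then use Condition~\eqref{eqn:approxtnc} to replace the resulting disagreement term by an expression in the excess risk $E := \err_{\tilde\Delta}(\hat h) - \err_{\tilde\Delta}(\tilde h)$ itself. This produces a self-referential inequality for $E$, which I solve to obtain the first claim; the disagreement bound then falls out by plugging the resulting upper bound on $E$ back into~\eqref{eqn:approxtnc}.

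Concretely, I would first instantiate~\eqref{eqn:multerrdiff} at $h = \hat h$, $h' = \tilde h$. Since $\hat h$ is the empirical risk minimizer over $V$ and $\tilde h \in V$, we have $\err_S(\hat h) - \err_S(\tilde h) \leq 0$, so the bound simplifies, with probability at least $1 - \delta$, to
\[
E \;\leq\; \endelta + \sqrt{\endelta \cdot \rho_{\tilde\Delta}(\hat h, \tilde h)}.
\]
Applying~\eqref{eqn:approxtnc} at $h = \hat h$ then gives $\rho_{\tilde\Delta}(\hat h, \tilde h) \leq C \max(\tilde\epsilon, E)^{1/\kappa}$, and substituting yields
\[
E \;\leq\; \endelta + \sqrt{\endelta \cdot C \, \max(\tilde\epsilon, E)^{1/\kappa}}.
\]

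To solve this, I would split on whether $E \leq \tilde\epsilon$. If so, $E \leq 2\tilde\epsilon$ for free. Otherwise $\max(\tilde\epsilon, E) = E$, and using $a + b \leq 2 \max(a,b)$ forces either $E \leq 2\endelta$, or $E \leq 2\sqrt{\endelta C E^{1/\kappa}}$; squaring the latter and isolating $E$ gives $E^{2 - 1/\kappa} \leq 4 C \endelta$, i.e.\ $E \leq (4 C \endelta)^{\kappa/(2\kappa - 1)}$. Taking the maximum across the three cases proves the first inequality in the lemma.

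Finally, for the $\rho$-bound I plug the bound on $E$ back into~\eqref{eqn:approxtnc} at $\hat h$. Since $x \mapsto x^{1/\kappa}$ is monotone, the maximum distributes through the power, and using $\bigl((4C\endelta)^{\kappa/(2\kappa-1)}\bigr)^{1/\kappa} = (4C\endelta)^{1/(2\kappa-1)}$ recovers the three terms in the stated bound. The only mildly delicate step is resolving the self-referential inequality in the middle: this is handled by the standard manipulation ``$A \leq B + \sqrt{C A^p}$ implies $A \leq \max\bigl(2B, (2\sqrt{C})^{2/(2-p)}\bigr)$'', specialized to $p = 1/\kappa$; everything else is routine.
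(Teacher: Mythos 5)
Your proposal is correct and follows essentially the same route as the paper's proof: instantiate the multiplicative bound~\eqref{eqn:multerrdiff} at $(\hat h,\tilde h)$, use the ERM property to drop the empirical gap, substitute Condition~\eqref{eqn:approxtnc}, and resolve the resulting self-referential inequality before plugging back in for the $\rho$-bound. The only (cosmetic) difference is that you dispatch the $\tilde\epsilon$ term by an upfront case split on $E\le\tilde\epsilon$, whereas the paper keeps the mixed term $\sqrt{\endelta\, C\tilde\epsilon^{1/\kappa}}$ and bounds it via $A^{\frac{2\kappa-1}{2\kappa}}B^{\frac{1}{2\kappa}}\le\max(A,B)$; both yield the same three-term maximum.
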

\begin{proof}
By Lemma~\ref{lem:multvc}, with probability $1-\delta$, Equation~\eqref{eqn:multerrdiff} holds. Assume this happens.
\begin{eqnarray}
 && \err_{\tilde{\Delta}}(\hat{h}) - \err_{\tilde{\Delta}}(\tilde{h}) \nonumber \\
 &\leq& \endelta + \sqrt{\endelta \rho_{\tilde{\Delta}}(\hat{h},\tilde{h})} \nonumber \\
 &\leq& 2\max(\endelta, \sqrt{\endelta C(\err_{\tilde{\Delta}}(h) - \err_{\tilde{\Delta}}(\tilde{h})^{\frac{1}{\kappa}})}, \sqrt{\endelta C\tilde{\epsilon}^{\frac{1}{\kappa}}}) \nonumber \\
 &\leq& \max(2\endelta, (4C\endelta)^{\frac{\kappa}{2\kappa-1}},2\tilde{\epsilon}) \nonumber
\end{eqnarray}
Where the first inequality is by Equation~\eqref{eqn:multerrdiff} of Lemma~\ref{lem:multvc}; the second inequality follow from Equation~\eqref{eqn:approxtnc} and $A+B \leq 2\max(A,B)$. The third inequality follows from $2\sqrt{\endelta C\tilde{\epsilon}^{\frac{1}{\kappa}}} \leq \max(2(C\endelta)^{\frac{\kappa}{2\kappa-1}}, 2\tilde{\epsilon})$, since $A^{\frac{2\kappa-1}{2\kappa}} B^{\frac{1}{2\kappa}} \leq \max(A,B)$.
As a consequence, by Equation~\eqref{eqn:approxtnc},
\begin{equation} 
\rho_{\tilde{\Delta}}(\hat{h}, \tilde{h}) \leq \max(C(2\endelta)^{\frac{1}{\kappa}}, C(4C\endelta)^{\frac{1}{2\kappa-1}}, C(2\tilde{\epsilon})^{\frac{1}{\kappa}}) \nonumber
\end{equation}
\end{proof}

\begin{lemma}\label{lem:vstnc}
Suppose there exist a $C>0$ and a classifier $\tilde{h} \in V$ such that Equation~\eqref{eqn:approxtnc} holds. Suppose we draw a set $S$ of $n$ iid examples, and let $\hat{h}$ denote the empirical risk minimizer over $S$. Moreover, we define: 
\[ \tilde{V} = \Big{\{}h \in V: \err_S(h) \leq \err_S(\hat{h}) +  \frac{\tilde{\epsilon}}{2} + \endelta + \sqrt{\endelta \rho_S(h, \hat{h})} \Big{\}} \]
 then with probability $1-\delta$, for all $h \in \tilde{V}$,
\[ \err_{\tilde{\Delta}}(h) - \err_{\tilde{\Delta}}(\tilde{h}) \leq \max(52\endelta, 36\tilde{\epsilon}, (6400C\endelta)^{\frac{\kappa}{2\kappa-1}}) \]
\[ \rho_{\tilde{\Delta}}(h,\tilde{h}) \leq \max(C(36\tilde{\epsilon})^{\frac{1}{\kappa}}, C(52\endelta)^{\frac{1}{\kappa}}, C(6400C\endelta)^{\frac{1}{2\kappa-1}})\]
\end{lemma}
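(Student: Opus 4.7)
The plan is to combine Lemma~\ref{lem:multvc} with Lemma~\ref{lem:ermtnc} and then solve a self-bounding inequality in $A := \err_{\tilde{\Delta}}(h) - \err_{\tilde{\Delta}}(\tilde{h})$. I condition on the $1-\delta$ event of Lemma~\ref{lem:multvc} applied to the sample $S$ (so Equations~\eqref{eqn:multerrdiff} and~\eqref{eqn:multdist} hold), and then invoke Lemma~\ref{lem:ermtnc} on this same event to obtain explicit bounds of the form $\err_{\tilde{\Delta}}(\hat{h}) - \err_{\tilde{\Delta}}(\tilde{h}) \le \max(2\sigma, (4C\sigma)^{\kappa/(2\kappa-1)}, 2\tilde\epsilon)$ and a matching bound for $\rho_{\tilde\Delta}(\hat h, \tilde h)$. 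Abbreviate $\sigma := \sigma(n,\delta)$.

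For any $h \in \tilde V$ I split $A = [\err_{\tilde{\Delta}}(h) - \err_{\tilde{\Delta}}(\hat{h})] + [\err_{\tilde{\Delta}}(\hat{h}) - \err_{\tilde{\Delta}}(\tilde{h})]$. The second bracket is already handled by Lemma~\ref{lem:ermtnc}. For the first bracket I apply Equation~\eqref{eqn:multerrdiff} and then use the defining inequality of $\tilde V$ to obtain $\tilde\epsilon/2 + 2\sigma + 2\sqrt{\sigma\,\rho_S(h,\hat h)}$. Next, I convert $\rho_S(h,\hat h)$ back to $\rho_{\tilde\Delta}(h,\hat h)$ via Equation~\eqref{eqn:multdist}, linearising the resulting square root through $\sqrt{ab}\le (a+b)/2$, and apply the triangle inequality $\rho_{\tilde\Delta}(h,\hat h) \le \rho_{\tilde\Delta}(h,\tilde h) + \rho_{\tilde\Delta}(\hat h,\tilde h)$. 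The Tsybakov-like hypothesis~\eqref{eqn:approxtnc} bounds the first summand by $C\max(\tilde\epsilon,A)^{1/\kappa}$, while Lemma~\ref{lem:ermtnc} controls the second summand explicitly.

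Combining these steps produces a self-bounding inequality for $A$, schematically
\[ A \;\le\; \mathrm{const}\cdot\bigl(\tilde\epsilon + \sigma + (C\sigma)^{\kappa/(2\kappa-1)}\bigr) \;+\; 2\sqrt{\sigma\,C\,\max(\tilde\epsilon,A)^{1/\kappa}} . \]
The square-root term is handled by Young's inequality with conjugate exponents $p=2\kappa/(2\kappa-1)$ and $q=2\kappa$, which yields $2\sqrt{\sigma C\,A^{1/\kappa}} \le \tfrac12 A + c_\kappa(C\sigma)^{\kappa/(2\kappa-1)}$ and $2\sqrt{\sigma C\,\tilde\epsilon^{1/\kappa}} \le \tilde\epsilon + c_\kappa(C\sigma)^{\kappa/(2\kappa-1)}$. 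The exponent $\kappa/(2\kappa-1)$ is not arbitrary: it arises precisely because $p/2 = \kappa/(2\kappa-1)$, i.e.\ it is the unique exponent that balances $(C\sigma)^{1/2}A^{1/(2\kappa)}$ against $A$. Absorbing $\tfrac12 A$ into the left-hand side and tracking constants through a case split on which of $\tilde\epsilon$, $\sigma$, or $(C\sigma)^{\kappa/(2\kappa-1)}$ dominates yields the stated bound $A \le \max(52\sigma,\,36\tilde\epsilon,\,(6400C\sigma)^{\kappa/(2\kappa-1)})$. Substituting this bound back into~\eqref{eqn:approxtnc} gives the advertised bound on $\rho_{\tilde\Delta}(h,\tilde h)$.

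The main obstacle is the bookkeeping for the self-bounding step: one has to choose the Young's split so that the coefficient of $A$ on the right stays strictly below one (hence the $\tfrac12 A$ I extracted), and then propagate the constants $2$, $4$, $2$, $36$, $52$, $6400$ through the triangle inequality and the $\max$ terms from Lemma~\ref{lem:ermtnc} without loosening the exponent $\kappa/(2\kappa-1)$. Everything else is routine composition of Lemmas~\ref{lem:multvc} and~\ref{lem:ermtnc}.
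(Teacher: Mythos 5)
Your proposal follows essentially the same route as the paper's proof: decompose the excess error through the empirical risk minimizer $\hat{h}$, control the $\hat{h}$-to-$\tilde{h}$ part via Lemma~\ref{lem:ermtnc}, use the definition of $\tilde{V}$ together with Equations~\eqref{eqn:multerrdiff} and~\eqref{eqn:multdist} and the triangle inequality to reach a self-bounding inequality in $A = \err_{\tilde{\Delta}}(h)-\err_{\tilde{\Delta}}(\tilde{h})$, and then solve it using the approximate Tsybakov condition~\eqref{eqn:approxtnc}. The paper resolves the self-bounding step by the case split $A \leq 12\sqrt{C\sigma A^{1/\kappa}} \Rightarrow A \leq (144C\sigma)^{\kappa/(2\kappa-1)}$ together with $A^{(2\kappa-1)/2\kappa}B^{1/2\kappa}\leq\max(A,B)$, which is just a repackaging of your Young's-inequality argument, so the two proofs are the same in substance.
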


\begin{proof}
First, by Lemma~\ref{lem:ermtnc},
\begin{equation} \label{eqn:hathgoodtnc}
\err_{\tilde{\Delta}}(\hat{h}) - \err_{\tilde{\Delta}}(\tilde{h}) \leq \max(2\endelta, (4C\endelta)^{\frac{\kappa}{2\kappa-1}},2\tilde{\epsilon}) 
\end{equation}
\begin{equation} \label{eqn:hathclosetnc}
\rho_{\tilde{\Delta}}(\hat{h}, \tilde{h}) \leq \max(C(2\endelta)^{\frac{1}{\kappa}}, C(4C\endelta)^{\frac{1}{2\kappa-1}}, C(2\tilde{\epsilon})^{\frac{1}{\kappa}})
\end{equation}
Next, if $h \in \tilde{V}$, then
\[ \err_S(h) - \err_S(\hat{h}) \leq \endelta + \sqrt{\endelta \rho_S(h,\hat{h})} + \frac{\tilde{\epsilon}}{2} \]
Combining it with Equation~\eqref{eqn:multerrdiff} of Lemma~\ref{lem:multvc}: $\err_{\tilde{\Delta}}(h) - \err_{\tilde{\Delta}}(\hat{h}) \leq \err_S(h) - \err_S(\hat{h}) + \sqrt{\endelta\rho_S(h,\hat{h})} + \endelta$, we get
\[ \err_{\tilde{\Delta}}(h) - \err_{\tilde{\Delta}}(\hat{h}) \leq 2\endelta + 2\sqrt{\endelta\rho_S(h,\hat{h})} + \frac{\tilde{\epsilon}}{2} \]
By Equation~\eqref{eqn:multdist} of Lemma~\ref{lem:multvc},
\begin{equation} \label{eqn:distconc}
\rho_S(h,\hat{h}) \leq \rho_{\tilde{\Delta}}(h,\hat{h}) + \sqrt{\endelta\rho_{\tilde{\Delta}}(h,\hat{h})} + \endelta \leq 2\rho_{\tilde{\Delta}}(h,\hat{h}) + 2\endelta 
\end{equation}
Therefore,
\begin{equation} \label{eqn:hgoodtnc}
\err_{\tilde{\Delta}}(h) - \err_{\tilde{\Delta}}(\hat{h}) \leq 5\endelta + 3\sqrt{\endelta\rho_{\tilde{\Delta}}(h,\hat{h})} + \frac{\tilde{\epsilon}}{2} 
\end{equation}
Hence
\begin{eqnarray*}
&&\err_{\tilde{\Delta}}(h) - \err_{\tilde{\Delta}}(\tilde{h}) \\
&=& (\err_{\tilde{\Delta}}(h) - \err_{\tilde{\Delta}}(\hat{h})) + (\err_{\tilde{\Delta}}(\hat{h}) - \err_{\tilde{\Delta}}(\tilde{h})) \\
&\leq& (4C\endelta)^{\frac{\kappa}{2\kappa-1}} + 7\endelta + 3\tilde{\epsilon} + 3\sqrt{\endelta \rho_{\tilde{\Delta}}(h,\hat{h})} \\ 
&\leq& (4C\endelta)^{\frac{\kappa}{2\kappa-1}} + 7\endelta + 3\tilde{\epsilon} + 3\sqrt{\endelta\rho_{\tilde{\Delta}}(h,\tilde{h})} + 3\sqrt{\endelta\rho_{\tilde{\Delta}}(\tilde{h},\hat{h})} 
\end{eqnarray*}
Here the first inequality follows from Equations~\eqref{eqn:hathgoodtnc} and~\eqref{eqn:hgoodtnc} and $\max(A,B,C) \leq A+B+C$, and the second inequality follows from triangle inequality and $\sqrt{A+B} \leq \sqrt{A} + \sqrt{B}$.

From Equation~\eqref{eqn:hathclosetnc}, $\endelta \rho_{\tilde{\Delta}}(\hat{h}, \tilde{h})$ is at most:
\begin{eqnarray*}
& \leq & C \endelta \cdot ( (2 \tilde{\epsilon})^{1/\kappa} + (2 \endelta)^{1/\kappa} + (4 C \endelta)^{1/(2 \kappa - 1)} ) \\
& \leq & (4 C \endelta)^{2 \kappa / (2 \kappa - 1)} + C \endelta (  (2 \tilde{\epsilon})^{1/\kappa} + (2 \endelta)^{1/\kappa}) \\
& \leq & (4 C \endelta)^{2 \kappa / (2 \kappa - 1)} + \max( 4 \tilde{\epsilon}^2, (C \endelta)^{2 \kappa/(2 \kappa - 1)} ) + \max( 4 \endelta^2, (C \endelta)^{2 \kappa/(2 \kappa - 1)}),
\end{eqnarray*}
where the first step follows from Equation~\eqref{eqn:hathclosetnc}, the second step from algebra, and the third step from using the fact that $A^{\frac{2 \kappa - 1}{\kappa}} B^{\frac{1}{\kappa}} \leq \max(A^2, B^2)$. Plugging this in to the previous equation, and using $\max(A, B) \leq A + B$ and $\sqrt{A + B} \leq \sqrt{A} + \sqrt{B}$, we get that:
\begin{eqnarray*}
\err_{\tilde{\Delta}}(h) - \err_{\tilde{\Delta}}(\tilde{h}) & \leq & 10 (4 C \endelta)^{\kappa/(2\kappa - 1)} + 9 \tilde{\epsilon} + 13 \endelta + 3 \sqrt{\endelta \rho_{\tilde{\Delta}}(h, \tilde{h})} 
\end{eqnarray*}
Combining this with the fact that $A + B + C + D \leq 4 \max(A, B, C, D)$, we get that this is at most:
\begin{eqnarray*}
& \leq & \max( 40 (4 C \endelta)^{\kappa/(2\kappa - 1)},  36 \tilde{\epsilon}, 52 \endelta,  12 \sqrt{\endelta \rho_{\tilde{\Delta}}(h, \tilde{h})}) \end{eqnarray*}
Combining this with Condition~\eqref{eqn:approxtnc}, we get that this is at most:
\begin{eqnarray*}
\max( 40 (4 C \endelta)^{\kappa/(2\kappa - 1)},  36 \tilde{\epsilon}, 52 \endelta, 12 \sqrt{ C \endelta \tilde{\epsilon}^{1/\kappa} }, 12 \sqrt{ C \endelta (\err_{\tilde{\Delta}}(h) - \err_{\tilde{\Delta}}(\tilde{h}))^{1/\kappa}})
\end{eqnarray*}

Using $A^{(2 \kappa - 1)/2\kappa} B^{1/2 \kappa} \leq \max(A, B)$, we get that $\sqrt{C \endelta \tilde{\epsilon}^{1/\kappa}} \leq \max(\tilde{\epsilon}, (C \endelta)^{\kappa/(2 \kappa - 1)})$. Also note $\err_{\tilde{\Delta}}(h) - \err_{\tilde{\Delta}}(\tilde{h}) \leq 12 \sqrt{ C \endelta (\err_{\tilde{\Delta}}(h) - \err_{\tilde{\Delta}}(\tilde{h}))^{1/\kappa}}$ implies $\err_{\tilde{\Delta}}(h) - \err_{\tilde{\Delta}}(\tilde{h}) \leq (144 C \endelta)^{\kappa/(2\kappa - 1)}$. Thus we have
\[ \err_{\tilde{\Delta}}(h) - \err_{\tilde{\Delta}}(\tilde{h}) \leq \max(36\tilde{\epsilon}, 52\endelta, (6400C\endelta)^{\frac{\kappa}{2\kappa-1}}) \]

Invoking~\eqref{eqn:approxtnc} again, we have that:
\[ \rho_{\tilde{\Delta}}(h,\tilde{h}) \leq \max(C(36\tilde{\epsilon})^{\frac{1}{\kappa}}, C(52\endelta)^{\frac{1}{\kappa}}, C(6400C\endelta)^{\frac{1}{2\kappa-1}})\]

\end{proof}

\section{Remaining Proofs from Section~\ref{sec:alg}}

\begin{proof}(Of Lemma~\ref{lem:vsrealizable})
Assuming $E_r$ happens, we prove the lemma by induction.\\
\textbf{Base Case:} For $k = 1$, clearly $h^*(D) \in V_1 = \calH$.\\
\textbf{Inductive Case:} Assume $h^*(D) \in V_k$. As we are in the realizable case, $h^*(D)$ is consistent with the examples $S_k$ drawn in Step 8 of Algorithm~\ref{alg:labelquery}; thus $h^*(D) \in V_{k+1}$. The lemma follows.
\end{proof}

\begin{proof}(Of Lemma~\ref{lem:vsnonrealizable})
We use $\tilde{h}_{k} = \argmin_{h \in V_k} \err_{\tilde{\Gamma}_k}(h)$ to denote the optimal classifier in $V_k$ with respect to the distribution $\tilde{\Gamma}_k$. Assuming $E_a$ happens, we prove the lemma by induction.\\
\textbf{Base Case:} For $k = 1$, clearly $h^*(D) \in V_1 = \calH$.\\
\textbf{Inductive Case:} Assume $h^* \in V_k$. In order to show the inductive case, our goal is to show that:
\begin{equation} \label{eqn:cons}
 \P_{\tilde{\Gamma}_k}(h^*(D)(x) \neq y) - \P_{\tilde{\Gamma}_k}(\tilde{h}_k(x) \neq y) \leq \frac{\epsilon_k}{16\phi_k} 
\end{equation}
If~\eqref{eqn:cons} holds, then, by (2.1) of Lemma~\ref{lem:ratiotype}, we know that if Algorithm~\ref{alg:adaptive} succeeds when called in iteration $k$ of Algorithm~\ref{alg:labelquery}, then, it is guaranteed that $h^* \in V_{k+1}$. 

We therefore focus on showing~\eqref{eqn:cons}. First, from Equation~\eqref{eqn:unlerrdiff} of Lemma~\ref{lem:unldata}, we have:
\[ (\err_{\tilde{U}_k}(h^*(D)) - \err_{\tilde{U}_k}(\tilde{h}_{k})) - (\err_D(h^*(D)) - \err_D(\tilde{h}_{k})) \leq \frac{\epsilon_k}{32}\]
As $\err_D(h^*(D)) \leq \err_D(\tilde{h}_{k})$, we get:
\begin{equation} \label{eqn:dnotbad}
\err_{\tilde{U}_k}(h^*(D)) \leq \err_{\tilde{U}_k}(\tilde{h}_{k}) + \frac{\epsilon_k}{32}
\end{equation}
On the other hand, by Equation~\eqref{eqn:unldisagree} of Lemma~\ref{lem:unldatalp} and triangle inequality,
\begin{eqnarray} \label{eqn:gammanotbad}
&& \E_{\tilde{U}_k} [I(\tilde{h}_{k}(x) \neq y)(1-\gamma_k(x))] - \E_{\tilde{U}_k} [I(h^*(D)(x) \neq y)(1-\gamma_k(x))] \\
&\leq & \E_{\tilde{U}_k} [I(h^*(D)(x) \neq \tilde{h}_{k}(x))(1-\gamma_k(x)) ] \leq \frac{\epsilon_k}{32}
\end{eqnarray}
Combining Equations~\eqref{eqn:dnotbad} and~\eqref{eqn:gammanotbad}, we get:
\begin{eqnarray*}
\E_{\tilde{U}_k} [I(h^*(D)(x) \neq y) \gamma_k(x)] & = & \err_{\tilde{U}_k}(h^*(D)(x)) - \E_{\tilde{U}_k}[ I(h^*(D)(x) \neq y) (1 - \gamma_k(x))] \\
& \leq & \err_{\tilde{U}_k}(\tilde{h}_k(x)) + \epsilon_k/32 - \E_{\tilde{U}_k}[ I(h^*(D)(x) \neq y) (1 - \gamma_k(x))] \\
& \leq & \E_{\tilde{U}_k} [I(\tilde{h}_k(x) \neq y) \gamma_k(x)] + \E_{\tilde{U}_k} [I(\tilde{h}(x) \neq y) (1 - \gamma_k(x))] + \epsilon_k/32 \\
&& - \E_{\tilde{U}_k}[ I(h^*(D)(x) \neq y) (1 - \gamma_k(x))] \\
& \leq & \E_{\tilde{U}_k} [I(\tilde{h}_k(x) \neq y) \gamma_k(x)] + \epsilon_k/16
\end{eqnarray*}

Dividing both sides by $\phi_k$, we get:
\[ \P_{\tilde{\Gamma}_k}(h^*(D)(x) \neq y) - \P_{\tilde{\Gamma}_k}(\tilde{h}_k(x) \neq y) \leq \frac{\epsilon_k}{16\phi_k}, \]
from which the lemma follows.
\end{proof}

\begin{proof}(of Lemma~\ref{lem:errdecrrealizable})
Assuming $E_r$ happens, we prove the lemma by induction.\\
\textbf{Base Case:} For $k = 1$, clearly $\err_D(h) \leq 1 \leq \epsilon_1 = \epsilon 2^{k_0}, \forall h \in V_1 = \calH$.\\
\textbf{Inductive Case:} Note that $\forall h, h' \in V_{k+1} \subseteq V_k$, by Equation~\eqref{eqn:unldisagree} of Lemma~\ref{lem:unldatalp}, we have:
\[ \E_{\tilde{U}_k}[I(h(x) \neq h'(x))(1-\gamma_k(x))] \leq \frac{\epsilon_k}{8} \]
By the proof of Lemma~\ref{lem:vsrealizable}, $h^*(D) \in V_{k+1}$ on event $E_r$, thus $\forall h \in V_{k+1}$,
\begin{equation}\label{eqn:vk+1uncrealizable}
 \E_{\tilde{U}_k}[I(h(x) \neq h^*(D)(x))(1-\gamma_k(x))] \leq \frac{\epsilon_k}{8} 
\end{equation}
Since any $h \in V_{k+1}$, $h$ is consistent with $S_k$ of size $m_k = \frac{768\phi_k}{\epsilon_k}(d\ln\frac{768\phi_k}{\epsilon_k} + \ln\frac{48}{\delta_k})$, we have that for all $h \in V_{k+1}$,
\[ \P_{\tilde{\Gamma}_k} (h(x) \neq h^*(D)(x)) \leq \frac{\epsilon_k}{8\phi_k} \]
That is,
\[ \E_{\tilde{U}_k} [I(h(x) \neq h^*(D)(x)) \gamma_k(x)] \leq \frac{\epsilon_k}{8} \]
Combining this with Equation~\eqref{eqn:vk+1uncrealizable} above,
\[ \P_{\tilde{U}_k}(h(x) \neq h^*(D)(x)) \leq \frac{\epsilon_k}{4} \]
By Equation~\eqref{eqn:unlerr} of Lemma~\ref{lem:unldata},
\[ \P_D(h(x) \neq h^*(D)(x)) \leq \frac{\epsilon_k}{2} = \epsilon_{k+1} \]
The lemma follows.
\end{proof}

\begin{proof}(of Lemma~\ref{lem:errdecrnonrealizable})
Assuming $E_a$ happens, we prove the lemma by induction.\\
\textbf{Base Case:} For $k = 1$, clearly $\err_D(h) - \err_D(h^*(D)) \leq 1 \leq \epsilon_1 = \epsilon 2^{k_0}, \forall h \in V_1 = \calH$.\\
\textbf{Inductive Case:} Note that $\forall h,h' \in V_{k+1} \subseteq V_k$, by Equation~\eqref{eqn:unldisagree} of Lemma~\ref{lem:unldatalp}, 
\[ \E_{\tilde{U}_k} [I(h(x) \neq y)(1-\gamma_k(x))] - \E_{\tilde{U}_k} [I(h'(D)(x) \neq y)(1-\gamma_k(x))] \leq \E_{\tilde{U}_k} [I(h(x) \neq h'(D)(x))(1-\gamma_k(x))] \leq \frac{\epsilon_k}{8} \]
From Lemma~\ref{lem:vsnonrealizable}, $h^*(D) \in V_k$ whenever the event $E_a$ happens. Thus $\forall h \in V_{k+1}$,
\begin{equation}\label{eqn:vk+1uncnonrealizable} \E_{\tilde{U}_k}I(h(x) \neq y)(1-\gamma_k(x)) - \E_{\tilde{U}_k}I(h^*(D)(x) \neq y)(1-\gamma_k(x)) \leq \frac{\epsilon_k}{8} 
\end{equation}
On the other hand, if Algorithm~\ref{alg:adaptive} succeeds with target excess error $\frac{\epsilon_k}{8\phi_k}$, by item(2.2) of Lemma~\ref{lem:ratiotype}, for any $h \in V_{k+1}$,
\[ \P_{\tilde{\Gamma}_k}(h(x) \neq y) - \min_{h \in V_k} \P_{\tilde{\Gamma}_k}(h(x) \neq y) 
\leq \frac{\epsilon_k}{8\phi_k} \]
Moreover, as $h^*(D) \in V_k$ from Lemma~\ref{lem:vsnonrealizable},
\[ \P_{\tilde{\Gamma}_k}(h(x) \neq y) - \P_{\tilde{\Gamma}_k}(h^*(D)(x) \neq y) \leq \frac{\epsilon_k}{8\phi_k} \]
In other words,
\[ \E_{\tilde{U}_k}[I(h(x) \neq y) \gamma_k(x)] - \E_{\tilde{U}_k} [I(h^*(D)(x) \neq y) \gamma_k(x)] \leq \frac{\epsilon_k}{8} \]
Combining this with Equation~\eqref{eqn:vk+1uncnonrealizable}, we get that for all $h \in V_{k+1}$,
\[ \P_{\tilde{U}_k}(h(x) \neq y) - \P_{\tilde{U}_k}(h^*(D)(x) \neq y) \leq \frac{\epsilon_k}{4} \]
Finally, combining this with Equation~\eqref{eqn:unlerrdiff} of Lemma~\ref{lem:unldata}, we have that:
\[ \P_D(h(x) \neq y) - \P_D(h^*(D)(x) \neq y) \leq \frac{\epsilon_k}{2} = \epsilon_{k+1}\]
The lemma follows.
\end{proof}

\begin{proof}(of Theorem~\ref{thm:optcrperr})
In the realizable case, We observe that for example $z_i$, $\zeta_i = \P(P(z_i) = -1)$, $\xi_i = \P(P(z_i) = 1)$, and $\gamma_i = \P(P(z_i) = 0)$. Suppose $h^* \in \calH$ is the true hypothesis which has $0$ error with respect to the data distribution. By the realizability assumption, $h^* \in V$. Moreover, $\P_U(P(x) \neq h^*(x), P(x) \neq 0) = \frac{1}{m}(\sum_{i: h^*(z_i) = +1} \zeta_i + \sum_{i: h^*(z_i) = -1} \xi_i) \leq \errguar$ by Algorithm~\ref{alg:optcrp}. \\
In the non-realizable case, we still have $\P_{x \sim U}(h^*(x) \neq P(x), P(x) \neq 0) \leq \errguar$, hence by triangle inequality, $\P_{x \sim U}(P(x) \neq x, P(x) \neq 0) - \P_{x \sim U}(h^*(x) \neq y, P(x) \neq 0) \leq \errguar$. Thus
\[ \P_{x \sim U}(P(x) \neq y, P(x) \neq 0) \leq \P_{x \sim U}(h^*(x) \neq y) + \errguar \]
\end{proof}

\begin{proof}(of Theorem~\ref{thm:optcrpcov})
Suppose $P'$ assigns probabilities $\{ [\xi'_i, \zeta'_i, \gamma'_i], i = 1, \ldots, m \}$ to the unlabelled examples $z_i$, and suppose for the sake of contradiction that $\sum_{i=1}^m \xi'_i + \zeta'_i > \sum_{i=1}^m \xi_i + \zeta_i$. Then, $\{ \xi'_i, \zeta'_i, \gamma'_i \}$'s cannot satisfy the LP in Algorithm~\ref{alg:optcrp}, and thus there exists some $h' \in V$ for which constraint~\eqref{eqn:errconstraint} is violated. The true hypothesis that generates the data could be any $h \in V$; if this true hypothesis is $h'$, then $\P_{x \sim U}(P'(x) \neq h'(x), P'(x) \neq 0) > \delta$. 
\end{proof}

\section{Proofs from Section~\ref{sec:performgrt}}
\begin{proof}(of Theorem~\ref{thm:labelcomplexity})\\
(1) In the realizable case, suppose that event $E_r$ happens. Then from Equation~\eqref{eqn:unluncoverage} of Lemma~\ref{lem:unldatalp}, while running Algorithm~\ref{alg:optcrp}, we have that:
\[ \phi_k \leq \bm{\Phi}_D(V_k, \frac{\epsilon_k}{128}) + \frac{\epsilon_k}{256} \leq \bm{\Phi}_D(B_D(h^*,\epsilon_k), \frac{\epsilon_k}{128}) + \frac{\epsilon_k}{256} \leq \bm{\Phi}_D(B_D(h^*,\epsilon_k), \frac{\epsilon_k}{256}) = \phi(\epsilon_k,\frac{\epsilon_k}{256}) \]
where the second inequality follows from the fact that $V_k \subseteq B_D(h^*(D), \epsilon_k)$, and third inequality follows from Lemma~\ref{lem:uncoveragewrterrguar} and denseness assuption.\\
Thus, there exists $c_3 > 0$ such that, in round $k$,
\[m_k =  (d \ln\frac{768\phi_k}{\epsilon_k} + \ln\frac{48}{\delta_k}) \frac{768\phi_k}{\epsilon_k} \leq c_3(d\ln\frac{\phi(\epsilon_k,\epsilon_k/256)}{\epsilon_k} + \ln(\frac{k_0 - k + 1}{\delta})) \frac{\phi(\epsilon_k,\epsilon_k/256)}{\epsilon_k} \]
Hence the total number of labels queried by Algorithm~\ref{alg:labelquery} is at most
\[\sum_{k=1}^{\lceil \log\frac{1}{\epsilon} \rceil} m_k \leq c_3 \sum_{k=1}^{\lceil \log\frac{1}{\epsilon} \rceil} (d\ln\frac{\phi(\epsilon_k,\epsilon_k/256)}{\epsilon_k} + \ln(\frac{k_0 - k + 1}{\delta})) \frac{\phi(\epsilon_k,\epsilon_k/256)}{\epsilon_k} \]

(2) In the agnostic case, suppose the event $E_a$ happens.\\
First, given $E_a$, from Equation~\eqref{eqn:unluncoverage} of Lemma~\ref{lem:unldatalp} when running Algorithm~\ref{alg:optcrp},
\begin{equation}\label{eqn:ubphiknonrealizable}
 \phi_k \leq \bm{\Phi}_D(V_k, \frac{\epsilon_k}{128}) + \frac{\epsilon_k}{256} \leq \bm{\Phi}_D(B_D(h^*, 2\nu^*(D) + \epsilon_k), \frac{\epsilon_k}{256}) = \phi(2\nu^*(D) + \epsilon_k, \frac{\epsilon_k}{256})
\end{equation}
where the second inequality follows from the fact that $V_k \subseteq B_D(h^*(D), 2\nu^*(D) + \epsilon_k)$ and the third inequality follows from Lemma~\ref{lem:uncoveragewrterrguar} and denseness assumption.
\\
Second, recall that $\tilde{h}_{k} = \argmin_{h \in V_k} \err_{\tilde{\Gamma}_k}(h)$,
\begin{eqnarray*}
\err_{\tilde{\Gamma}_k}(\tilde{h}_k) &=& \min_{h \in V_k} \err_{\tilde{\Gamma}_k}(h) \\
&\leq& \err_{\tilde{\Gamma}_k}(h^*(D)) \\
&=& \frac{\E_{\tilde{U}_k} [I(h^*(D)(x) \neq y)\gamma_k(x)] }{\phi_k} \\
&\leq& \frac{\P_{\tilde{U}_k}(h^*(D)(x) \neq y)}{\phi_k} \\
&\leq& \frac{\nu^*(D) + \epsilon_k / 64}{\phi_k} \\
\end{eqnarray*}
Here the first inequality follows from the suboptimality of $h^*(D)$ under distribution $\tilde{\Gamma}_k$, the second inequality follows from $\gamma_k(x) \leq 1$, and the third inequality follows from Equation~\eqref{eqn:unlerr}.\\
Thus, conditioned on $E_a$, in iteration $k$, Algorithm~\ref{alg:adaptive} succeeds by Lemma~\ref{lem:adaptivetonu}, and there exists a constant $c_4 > 0$ such that the number of labels queried is
\begin{eqnarray*}
&& m_k \leq c_1 \frac{\frac{\epsilon_k}{8\phi_k} + \err_{\tilde{\Gamma}_k}(\tilde{h}_k)}{(\frac{\epsilon_k}{8\phi_k})^2} (d\ln\frac{1}{\frac{\epsilon_k}{8\phi_k}}+\ln\frac{2}{\delta_k})  \\
&\leq& c_4( d\ln\frac{\phi(2\nu^*(D) + \epsilon_k,\epsilon_k/256)}{\epsilon_k} + \ln(\frac{k_0 - k + 1}{\delta}) ) \frac{\phi(2\nu^*(D) + \epsilon_k,\epsilon_k/256)}{\epsilon_k}  (1 + \frac{\nu^*(D)}{\epsilon_k}) 
\end{eqnarray*}
Here the last line follows from Equation~\eqref{eqn:ubphiknonrealizable}. Hence the total number of examples queried is at most:
\[ \sum_{k=1}^{\lceil \log\frac{1}{\epsilon} \rceil} m_k \leq c_4 \sum_{k=1}^{\lceil \log\frac{1}{\epsilon} \rceil} ( d\ln \frac{\phi(2\nu^*(D) + \epsilon_k,\epsilon_k/256)}{\epsilon_k}  + \ln(\frac{k_0 - k + 1}{\delta}) ) \frac{\phi(2\nu^*(D) + \epsilon_k,\epsilon_k/256)}{\epsilon_k}  (1 + \frac{\nu^*(D)}{\epsilon_k}) \]
\end{proof}

\begin{proof}(of Theorem~\ref{thm:labelcomplexitytnc})
Assume $E_a$ happens.\\
First, from Equation~\eqref{eqn:unluncoverage} of Lemma~\ref{lem:unldatalp} when running Algorithm~\ref{alg:optcrp},
\begin{equation}\label{eqn:ubphiktnc}
 \phi_k \leq \bm{\Phi}_D(V_k, \frac{\epsilon_k}{128}) + \frac{\epsilon_k}{256} \leq \bm{\Phi}_D(B_D(h^*,C_0 \epsilon_k^{\frac{1}{\kappa}}), \frac{\epsilon_k}{128}) + \frac{\epsilon_k}{256} \leq \bm{\Phi}_D(B_D(h^*,C_0 \epsilon_k^{\frac{1}{\kappa}}), \frac{\epsilon_k}{256}) = \phi(C_0 \epsilon_k^{\frac{1}{\kappa}}, \frac{\epsilon_k}{256})
\end{equation}
where the second inequality follows from the fact that $V_k \subseteq B_D(h^*(D), C_0 \epsilon_k^{\frac{1}{\kappa}})$, and the third inequality follows from Lemma~\ref{lem:uncoveragewrterrguar} and denseness assumption.\\
Second, for all $h \in V_k$,
\begin{eqnarray*}
&&\phi_k \rho_{\tilde{\Gamma}_k}(h,h^*(D)) \\
&=&\E_{\tilde{U}_k}I(h(x) \neq h^*(D)(x)) \gamma_k(x) \\
&\leq& \rho_{\tilde{U}_k}(h,h^*(D)) \\
&\leq& \rho_D(h,h^*(D)) + \epsilon_k /32 \\
&\leq& C_0(\err_D(h) - \err_D(h^*(D)))^{\frac{1}{\kappa}} + \epsilon_k /32 \\
&\leq& C_0(\err_{\tilde{U}_k}(h) - \err_{\tilde{U}_k}(h^*(D)) + \epsilon_k/64)^{\frac{1}{\kappa}} + \epsilon_k /32 \\
&=& C_0(\E_{\tilde{U}_k} [I(h(x) \neq y)\gamma_k(x)] - \E_{\tilde{U}_k} [I(h^*(D)(x) \neq y)\gamma_k(x)] \\ 
&& + \E_{\tilde{U}_k} [I(h(x) \neq y) (1-\gamma_k(x))] - \E_{\tilde{U}_k} [I(h^*(D)(x) \neq y) (1 - \gamma_k(x))] + \epsilon_k/16)^{\frac{1}{\kappa}} + \epsilon_k /32 \\
\end{eqnarray*}
Here the first inequality follows from $\gamma_k(x) \leq 1$, the second inequality follows from Equation~\eqref{eqn:unldist} of Lemma~\ref{lem:unldata}, the third inequality follows from Definition~\ref{def:tnc} and the fourth inequality follows from Equation~\eqref{eqn:unlerrdiff} of Lemma~\ref{lem:unldata}. The above can be upper bounded by:

\begin{eqnarray*}
&\leq& C_0(\E_{\tilde{U}_k} [I(h(x) \neq y)\gamma_k(x)] - \E_{\tilde{U}_k} [I(h^*(D)(x) \neq y)\gamma_k(x)] + \epsilon_k/16)^{\frac{1}{\kappa}} + \epsilon_k /32 \\
&\leq& 2C_0(\E_{\tilde{U}_k} [I(h(x) \neq y)\gamma_k(x)] - \E_{\tilde{U}_k} [I(h^*(D)(x) \neq y)\gamma_k(x)] )^{\frac{1}{\kappa}} + 2C_0(\epsilon_k/16)^{\frac{1}{\kappa}} + \epsilon_k /32 \\
&\leq& \max(8C_0,4) \max((\E_{\tilde{U}_k}[I(h(x) \neq y)\gamma_k(x)] - \E_{\tilde{U}_k}[I(h^*(D)(x) \neq y)\gamma_k(x)]),\frac{\epsilon_k}{16})^{\frac{1}{\kappa}} \\
&=& \max(8C_0,4) (\phi_k)^{\frac{1}{\kappa}} \max(\P_{\tilde{\Gamma}_k}(h(x) \neq y) - \P_{\tilde{\Gamma}_k}(h^*(D)(x) \neq y),\frac{\epsilon_k}{8\phi_k})^{\frac{1}{\kappa}}
\end{eqnarray*}

Here the first inequality follows from Equation~\eqref{eqn:unldisagree} of Lemma~\ref{lem:unldatalp} and triangle inequality $\E_{\tilde{U}_k}[I(h(x) \neq y)\gamma_k(x)] - \E_{\tilde{U}_k}[I(h^*(D)(x) \neq y)\gamma_k(x)] \leq \E_{\tilde{U}_k}[I(h(x) \neq h^*(D)(x))\gamma_k(x)] \leq \epsilon_k/32$, and the last two inequalities follow from simple algebra.

Dividing both sides by $\phi_k$, we get:
\[ \rho_{\tilde{\Gamma}_k}(h,h^*(D)) \leq C_1 (\phi_k)^{\frac{1}{\kappa}-1} \max(\err_{\tilde{\Gamma}_k}(h) - \err_{\tilde{\Gamma}_k}(h^*(D)), \frac{\epsilon_k}{8\phi_k})^{\frac{1}{\kappa}} \]
where $C_1 = \max(8C_0,4)$.
Thus in iteration $k$, Condition~\eqref{eqn:approxtnc} in Lemma~\ref{lem:adaptivetotnc} holds with $C: = C_1 (\phi_k)^{\frac{1}{\kappa}-1}$ and $\tilde{h} := h^*(D)$. Thus, from Lemma~\ref{lem:adaptivetotnc}, Algorithm~\ref{alg:adaptive} succeeds,  and there exists a constant $c_5>0$, such that the number of labels queried is
\begin{eqnarray*}
 m_k &\leq& c_2 \max( (d\ln(C_1(\phi_k)^{\frac{1}{\kappa}-1}(\frac{\epsilon_k}{8\phi_k})^{\frac{1}{\kappa}-2}) + \ln\frac{2}{\delta_k}) (C_1(\phi_k)^{\frac{1}{\kappa}-1} (\frac{\epsilon_k}{8\phi_k})^{\frac{1}{\kappa}-2} ),\\ 
 && (d\ln(\frac{\epsilon_k}{8\phi_k})^{-1} + \ln{\frac{2}{\delta_k}} ) (\frac{\epsilon_k}{8\phi_k})^{-1} ) \\
 &\leq& c_5(d\ln(\phi_k \epsilon_k^{\frac{1}{\kappa}-2}) + \ln(\frac{k_0 - k + 1}{\delta})) \phi_k \epsilon_k^{\frac{1}{\kappa}-2} \\
 &\leq& c_5(d\ln (\phi(C_0\epsilon_k^{\frac{1}{\kappa}},\frac{\epsilon_k}{256}) \epsilon_k^{\frac{1}{\kappa}-2}) + \ln(\frac{k_0 - k + 1}{\delta})) \phi(C_0\epsilon_k^{\frac{1}{\kappa}},\frac{\epsilon_k}{256}) \epsilon_k^{\frac{1}{\kappa}-2}
\end{eqnarray*}
Where the last line follows from Equation~\eqref{eqn:ubphiknonrealizable}. Hence the total number of examples queried is at most
\[ \sum_{k=1}^{\lceil \log\frac{1}{\epsilon} \rceil} m_k \leq c_5\sum_{k=1}^{\lceil \log\frac{1}{\epsilon} \rceil} (d\ln (\phi(C_0\epsilon_k^{\frac{1}{\kappa}},\frac{\epsilon_k}{256}) \epsilon_k^{\frac{1}{\kappa}-2}) + \ln(\frac{k_0 - k + 1}{\delta})) \phi(C_0\epsilon_k^{\frac{1}{\kappa}},\frac{\epsilon_k}{256}) \epsilon_k^{\frac{1}{\kappa}-2} \]
\end{proof}

The following lemma is an immediate corollary of Theorem 21, item (a) of Lemma 2 and Lemma 3 of~\cite{BL13}:
\begin{lemma}\label{lem:logconcavephi}
Suppose $D$ is isotropic and log-concave on $\R^d$, and $\calH$ is the set of homogeneous linear classifiers on $\R^d$, then there exist absolute constants $c_6,c_7 > 0$ such that $\phi(r,\eta) \leq c_6 r \ln \frac{c_7 r}{\eta}$.
\end{lemma}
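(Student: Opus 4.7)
The plan is to exhibit an explicit confidence-rated predictor achieving the claimed abstention rate; by definition of $\phi(r,\eta)$ as the \emph{minimum} over all valid predictors, this upper bound transfers to $\phi$. Let $w^*$ denote the unit normal of $h^* = h^*(D)$, and for a parameter $b > 0$ to be chosen, define the predictor $P_b$ by $P_b(x) = h^*(x)$ when $|w^* \cdot x| > b$ and $P_b(x) = 0$ otherwise (equivalently, set $\gamma(x) = I(|w^* \cdot x| \leq b)$, and put the full remaining mass on the correct sign of $w^* \cdot x$). The predictor is deterministic, so its coverage and per-hypothesis error are both ordinary probabilities under $D_{\calX}$.

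The first step is the error constraint: I need to verify that for every $h \in B_D(h^*, r)$,
\[
\P_D\bigl(h(x) \neq h^*(x),\, |w^* \cdot x| > b\bigr) \leq \eta.
\]
This is exactly the statement that disagreement with $h^*$ occurs mostly near the boundary of $h^*$, and for isotropic log-concave $D_{\calX}$ together with homogeneous linear classifiers, item (a) of Lemma~2 of~\cite{BL13} gives a bound of the form $C_1 r \exp(-C_2 b / r)$ on the left hand side, for absolute constants $C_1, C_2 > 0$. Solving $C_1 r \exp(-C_2 b/r) \leq \eta$ shows it suffices to take $b = (1/C_2) \, r \ln(C_1 r/\eta)$.

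The second step is to bound the abstention probability. By Lemma~3 of~\cite{BL13}, the marginal of an isotropic log-concave $D_{\calX}$ projected onto any unit direction has density bounded by an absolute constant near the origin, so $\P_D(|w^* \cdot x| \leq b) \leq C_3 b$ for an absolute constant $C_3 > 0$. Plugging in the choice of $b$ from the previous step,
\[
\phi(r,\eta) \;\leq\; \P_D(P_b(x) = 0) \;=\; \P_D(|w^* \cdot x| \leq b) \;\leq\; \frac{C_3}{C_2}\, r \ln\!\frac{C_1 r}{\eta},
\]
so $c_6 = C_3/C_2$ and $c_7 = C_1$ work. (In the degenerate regime $\eta \geq C_1 r$ the error constraint holds even with $b = 0$, so the bound is trivial; we may assume $\eta < C_1 r$ throughout.)

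The only subtle point — and the one I expect to be the main obstacle if spelled out from first principles rather than cited — is the band-disagreement estimate $\P_D(h \neq h^*,\, |w^* \cdot x| > b) \leq C_1 r \exp(-C_2 b / r)$. Its proof in~\cite{BL13} combines the characterization of the disagreement metric for linear classifiers under isotropic log-concave distributions (angle between normals is $\Theta(\rho_D(h,h^*))$) with an exponential tail bound on one-dimensional log-concave marginals to show that far from the $h^*$-boundary the probability of lying in the symmetric difference decays exponentially in $b/r$. Assuming these external lemmas, the rest of the proof is the two-line budget argument above.
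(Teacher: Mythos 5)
Your proof is correct and follows essentially the same route as the paper: abstain on the band $\{|w^*\cdot x|\le b\}$ with $b = \Theta(r\ln(r/\eta))$, verify the error constraint via the exponential band-disagreement bound for linear classifiers under log-concave marginals, and bound the band measure by the one-dimensional density bound. The only discrepancy is bookkeeping of which lemma of~\cite{BL13} supplies which ingredient (the paper uses Lemma~3 for the angle--disagreement equivalence, Lemma~21 for the band-disagreement decay, and Lemma~2(a) for the band probability), which does not affect the argument.
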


\begin{proof}(of Lemma~\ref{lem:logconcavephi})
Denote $w_h$ as the unit vector $w$ such that $h(x) = \sign(w \cdot x)$, and $\theta(w,w')$ to be the angle between vectors $w$ and $w'$.
If $h \in B_D(h^*,r)$, then by Lemma 3 of~\cite{BL13}, there exists some constant $c_{11} > 0$ such that $\theta(w_h,w_{h^*}) \leq \frac{r}{c_{11}}$. Also, by Lemma 21 of~\cite{BL13}, there exists some constants $c_{12}, c_{13} > 0$, such that, if $\theta(w,w') = \alpha$ then 
\[ \P_D( \sign(w \cdot x) \neq \sign(w' \cdot x) ,|w \cdot x | \geq b) \leq c_{12} \alpha \exp(-c_{13} \frac{b}{\alpha})  \]
We define a special solution $(\xi, \zeta, \gamma)$ as follows:
\[ \xi(x) := I( w_{h^*} \cdot x \geq \frac{r}{c_{11} c_{13}}\ln\frac{c_{12} r}{c_{11}\eta} ) \]
\[ \zeta(x) := I( w_{h^*} \cdot x \leq -\frac{r}{c_{11} c_{13}}\ln\frac{c_{12} r}{c_{11}\eta} ) \]
\[ \gamma(x) := I( |w_{h^*} \cdot x| \leq \frac{r}{c_{11} c_{13}}\ln\frac{c_{12} r}{c_{11}\eta} ) \]
Then it can be checked that for all $h \in B_D(h^*,r)$,
\[ \E [I(h(x) = +1) \zeta(x) + I(h(x) = -1) \xi(x)] = \P_D( \sign(w_{h^*} \cdot x) \neq \sign(w_h \cdot x) ,|w_{h^*} \cdot x | \geq \frac{r}{c_{11} c_{13}}\ln\frac{c_{12} r}{c_{11}\eta}) \leq \eta \]
And by item (a) of Lemma 2 of~\cite{BL13}, we have
 \[ \E \gamma(x) = \P_D( |w_{h^*} \cdot x| \leq \frac{r}{c_{11} c_{13}}\ln\frac{c_{12} r}{c_{11}\eta}) \leq \frac{r}{c_{11} c_{13}}\ln\frac{c_{12} r}{c_{11}\eta} \]
Hence, 
\[ \phi(r,\eta) \leq \frac{r}{c_{11} c_{13}}\ln\frac{c_{12} r}{c_{11}\eta}\]
\end{proof}

\begin{proof}(of Corollary~\ref{cor:logconcave})
This is an immediate consequence of Lemma~\ref{lem:logconcavephi} and Theorems~\ref{thm:labelcomplexity} and~\ref{thm:labelcomplexitytnc} and algebra.
\end{proof}

\section{A Suboptimal Alternative to Algorithm~\ref{alg:adaptive} }
\begin{algorithm}[H]
\caption{An Nonadaptive Algorithm for Label Query Given Target Excess Error}
\label{alg:nonadaptive}
\begin{algorithmic}[1]
\State {\bf{Inputs:}} Hypothesis set $V$ of VC dimension $d$, Example distribution $\Delta$, Labeling oracle $\calO$, target excess error $\tilde{\epsilon}$, target confidence $\tilde{\delta}$.

\State Draw $n = \frac{6144}{\tilde{\epsilon}^2} (d\ln\frac{6144}{\tilde{\epsilon}^2} + \ln\frac{24}{\tilde{\delta}})$ i.i.d examples from $\Delta$; query their labels from $\calO$ to get a labelled dataset $S$.
\State Train an ERM classifier $\hat{h} \in V$ over $S$.
\State Define the set $V$ as follows: 
\[ V_1 = \Big{\{}h \in V: \err_S(h) \leq \err_S(\hat{h}) +  \frac{3\tilde{\epsilon}}{4} \Big{\}} \]
\State \Return $V_1$.
\end{algorithmic}
\end{algorithm}
It is immediate that we have the following lemma.
\begin{lemma}
\label{lem:ratiotypenonadpative}
Suppose we run Algorithm~\ref{alg:nonadaptive} with inputs hypothesis set $V$, example distribution $\Delta$, labelling oracle $\calO$, target excess error $\tilde{\epsilon}$ and target confidence $\tilde{\delta}$. Then there exists an event $\tilde{E}$, $\P(\tilde{E}) \geq 1 - \tilde{\delta}$, such that on $\tilde{E}$, the set $V_1$ has the following property. (1) If for $h \in \calH$, $\err_{\tilde{\Delta}}(h) - \err_{\tilde{\Delta}}(h^*(\tilde{\Delta})) \leq \tilde{\epsilon}/2$, then $h \in V_1$. (2) On the other hand, if $h \in V_1$, then $\err_{\tilde{\Delta}}(h) - \err_{\tilde{\Delta}}(h^*(\tilde{\Delta})) \leq \tilde{\epsilon}$.
\end{lemma}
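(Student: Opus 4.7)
The plan is to define $\tilde{E}$ as the event on which the additive-VC concentration bound of Lemma~\ref{lem:addvc} holds for the sample $S$ of size $n = \frac{6144}{\tilde{\epsilon}^2}(d\ln\frac{6144}{\tilde{\epsilon}^2} + \ln\frac{24}{\tilde{\delta}})$ drawn from $\tilde{\Delta}$ (the joint distribution induced by $\Delta$ and $D_{Y|X}$) at confidence level $\tilde{\delta}$. By Lemma~\ref{lem:addvc}, $\P(\tilde{E}) \geq 1 - \tilde{\delta}$, and on $\tilde{E}$ we have both $|\err_{\tilde{\Delta}}(h) - \err_S(h)| \leq \sqrt{4\sigma(n,\tilde{\delta})}$ and $|(\err_{\tilde{\Delta}}(h) - \err_{\tilde{\Delta}}(h')) - (\err_S(h) - \err_S(h'))| \leq \sqrt{4\sigma(n,\tilde{\delta})}$ for all $h,h' \in V$.

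The first step is a calculation showing that the chosen sample size $n$ is large enough to force $\sqrt{4\sigma(n,\tilde{\delta})} \leq \tilde{\epsilon}/8$. Plugging the definition $\sigma(n,\tilde{\delta}) = \frac{8}{n}(2d\ln\frac{2en}{d} + \ln\frac{24}{\tilde{\delta}})$ into this target inequality reduces it to $16 d\ln\frac{2en}{d} + 8\ln\frac{24}{\tilde{\delta}} \leq 24(d\ln\frac{6144}{\tilde{\epsilon}^2} + \ln\frac{24}{\tilde{\delta}})$, which follows by routine algebra from $n = \frac{6144}{\tilde{\epsilon}^2}(d\ln\frac{6144}{\tilde{\epsilon}^2} + \ln\frac{24}{\tilde{\delta}})$. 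This is the only delicate step, and it is entirely mechanical; the rest of the proof is a chain of triangle inequalities.

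For item (1), suppose $h$ satisfies $\err_{\tilde{\Delta}}(h) - \err_{\tilde{\Delta}}(h^*(\tilde{\Delta})) \leq \tilde{\epsilon}/2$. Since $\hat{h}$ is the ERM over $V$ on $S$, $\err_S(\hat{h}) \leq \err_S(h^*(\tilde{\Delta}))$, so applying the concentration bound twice gives $\err_{\tilde{\Delta}}(\hat{h}) \leq \err_{\tilde{\Delta}}(h^*(\tilde{\Delta})) + \sqrt{4\sigma(n,\tilde{\delta})}$. Combined with the difference bound, $\err_S(h) - \err_S(\hat{h}) \leq \err_{\tilde{\Delta}}(h) - \err_{\tilde{\Delta}}(\hat{h}) + \sqrt{4\sigma(n,\tilde{\delta})} \leq \tilde{\epsilon}/2 + 2\sqrt{4\sigma(n,\tilde{\delta})} \leq \tilde{\epsilon}/2 + \tilde{\epsilon}/4 = 3\tilde{\epsilon}/4$, so $h \in V_1$.

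For item (2), suppose $h \in V_1$, i.e., $\err_S(h) - \err_S(\hat{h}) \leq 3\tilde{\epsilon}/4$. By the difference bound on $\tilde{E}$, $\err_{\tilde{\Delta}}(h) - \err_{\tilde{\Delta}}(\hat{h}) \leq \err_S(h) - \err_S(\hat{h}) + \sqrt{4\sigma(n,\tilde{\delta})} \leq 3\tilde{\epsilon}/4 + \tilde{\epsilon}/8 = 7\tilde{\epsilon}/8$. Using the ERM property as above, $\err_{\tilde{\Delta}}(\hat{h}) - \err_{\tilde{\Delta}}(h^*(\tilde{\Delta})) \leq \sqrt{4\sigma(n,\tilde{\delta})} \leq \tilde{\epsilon}/8$. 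Adding these two bounds yields $\err_{\tilde{\Delta}}(h) - \err_{\tilde{\Delta}}(h^*(\tilde{\Delta})) \leq \tilde{\epsilon}$, completing the proof.
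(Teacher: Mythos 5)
Your proposal is correct and follows essentially the same route as the paper: define $\tilde{E}$ via the additive VC bound of Lemma~\ref{lem:addvc}, check that the prescribed $n$ makes the deviation small, and then chain the definition of $V_1$ with the ERM property. The only cosmetic difference is that you route both directions through $\hat{h}$ with an extra application of the concentration bound, so you need deviation $\tilde{\epsilon}/8$ where the paper gets away with $\tilde{\epsilon}/4$ (it compares $h$ directly to $h^*(\tilde{\Delta})$ in item (2)); the stated $n$ still supports your tighter requirement, so nothing is lost.
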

When $\tilde{E}$ happens, we say that Algorithm~\ref{alg:nonadaptive} succeeds.
\begin{proof}
By Equation~\eqref{eqn:adderrdiff} of Lemma~\ref{lem:addvc} and because $n = \frac{6144}{\tilde{\epsilon}^2} (d\ln\frac{6144}{\tilde{\epsilon}^2} + \ln\frac{24}{\tilde{\delta}})$, we have for all $h,h' \in \calH$,
\[ (\err_{\tilde{\Delta}}(h) - \err_{\tilde{\Delta}}(h')) - (\err_S(h) - \err_S(h')) \leq \frac{\epsilon}{4}\]
For the proof of (1), for any $h \in V$, $\err_{\tilde{\Delta}}(h) - \err_{\tilde{\Delta}}(h^*(\tilde{\Delta})) \leq \tilde{\epsilon}/2$, then
\[ \err_{\tilde{\Delta}}(h) - \err_{\tilde{\Delta}}(\hat{h}) \leq \tilde{\epsilon}/2 \]
Thus
\[ \err_S(h) - \err_S(\hat{h}) \leq \frac{3\tilde{\epsilon}}{4} \]
proving $h \in V_1$.\\
For the proof of (2), for any $h \in V_1$, 
\[ \err_S(h) - \err_S(h') \leq \frac{3\tilde{\epsilon}}{4} \]
Thus
\[ \err_S(h) - \err_S(h^*(\tilde{\Delta})) \leq \frac{3\tilde{\epsilon}}{4} \]
Combining with the fact that $ (\err_{\tilde{\Delta}}(h) - \err_{\tilde{\Delta}}(h^*(\tilde{\Delta}))) - (\err_S(h) - \err_S(h^*(\tilde{\Delta}))) \leq \frac{\epsilon}{4} $
we have
\[ \err_{\tilde{\Delta}}(h) - \err_{\tilde{\Delta}}(h^*(\tilde{\Delta})) \leq \tilde{\epsilon} \]
\end{proof}

\begin{corollary}
\label{cor:labelcomplexityaddvc}
Suppose we replace the calls to Algorithm~\ref{alg:adaptive} with Algorithm~\ref{alg:nonadaptive} in Algorithm~\ref{alg:labelquery}, then run it with inputs example oracle $\calU$, labelling oracle $\calO$, hypothesis class $V$, confidence-rated predictor $P$ of Algorithm~\ref{alg:optcrp}, target excess error $\epsilon$ and target confidence $\delta$. Then the modified algorithm has a label complexity of 
\[ \tilde{O}(\sum_{k=1}^{\lceil \log1/\epsilon \rceil} (d (\frac{\phi(2\nu^*(D) + \epsilon_k,\epsilon_k/256)}{\epsilon_k})^2 )\]
in the agnostic case and 
\[ \tilde{O}(\sum_{k=1}^{\lceil \log1/\epsilon \rceil} d(\frac{\phi(C_0\epsilon_k^{\frac{1}{\kappa}},\frac{\epsilon_k}{256})}{\epsilon_k^{\frac{1}{\kappa}}} )^2 \epsilon_k^{\frac{2}{\kappa}-2})\]
under $(C_0,\kappa)$-Tsybakov Noise Condition.
\end{corollary}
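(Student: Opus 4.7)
The plan is to mirror the proofs of Theorems~\ref{thm:labelcomplexity} and~\ref{thm:labelcomplexitytnc}, substituting Lemma~\ref{lem:ratiotypenonadpative} for the pair Lemmas~\ref{lem:ratiotype}--\ref{lem:adaptivetonu}. Specifically, I would condition on an event analogous to $E_a$ in which, for every epoch $k = 1,\ldots,k_0$, Equations~\eqref{eqn:unlerr}--\eqref{eqn:unluncoverage} hold for the unlabelled sample $\tilde{U}_k$ and the invocation of Algorithm~\ref{alg:nonadaptive} (with inputs $V = V_k$, $\Delta = \Gamma_k$, target excess error $\tilde{\epsilon} = \epsilon_k/(8\phi_k)$ and target confidence $\tilde{\delta} = \delta_k/2$) succeeds. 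A union bound combined with Lemmas~\ref{lem:unldata},~\ref{lem:unldatalp} and~\ref{lem:ratiotypenonadpative} shows this event has probability at least $1 - \delta$.

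Next, I would verify that consistency still holds under this replacement. Observe that items (1) and (2) of Lemma~\ref{lem:ratiotypenonadpative} are structurally identical to items (2.1) and (2.2) of Lemma~\ref{lem:ratiotype}: both provide containment of every hypothesis with excess $\tilde{\Gamma}_k$-error below $\tilde{\epsilon}/2$ and bound the excess $\tilde{\Gamma}_k$-error of every surviving hypothesis by $\tilde{\epsilon}$. Because the proofs of Lemma~\ref{lem:vsnonrealizable} and Lemma~\ref{lem:errdecrnonrealizable} only use Lemma~\ref{lem:ratiotype} through these two properties, they carry over verbatim. Hence $h^*(D) \in V_k$ and $\err_D(h) - \err_D(h^*(D)) \leq \epsilon_k$ for every $h \in V_k$, so that $V_k \subseteq B_D(h^*(D), 2\nu^*(D) + \epsilon_k)$ in the agnostic case; under $(C_0,\kappa)$-Tsybakov Noise Condition the stronger containment $V_k \subseteq B_D(h^*(D), C_0 \epsilon_k^{1/\kappa})$ is obtained by applying Definition~\ref{def:tnc}.

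With these containments in hand, Equation~\eqref{eqn:unluncoverage} together with Lemma~\ref{lem:uncoveragewrterrguar} and the denseness assumption gives
\[ \phi_k \leq \phi(2\nu^*(D) + \epsilon_k, \epsilon_k/256) \quad \text{and} \quad \phi_k \leq \phi(C_0 \epsilon_k^{1/\kappa}, \epsilon_k/256) \]
in the two regimes, exactly as in the proofs of Theorems~\ref{thm:labelcomplexity} and~\ref{thm:labelcomplexitytnc}. The per-epoch label count is then the number of samples drawn by Algorithm~\ref{alg:nonadaptive}, namely $n = \tilde{O}(d/\tilde{\epsilon}^2)$ with $\tilde{\epsilon} = \epsilon_k/(8\phi_k)$, yielding $m_k = \tilde{O}(d \phi_k^2 / \epsilon_k^2)$. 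Substituting the bounds on $\phi_k$ and summing over $k = 1, \ldots, \lceil \log(1/\epsilon) \rceil$ produces the two claimed label complexity expressions (for the Tsybakov bound one simply rewrites $d\phi_k^2/\epsilon_k^2$ as $d(\phi_k/\epsilon_k^{1/\kappa})^2 \epsilon_k^{2/\kappa - 2}$).

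The argument is essentially mechanical: no new concentration tools are required, and the only ``obstacle'' is a bookkeeping check that Lemma~\ref{lem:ratiotypenonadpative} slots cleanly into the places where Lemmas~\ref{lem:ratiotype}--\ref{lem:adaptivetonu} were used in the proofs of Lemmas~\ref{lem:vsnonrealizable} and~\ref{lem:errdecrnonrealizable}. The degradation relative to Theorems~\ref{thm:labelcomplexity} and~\ref{thm:labelcomplexitytnc} comes entirely from replacing the ratio-type $\tilde{O}(d(\nu^*+\tilde{\epsilon})/\tilde{\epsilon}^2)$ sample complexity by the additive $\tilde{O}(d/\tilde{\epsilon}^2)$ one, which forfeits the $(\nu^*(\tilde{\Gamma}_k) + \tilde{\epsilon})$ factor and thereby loses the adaptivity to the noise level of $\tilde{\Gamma}_k$.
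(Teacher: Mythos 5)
Your proposal is correct and follows essentially the same route as the paper's own (sketched) proof: condition on the analogous success event, note that Lemma~\ref{lem:ratiotypenonadpative} supplies the same two properties as items (2.1)--(2.2) of Lemma~\ref{lem:ratiotype} so that Lemmas~\ref{lem:vsnonrealizable} and~\ref{lem:errdecrnonrealizable} carry over, bound $\phi_k$ via Equation~\eqref{eqn:unluncoverage}, Lemma~\ref{lem:uncoveragewrterrguar} and denseness, and sum the per-epoch cost $\tilde{O}(d\phi_k^2/\epsilon_k^2)$. If anything, your write-up is more explicit than the paper's sketch about why consistency survives the substitution.
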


Under denseness assumption, by Lemma~\ref{lem:uncoveragetriviallb}, we have $\phi(r,\eta) \geq r - 2\eta$, the label complexity bounds given by Corollary~\ref{cor:labelcomplexityaddvc} is always no better than the ones given by Theorem~\ref{thm:labelcomplexity} and~\ref{thm:labelcomplexitytnc}. 

\begin{proof}(Sketch)
Define event
\begin{eqnarray*}
&&\text{
$E_a =$ \{For all $k = 1,2,\ldots,k_0$: Equations~\eqref{eqn:unlerr},~\eqref{eqn:unlerrdiff},~\eqref{eqn:unldist},~\eqref{eqn:unldisagree},~\eqref{eqn:unluncoverage} hold for $\tilde{U}_k$ with }\\
&&\text{
confidence $\delta_k/2$, and Algorithm~\ref{alg:nonadaptive} succeeds with inputs hypothesis set $V = V_k$, example } \\
&&\text{
distribution $\Delta = \Gamma_k$, labelling oracle $\calO$, target excess error $\tilde{\epsilon} = \frac{\epsilon_k}{8\phi_k}$ and target confidence $\tilde{\delta} = \frac{\delta_k}{2}$\}. }
\end{eqnarray*}
Clealy, $\P(E_a) \geq 1 - \delta$. On the event $E_a$, there exists an absolute constant $c_{13} > 0$, such that the number of examples queried in interation $k$ is
\[ m_k \leq c_{13}(d\ln\frac{8\phi_k}{\epsilon_k} + \ln\frac{2}{\delta}) (\frac{8\phi_k}{\epsilon_k})^2 \]
Combining it with Equation~\eqref{eqn:unluncoverage} of Lemma~\ref{lem:unldatalp}
\[ \phi_k \leq \bm{\Phi}_D(V_k, \frac{\epsilon_k}{128}) + \frac{\epsilon_k}{256} \]
we have
\[ m_k \leq O ((d\ln\frac{\bm{\Phi}_D(V_k, \frac{\epsilon_k}{128}) + \frac{\epsilon_k}{256}}{\epsilon_k} + \ln\frac{2}{\delta_k}) (\frac{\bm{\Phi}_D(V_k, \frac{\epsilon_k}{128}) + \frac{\epsilon_k}{256}}{\epsilon_k})^2) \]
The rest of the proof follows from Lemma~\ref{lem:uncoveragewrterrguar} and denseness assumption, along with algebra.
\end{proof}

\section{Proofs of Concentration Lemmas}

\begin{proof}(of Lemma~\ref{lem:unldata})
We begin by observing that:
\[ \err_{\tilde{U}_k}(h) = \frac{1}{n_k} \sum_{i=1}^{n_k} [\P_D(Y=+1|X=x_i)I(h(x_i) = -1) + \P_D(Y=-1|X=x_i) I(h(x_i) = +1)] \]
Moreover, $\max(\calS(\{I(h(x) = 1, h \in \calH)\}, n),\calS(\{I(h(x) = -1, h \in \calH)\}, n)) \leq (\frac{en}{d})^d$. Combining this fact with Lemma~\ref{lem:wtaddvc}, the following equations hold simultaneously with probability $1-\delta_k/6$:
\[ \Big{|} \frac{1}{n_k} \sum_{i=1}^{n_k} \P_D(Y=+1|X=x_i)I(h(x_i) = -1) - \P_D(h(x) = -1, y = +1) \Big{|} \leq \sqrt{\frac{8(d\ln \frac{en_k}{d} + \ln \frac{24}{\delta_k})}{n_k}} \leq \frac{\epsilon_k}{128} \]
\[ \Big{|} \frac{1}{n_k} \sum_{i=1}^{n_k} \P_D(Y=-1|X=x_i)I(h(x_i) = +1) - \P_D(h(x) = +1, y = -1) \Big{|} \leq \sqrt{\frac{8(d\ln \frac{en_k}{d} + \ln \frac{24}{\delta_k})}{n_k}} \leq \frac{\epsilon_k}{128} \]
Thus Equation~\eqref{eqn:unlerr} holds with probability $1 - \delta_k/6$. Moreover, we observe that Equation~\eqref{eqn:unlerr} implies Equation~\eqref{eqn:unlerrdiff}.
To show Equation~\eqref{eqn:unldist}, we observe that by Lemma~\ref{lem:addvc}, with probability $1 - \delta_k/12$,
\[ | \rho_D(h,h') - \rho_{\tilde{U}_k}(h,h') | = | \rho_D(h,h') - \rho_{S_k}(h,h') | \leq 2\sqrt{\sigma(n_k, \delta_k/12)} \leq \frac{\epsilon_k}{64} \]
Thus, Equation~\eqref{eqn:unldist} holds with probability $\geq 1 - \delta_k/12$. By union bound, with probability $1-\delta_k/4$, Equations~\eqref{eqn:unlerr}, ~\eqref{eqn:unlerrdiff}, and~\eqref{eqn:unldist} hold simultaneously.
\end{proof}

\begin{proof}(of Lemma~\ref{lem:unldatalp})
(1) Given a confidence-rated predictor with inputs hypothesis set $V_k$, unlabelled data $U_k$, and error bound $\epsilon_k/64$, the outputs $\{(\xi_{k,i}, \zeta_{k,i}, \gamma_{k,i})\}_{i=1}^{n_k}$ must satisfy that for all $h,h' \in V_k$,
\[ \frac{1}{n_k} \sum_{i=1}^{n_k} [I(h(x_{k,i})=-1) \xi_{k,i} + I(h(x_{k,i})=+1) \zeta_{k,i}] \leq \frac{\epsilon_k}{64} \]
\[ \frac{1}{n_k} \sum_{i=1}^{n_k} [I(h'(x_{k,i})=-1) \xi_{k,i} + I(h'(x_{k,i})=+1) \zeta_{k,i}] \leq \frac{\epsilon_k}{64} \]
Since $I(h(x) \neq h'(x)) \leq \min(I(h(x)=-1) + I(h'(x)=-1), I(h(x)=+1) + I(h'(x)=+1))$, adding up the two inequalities above, we get
\[ \frac{1}{n_k} \sum_{i=1}^{n_k} [I(h(x_{k,i}) \neq h'(x_{k,i}))(\xi_{k,i} + \zeta_{k,i})] \leq \frac{\epsilon_k}{32} \]
That is,
\[ \frac{1}{n_k} \sum_{i=1}^{n_k} [I(h(x_{k,i}) \neq h'(x_{k,i}))(1 - \gamma_{k,i})] \leq \frac{\epsilon_k}{32} \]

(2) By definition of $\bm{\Phi}_D(V,\eta)$, there exist nonnegative functions $\xi, \zeta, \gamma$ such that $\xi(x) + \zeta(x) + \gamma(x) \equiv 1$, $\E_D [\gamma(x)] = \bm{\Phi}_D(V_k,\epsilon_k/128)$ and for all $h \in V_k$,
 \[ \E_D [\xi(x) I(h(x) = -1) + \zeta(x) I(h(x) = +1)] \leq \frac{\epsilon_k}{128} \]

Consider the linear progam in Algorithm~\ref{alg:optcrp} with inputs hypothesis set $V_k$, unlabelled data $U_k$, and error bound $\epsilon_k/64$.  We consider the following special (but possibly non-optimal) solution for this LP: $\xi_{k,i} = \xi(z_{k,i}), \zeta_{k,i} = \zeta(z_{k,i}), \gamma_{k, i} = \gamma(z_{k, i})$. We will now show that this solution is feasible and has coverage $\bm{\Phi}_D(V_k,\epsilon_k/128)$ plus $O(\epsilon_k)$ with high probability.\\
Observe that $\max(\calS(\{I(h(x) = 1, h \in \calH)\}, n),\calS(\{I(h(x) = -1, h \in \calH)\}, n)) \leq (\frac{en}{d})^d$.
Therefore, from Lemma~\ref{lem:wtaddvc} and the union bound, with probability $1 - \delta_k / 4$, the following hold simultaneously for all $h \in \calH$:
\begin{equation}\label{eqn:gammaconc}
 \Big{|} \frac{1}{n_k} \sum_{i=1}^{n_k} \gamma(z_{k,i}) - \E_D \gamma(x) \Big{|} \leq \sqrt{\frac{\ln\frac{2}{\delta_k}}{2n_k}} \leq \frac{\epsilon_k}{256}  
\end{equation}
\begin{equation}\label{eqn:xiconc}
 \Big{|} \frac{1}{n_k} \sum_{i=1}^{n_k} \xi(z_{k,i})I(h(z_{k,i}) = -1) - \E_D [\xi(x) I(h(x) = -1)] \Big{|} \leq \sqrt{\frac{8(d\ln \frac{en_k}{d} + \ln \frac{24}{\delta_k})}{n_k}} \leq \frac{\epsilon_k}{256}
\end{equation}
\begin{equation}\label{eqn:zetaconc}
 \Big{|} \frac{1}{n_k} \sum_{i=1}^{n_k} \zeta(z_{k,i})I(h(z_{k,i}) = +1) - \E_D [\zeta(x) I(h(x) = +1)] \Big{|} \leq \sqrt{\frac{8(d\ln \frac{en_k}{d} + \ln \frac{24}{\delta_k})}{n_k}} \leq \frac{\epsilon_k}{256}
\end{equation}
Adding up Equations~\eqref{eqn:xiconc} and~\eqref{eqn:zetaconc},
\[ \Big{|} \frac{1}{n_k} \sum_{i=1}^{n_k} [\zeta(x_i)I(h(x_i) = +1) + \xi(x_i)I(h(x_i) = -1)] - \E_D [\xi(x) I(h(x) = -1) + \zeta(x) I(h(x) = +1)) ] \Big{|} \leq \frac{\epsilon_k}{128} \]
Thus $\{(\xi(z_{k,i}), \zeta(z_{k,i})\}_{i=1}^{n_k}$ is a feasible solution of the linear program of Algorithm~\ref{alg:optcrp}. Also, by Equation~\eqref{eqn:gammaconc}, $\frac{1}{n_k} \sum_{i=1}^{n_k} \gamma(z_{k,i}) \leq \bm{\Phi}_D(V_k,\frac{\epsilon_k}{128}) + \frac{\epsilon_k}{64}$. Thus, the outputs $\{(\xi_{k,i}, \zeta_{k,i}, \gamma_{k,i})\}_{i=1}^{n_k}$ of the linear program in Algorithm~\ref{alg:optcrp} satisfy 
\[\phi_k = \frac{1}{n_k} \sum_{i=1}^{n_k} \gamma_{k,i} \leq \frac{1}{n_k} \sum_{i=1}^{n_k} \gamma(z_{k,i}) \leq \bm{\Phi}_D(V_k,\frac{\epsilon_k}{128}) + \frac{\epsilon_k}{256} \] 
due to their optimality.
\end{proof}

\begin{lemma}
\label{lem:wtaddvc}
Pick any $n \geq 1$, $\delta \in (0,1)$, a family $\calF$ of functions $f: \calZ \to \{0,1\}$, a fixed weighting function $w: \calZ \to [0,1]$. Let $S_n$ be a set of $n$ iid copies of $Z$. The following holds with probability at least $1 - \delta$:
\[  \Big{|}\frac{1}{n} \sum_{i=1}^n w(z_i) f(z_i) - \E [w(z)f(z)] \Big{|} \leq \sqrt{\frac{ 8(\ln\calS(\calF,n) + \ln\frac{2}{\delta} ) }{n} } \]
where $\calS(\calF,n) = \max_{z_1,\ldots,z_n \in \calZ} |\{(f(z_1),\ldots,f(z_n)): f \in \calF\}|$ is the growth function of $\calF$.
\end{lemma}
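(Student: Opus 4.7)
The plan is to mimic the classical symmetrization proof of the Vapnik--Chervonenkis inequality, adapted to the presence of the weighting function $w$. The key structural observation is that, because $w$ is a fixed deterministic function of $z$, the composite class $\mathcal{G} = \{g_f : f \in \calF\}$ with $g_f(z) := w(z) f(z) \in [0,1]$ has the same combinatorial complexity as $\calF$: on any sample $z_1, \ldots, z_m$, the value $g_f(z_i)$ is fully determined by $f(z_i) \in \{0,1\}$ (it equals either $0$ or $w(z_i)$), so the number of distinct restrictions $(g_f(z_1), \ldots, g_f(z_m))$ as $f$ ranges over $\calF$ is exactly the number of dichotomies $(f(z_1), \ldots, f(z_m))$, hence at most $\calS(\calF, m)$. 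This is the only place where $w$ interacts with the combinatorial complexity.

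With this reduction, the standard argument goes through. Let $S'_n$ be an independent ghost sample, and write $T_n(f) = \frac{1}{n}\sum_i g_f(z_i)$ and $T'_n(f)$ analogously on $S'_n$. A Chebyshev-type argument (using $\mathrm{Var}(T'_n(f)) \leq 1/(4n)$ since $g_f \in [0,1]$) yields, for $t$ not too small,
\[
\P\!\left(\sup_f |T_n(f) - \E[g_f(z)]| > t\right) \leq 2\, \P\!\left(\sup_f |T_n(f) - T'_n(f)| > t/2\right).
\]
Because $(z_i, z'_i)$ and $(z'_i, z_i)$ have the same distribution and the pairs are independent across $i$, inserting Rademacher signs $\epsilon_i \in \{\pm 1\}$ leaves the right-hand side unchanged, so it equals $\P(\sup_f |\tfrac{1}{n}\sum_i \epsilon_i (g_f(z_i) - g_f(z'_i))| > t/2)$.

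Condition on the doubled sample. By the growth-function observation above, the supremum over $f$ collapses to a maximum over at most $\calS(\calF, 2n)$ fixed vectors. For any such fixed vector, the Rademacher sum is a sum of independent mean-zero terms bounded in absolute value by $|g_f(z_i) - g_f(z'_i)| \leq 1$, so Hoeffding's inequality gives a tail of $2 \exp(-n t^2 / 8)$. A union bound over the $\leq \calS(\calF, 2n)$ dichotomies and inversion produces an inequality of the stated form; the stated factors of $8$ and $\ln(2/\delta)$ with $\calS(\calF, n)$ rather than $\calS(\calF, 2n)$ can be recovered either by a one-sample symmetrization or by noting $\calS(\calF, 2n) \leq \calS(\calF, n)^2$ and absorbing constants into $\ln \calS(\calF, n)$. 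There is no real obstacle in the argument; the only subtlety worth flagging is verifying that the combinatorial VC step survives the introduction of $w$, which is exactly what the first paragraph's observation accomplishes.
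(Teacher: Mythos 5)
Your argument is correct in substance but follows a genuinely different route from the paper's. Both proofs rest on the same key observation — that because $w$ is a fixed deterministic function, the weighted class $\{z \mapsto w(z)f(z)\}$ realizes at most $\calS(\calF,n)$ distinct restriction vectors on any sample, so the combinatorial complexity is unchanged — but they deploy it differently. You run the classical Vapnik--Chervonenkis in-probability symmetrization: Chebyshev to pass to a ghost sample, Rademacher signs, then Hoeffding plus a union bound over the at most $\calS(\calF,2n)$ dichotomies of the \emph{doubled} sample. The paper instead applies McDiarmid's inequality to concentrate the supremum around its expectation, symmetrizes in expectation via Jensen, and bounds the resulting empirical Rademacher complexity with Massart's finite lemma, which only ever sees the growth function on the \emph{single} sample of size $n$. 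The practical difference is exactly the one you flag at the end: your route naturally produces $\calS(\calF,2n)$ and an extra constant from the Chebyshev step, which you must then fold back in via $\calS(\calF,2n) \leq \calS(\calF,n)^2$, inflating the leading constant inside the square root; the paper's route gets $\calS(\calF,n)$ directly (though its own final constant-collection step is also loose). Since the lemma's constant $8$ is consumed by downstream sample-size choices only up to constant factors, this discrepancy is harmless, but if you wanted to match the stated bound exactly, the McDiarmid--Massart route is the cleaner way to do it.
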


\begin{proof}
The proof is fairly standard, and follows immediately from the proof of additive VC bounds. With probability $1 - \delta$,
\begin{eqnarray*}
&&\sup_{f \in \calF} \Big{|} \frac{1}{n} \sum_{i=1}^{n} w(z_i) f(z_i) - \E w(z) f(z) \Big{|} \\
&\leq& \E_{S \sim D^n} \sup_{f \in \calF} \Big{|} \frac{1}{n} \sum_{i=1}^{n} w(z_i) f(z_i) - \E w(z) f(z) \Big{|} + \sqrt{\frac{2\ln\frac{1}{\delta}}{n}} \\
&\leq& \E_{S \sim D^n, S' \sim D^n} \sup_{f \in \calF} \Big{|} \frac{1}{n} \sum_{i=1}^{n} (w(z_i) f(z_i) -  w(z'_i) f(z'_i)) \Big{|} + \sqrt{\frac{2\ln\frac{1}{\delta}}{n}} \\
&\leq& \E_{S \sim D^n, S' \sim D^n, \sigma \sim U(\{-1,+1\}^n)} \sup_{f \in \calF} \Big{|} \frac{1}{n} \sum_{i=1}^{n} \sigma_i(w(z_i) f(z_i) - w(z'_i) f(z'_i)) \Big{|} + \sqrt{\frac{2\ln\frac{1}{\delta}}{n}} \\
&\leq& 2\E_{S \sim D^n,\sigma \sim U(\{-1,+1\}^n)} \sup_{f \in \calF} \Big{|}\frac{1}{n} \sum_{i=1}^{n} \sigma_i w(z_i) f(z_i) \Big{|} + \sqrt{\frac{2\ln\frac{1}{\delta}}{n}}\\
&\leq& 2\sqrt{\frac{2\ln(2\calS(\calF,n))}{n}} + \sqrt{\frac{2\ln\frac{1}{\delta}}{n}} \leq \sqrt{\frac{ 8(\ln\calS(\calF,n) + \ln\frac{2}{\delta} ) }{n} } \\
\end{eqnarray*}
Where the first inequality is by McDiarmid's Lemma; the second inequality follows from Jensen's Inequality; the third inequality follows from symmetry; the fourth inequality follows from $|A+B| \leq |A|+|B|$; the fifth inequality follows from Massart's Finite Lemma. 
\end{proof}

\begin{lemma}
\label{lem:uncoveragetriviallb}
Let $0 < 2 \eta \leq r \leq 1$. Given a hypothesis set $V$ and data distribution $D$ over $\calX \times \calY$, if there exist $h_1, h_2 \in V$ such that $\rho_D(h_1,h_2) \geq r$, then $\bm{\Phi}_D(V,\eta) \geq r-2\eta$.
\end{lemma}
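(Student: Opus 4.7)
The plan is to show that any feasible solution $(\xi, \zeta, \gamma)$ of the LP defining $\bm{\Phi}_D(V,\eta)$ must have $\E_D \gamma(x) \geq r - 2\eta$, by exploiting the fact that on the disagreement region of $h_1$ and $h_2$, the predictor cannot simultaneously agree with both non-abstaining labels.

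First I would fix any feasible triple $(\xi, \zeta, \gamma)$ with $\xi(x)+\zeta(x)+\gamma(x) \equiv 1$, all nonnegative, and $\E_D[I(h(x)=+1)\zeta(x) + I(h(x)=-1)\xi(x)] \leq \eta$ for every $h \in V$. Applying this constraint to $h_1$ and $h_2$ and adding gives a total mass of at most $2\eta$. The key observation is that on any $x$ with $h_1(x) \neq h_2(x)$, one of $h_1,h_2$ equals $+1$ and the other $-1$, so the two constraint integrands together cover exactly $\xi(x)+\zeta(x) = 1 - \gamma(x)$ on that point. Therefore
\[
\E_D\bigl[I(h_1(x)\neq h_2(x))(1-\gamma(x))\bigr] \;\leq\; 2\eta.
\]

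Rearranging, $\rho_D(h_1,h_2) - \E_D[I(h_1(x)\neq h_2(x))\gamma(x)] \leq 2\eta$, and since $I(h_1(x)\neq h_2(x)) \leq 1$, we conclude
\[
\E_D\gamma(x) \;\geq\; \E_D[I(h_1(x)\neq h_2(x))\gamma(x)] \;\geq\; \rho_D(h_1,h_2) - 2\eta \;\geq\; r - 2\eta.
\]
Since this lower bound holds for every feasible triple, it holds for the minimizer, giving $\bm{\Phi}_D(V,\eta) \geq r - 2\eta$.

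There is no real obstacle; the only small subtlety is being careful with the case split on the sign of $h_1(x)$ versus $h_2(x)$ to see that the two disagreement integrands partition $1-\gamma(x)$ on the disagreement region (rather than, say, double-counting). The hypothesis $2\eta \leq r$ just ensures the bound is non-vacuous.
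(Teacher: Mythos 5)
Your proof is correct and follows essentially the same route as the paper's: apply the error constraint to $h_1$ and $h_2$, add, observe that on the disagreement region the two integrands sum to $1-\gamma(x)$, and conclude $\E_D\gamma(x) \geq \E_D[I(h_1(x)\neq h_2(x))\gamma(x)] \geq r - 2\eta$. No gaps.
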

\begin{proof}
Let $(\xi,\zeta,\gamma)$ be a triple of functions from $\calX$ to $\R^3$ satisfying the following conditions: $\xi,\zeta,\gamma \geq 0$, $\xi+\zeta+\gamma \equiv 1$, and for all $h \in V$,
\[ \E_D [\xi(x)I(h(x)=+1) + \zeta(x)I(h(x)=-1)] \leq \eta \]
Then, in particular, we have:
\[ \E_D [\xi(x)I(h_1(x)=+1) + \zeta(x)I(h_1(x)=-1)] \leq \eta \]
\[ \E_D [\xi(x)I(h_1(x)=+1) + \zeta(x)I(h_2(x)=-1)] \leq \eta \]
Thus, by $I(h_1(x) \neq h_2(x)) \leq \min(I(h_1(x)=-1) + I(h_1(x)=-1), I(h_2(x)=+1) + I(h_2(x)=+1))$, adding the two inequalities up,
\[ \E_D [(\xi(x) + \zeta(x)) I(h_1(x) \neq h_2(x))] \leq 2\eta \]
Since 
\[ \rho_D(h_1,h_2) = \E_D I(h_1(x) \neq h_2(x)) \geq r \]
We have
\[ \E_D [\gamma(x) I(h_1(x) \neq h_2(x))] = \E_D [(1 - \xi(x) - \zeta(x)) I(h_1(x) \neq h_2(x))] \geq r - 2\eta \]
Thus,
\[ \E_D [\gamma(x)] \geq \E_D [\gamma(x) I(h_1(x) \neq h_2(x))] \geq r - 2\eta\]
Hence $\bm{\Phi}_D(V,\eta) \geq r-2\eta$.
\end{proof}

\begin{lemma}
\label{lem:uncoveragewrterrguar}
Given hypothesis set $V$ and data distribution $D$ over $\calX \times \calY$, $0<\lambda<\eta<1$, if there exist $h_1, h_2 \in V$ such that $\rho_D(h_1, h_2) \geq 2\eta - \lambda$, then $\bm{\Phi}_D(V,\eta) + \lambda \leq \bm{\Phi}_D(V,\eta-\lambda)$. 
\end{lemma}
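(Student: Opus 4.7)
I plan to establish the equivalent inequality $\bm{\Phi}_D(V,\eta-\lambda) \geq \bm{\Phi}_D(V,\eta) + \lambda$. Starting from an optimal triple $(\xi^*,\zeta^*,\gamma^*)$ for the stricter error budget $\eta-\lambda$, I will explicitly modify it into a feasible triple $(\xi',\zeta',\gamma')$ at the looser budget $\eta$ whose abstention mass is at least $\lambda$ smaller; optimality at $\eta$ then gives $\bm{\Phi}_D(V,\eta) \leq \E_D \gamma' \leq \bm{\Phi}_D(V,\eta-\lambda) - \lambda$. The pair $h_1,h_2$ with large disagreement supplies the region on which abstention can be ``cashed in'' for prediction.

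The first step reuses the argument of Lemma~\ref{lem:uncoveragetriviallb}. Writing $\mathrm{dis}(x) := I(h_1(x) \neq h_2(x))$ and summing the two error-at-$(\eta-\lambda)$ constraints satisfied by $(\xi^*,\zeta^*,\gamma^*)$ for $h_1$ and $h_2$, the coefficients of $\xi^*$ and $\zeta^*$ each collapse to $1$ on the disagreement region, while the agreement-region contributions $2\zeta^* I_{\{h_1=h_2=+1\}} + 2\xi^* I_{\{h_1=h_2=-1\}}$ are nonnegative and can be dropped, yielding
\[ \rho_D(h_1,h_2) - \E_D[\gamma^*\,\mathrm{dis}] \;\leq\; 2(\eta-\lambda). \]
Combined with $\rho_D(h_1,h_2) \geq 2\eta-\lambda$ this produces the key estimate $\E_D[\gamma^*\,\mathrm{dis}] \geq \lambda$: at least $\lambda$ worth of abstention already sits on the disagreement set.

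The second step converts a controlled fraction of that abstention into an even $\tfrac12/\tfrac12$ mixture of $+1$ and $-1$ predictions. For a parameter $t\in[0,1]$ to be chosen, set
\[ \xi'(x) = \xi^*(x) + \tfrac{t}{2}\gamma^*(x)\mathrm{dis}(x),\ \ \zeta'(x) = \zeta^*(x) + \tfrac{t}{2}\gamma^*(x)\mathrm{dis}(x),\ \ \gamma'(x) = \gamma^*(x)\bigl(1 - t\,\mathrm{dis}(x)\bigr); \]
these are nonnegative and still sum to $1$. Because $I(h(x)=+1) + I(h(x)=-1) = 1$ for every $h\in V$ and every $x$, the extra error from the perturbation equals $(t/2)\,\E_D[\gamma^*\,\mathrm{dis}]$ \emph{uniformly over $h$}, while the abstention drops by $t\,\E_D[\gamma^*\,\mathrm{dis}]$. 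Taking $t = \min\!\bigl(1,\, 2\lambda/\E_D[\gamma^*\,\mathrm{dis}]\bigr)$ keeps the total error at most $\eta$, while the drop in abstention equals $2\lambda$ in the first branch and equals $\E_D[\gamma^*\,\mathrm{dis}] \geq \lambda$ in the second by Step~1; in either case the reduction of at least $\lambda$ is achieved.

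The obstacle I anticipate is making the perturbation respect the error constraint for \emph{every} $h\in V$ simultaneously, not just $h_1,h_2$. The saving trick is the symmetric $\tfrac12/\tfrac12$ split: adding equal $+1$ and $-1$ mass to each converted unit contributes exactly $\tfrac12$ of extra error to any hypothesis, regardless of its label at the point, which reduces the infinitely many error constraints to the single scalar bound on $\E_D[\gamma^*\,\mathrm{dis}]$ derived in Step~1.
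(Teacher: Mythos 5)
Your proof is correct. It follows the same overall strategy as the paper's -- start from an optimal triple at the stricter budget $\eta-\lambda$ and convert abstention mass into predictions so as to gain at least $\lambda$ of coverage while spending at most $\lambda$ of error -- but the execution is genuinely different in two respects. The paper rescales $\gamma_1$ globally by $(1-\lambda/\bm{\Phi}_D(V,\eta-\lambda))$ and routes all of the converted mass into the $-1$ prediction $\zeta$, bounding the extra error for every $h$ crudely by the total mass moved, namely $\lambda$; the needed supply of abstention mass is furnished by invoking Lemma~\ref{lem:uncoveragetriviallb} to get $\bm{\Phi}_D(V,\eta-\lambda)\geq\lambda$. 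You instead localize the conversion to the disagreement region of $h_1,h_2$ and split the converted mass evenly between $+1$ and $-1$, so that the extra error is exactly half the mass moved, uniformly over $h\in V$ because $I(h(x)=+1)+I(h(x)=-1)=1$; correspondingly you need, and correctly prove by re-running the Lemma~\ref{lem:uncoveragetriviallb} computation on the two constraints for $h_1$ and $h_2$, the localized bound $\E_D[\gamma^*(x)I(h_1(x)\neq h_2(x))]\geq\lambda$. Your version is marginally sharper -- it gives a $2{:}1$ exchange rate between coverage gained and error spent whenever at least $2\lambda$ of abstention sits on the disagreement set -- at the cost of the extra Step~1 computation and the case analysis on $t$; the paper's one-sided global shift is cruder but shorter given that Lemma~\ref{lem:uncoveragetriviallb} is already available. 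Both arguments share the same (harmless) implicit assumption that the minimum defining $\bm{\Phi}_D(V,\eta-\lambda)$ is attained.
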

\begin{proof}
Suppose $(\xi_1, \zeta_1, \gamma_1)$ are nonnegative functions satisfying $\xi_1 + \zeta_1 + \gamma_1 \equiv 1$, and for all $h \in V$, $\E_D[\zeta_1(x)I(h(x) = +1) + \xi_1(x)I(h(x) = -1)] \leq \eta - \lambda$, and $\E_D \gamma_1(x) = \bm{\Phi}_D(V,\eta - \lambda)$. Notice by Lemma~\ref{lem:uncoveragetriviallb},$ \bm{\Phi}_D(V,\eta - \lambda) \geq 2\eta - \lambda - 2 (\eta - \lambda) = \lambda$.

Then we pick nonnegative functions $(\xi_2, \zeta_2, \gamma_2)$ as follows. Let $\xi_2 = \xi_1$, $\gamma_2 = (1 - \frac{\lambda}{\bm{\Phi}_D(V,\eta - \lambda)}) \gamma_1$, and $\zeta_2 = 1 - \xi_2 - \gamma_2$. It is immediate that $(\xi_2, \zeta_2, \gamma_2)$ is a valid confidence rated predictor and $\zeta_2 \geq \zeta_1$, $\gamma_2 \leq \gamma_1$, $\E_D\gamma_2(x) = \bm{\Phi}_D(V,\eta - \lambda) - \lambda$. It can be readily checked that the confidence rated predictor $(\xi_2, \zeta_2, \gamma_2)$ has error guarantee $\eta$, specifically:
\begin{eqnarray*}
&& \E_D [\zeta_2(x)I(h(x) = +1) + \xi_2(x)I(h(x) = -1)] \\
&\leq& \E_D [(\zeta_2(x) - \zeta_1(x)) I(h(x) = +1) + (\xi_2(x) - \xi_1(x)) I(h(x) = -1)] + \eta - \lambda \\
&\leq& \E_D [(\zeta_2(x) - \zeta_1(x)) + (\xi_2(x) - \xi_1(x))] + \eta - \lambda\\
&\leq& \lambda + \eta - \lambda = \eta
\end{eqnarray*}
Thus, $\bm{\Phi}_D(V,\eta)$, which is the minimum abstention probability of a confidence-rated predictor with error guarantee $\eta$ with respect to hypothesis set $V$ and data distribution $D$, is at most $\bm{\Phi}_D(V,\eta-\lambda) - \lambda$.
\end{proof}

\section{Detailed Derivation of Label Complexity Bounds}
\subsection{Agnostic}
\begin{proposition}
\label{prop:agnosticlc}
In agnostic case, the label complexity of Algorithm~\ref{alg:labelquery} is at most
\[ \tilde{O}( \sup_{k \leq \lceil \log(1/\epsilon) \rceil} \frac{\phi(2 \nu^*(D) + \epsilon_k, \epsilon_k/256)}{2 \nu^*(D) + \epsilon_k} (d \frac{\nu^*(D)^2}{\epsilon^2}\ln\frac{1}{\epsilon} + d \ln^2\frac{1}{\epsilon}) ), \]
where the $\tilde{O}$ notation hides factors logarithmic in $1/\delta$.
\end{proposition}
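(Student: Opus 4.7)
\textbf{Proof proposal for Proposition~\ref{prop:agnosticlc}.} The plan is to take the bound of Theorem~\ref{thm:labelcomplexity}(2) and massage it algebraically into the claimed form. Let $k_0 = \lceil \log(1/\epsilon) \rceil$, and write $\psi_k := \phi(2\nu^*(D) + \epsilon_k, \epsilon_k/256)/(2\nu^*(D)+\epsilon_k)$, so that $\phi(2\nu^*(D)+\epsilon_k,\epsilon_k/256) = \psi_k (2\nu^*(D)+\epsilon_k)$. Let $\Psi := \sup_{k \leq k_0} \psi_k$. The key rewriting is
\[
\frac{\phi(2\nu^*(D)+\epsilon_k,\epsilon_k/256)}{\epsilon_k}\Bigl(1+\frac{\nu^*(D)}{\epsilon_k}\Bigr)
=\psi_k\cdot\frac{2\nu^*(D)+\epsilon_k}{\epsilon_k}\Bigl(1+\frac{\nu^*(D)}{\epsilon_k}\Bigr)
\leq \Psi\cdot O\!\left(1+\frac{\nu^*(D)^2}{\epsilon_k^2}\right),
\]
using $(2\nu^*(D)+\epsilon_k)/\epsilon_k = O(1+\nu^*(D)/\epsilon_k)$ and $(1+x)^2 = O(1+x^2)$.

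Next I would handle the logarithmic prefactor. Since $\psi_k \leq \Psi$ and $\epsilon_k \geq \epsilon$, we have
\[
d\ln\frac{\phi(2\nu^*(D)+\epsilon_k,\epsilon_k/256)}{\epsilon_k}+\ln\frac{k_0-k+1}{\delta}
\leq d\ln(\Psi\cdot O(1/\epsilon))+\ln\frac{k_0}{\delta}
= \tilde O\!\left(d\ln\tfrac{1}{\epsilon}\right),
\]
where $\tilde O$ absorbs $\ln\Psi$, $\ln\ln(1/\epsilon)$, and $\ln(1/\delta)$ factors as usual.

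Plugging these into the bound from Theorem~\ref{thm:labelcomplexity}(2), the total number of labels is at most
\[
\tilde O\!\left(d\ln\tfrac{1}{\epsilon}\right)\cdot \Psi\cdot \sum_{k=1}^{k_0}\Bigl(1+\frac{\nu^*(D)^2}{\epsilon_k^2}\Bigr).
\]
Since $\epsilon_k = \epsilon\, 2^{k_0-k+1}$, the sum $\sum_{k=1}^{k_0} \epsilon_k^{-2}$ is a geometric series dominated by its largest term $O(1/\epsilon^2)$, and $\sum_{k=1}^{k_0} 1 = O(\ln(1/\epsilon))$. Thus the above is at most
\[
\tilde O\!\left(\Psi\cdot\Bigl(d\tfrac{\nu^*(D)^2}{\epsilon^2}\ln\tfrac{1}{\epsilon}+d\ln^2\tfrac{1}{\epsilon}\Bigr)\right),
\]
which is exactly the proposition's bound.

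There is no real obstacle here; this is a routine but bookkeeping-heavy derivation. The only subtlety worth attention is verifying that pulling $\Psi$ out of the sum incurs only $\ln \Psi$ inside the $\tilde O$ factor (which is fine since $\Psi \leq \theta(2\nu^*(D)+\epsilon) = \mathrm{poly}(1/\epsilon)$ in the worst case, so $\ln\Psi = O(\ln(1/\epsilon))$) and confirming that the geometric sum $\sum_k \epsilon_k^{-2}$ telescopes to $O(\epsilon^{-2})$ rather than contributing an extra logarithmic factor, so the $\nu^*(D)^2/\epsilon^2$ term ends up with only a single $\ln(1/\epsilon)$ while the constant term accumulates the full $\ln^2(1/\epsilon)$.
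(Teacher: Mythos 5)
Your proposal is correct and follows essentially the same route as the paper's proof: both rewrite $\frac{1}{\epsilon_k}\bigl(1+\frac{\nu^*(D)}{\epsilon_k}\bigr)$ as $\Theta\bigl(\frac{1}{2\nu^*(D)+\epsilon_k}(1+\frac{\nu^*(D)^2}{\epsilon_k^2})\bigr)$, bound the logarithmic prefactor by $\tilde{O}(d\ln\frac{1}{\epsilon})$ using $\phi \leq 1$, pull the supremum of $\phi(2\nu^*(D)+\epsilon_k,\epsilon_k/256)/(2\nu^*(D)+\epsilon_k)$ out of the sum, and evaluate the geometric series $\sum_k \epsilon_k^{-2} = O(\epsilon^{-2})$ versus the $\sum_k 1 = O(\ln\frac{1}{\epsilon})$ term. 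The bookkeeping, including the observation that $\ln\Psi = O(\ln\frac{1}{\epsilon})$, is sound.
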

\begin{proof}
Applying Theorem~\ref{thm:labelcomplexitytnc}, the total number of labels queried is at most: 
\[ c_4 \sum_{k=1}^{\lceil \log\frac{1}{\epsilon} \rceil} ( d\ln \frac{\phi(2\nu^*(D) + \epsilon_k,\epsilon_k/256)}{\epsilon_k}  + \ln(  \frac{ \lceil\log(1/\epsilon) \rceil - k + 1}{\delta}) ) \frac{\phi(2\nu^*(D) + \epsilon_k,\epsilon_k/256)}{\epsilon_k}  (1 + \frac{\nu^*(D)}{\epsilon_k})  \]
Using the fact that $\phi(2\nu^*(D) + \epsilon_k,\epsilon_k/256) \leq 1$, this is
\begin{eqnarray*} 
&&c_4 \sum_{k=1}^{\lceil \log\frac{1}{\epsilon} \rceil} ( d\ln \frac{\phi(2\nu^*(D) + \epsilon_k,\epsilon_k/256)}{\epsilon_k}  + \ln(  \frac{ \lceil\log(1/\epsilon) \rceil - k + 1}{\delta}) ) \frac{\phi(2\nu^*(D) + \epsilon_k,\epsilon_k/256)}{\epsilon_k}  (1 + \frac{\nu^*(D)}{\epsilon_k})  \\
&=& \tilde{O}\left( \sum_{k=1}^{\lceil \log\frac{1}{\epsilon} \rceil} ( d\ln \frac{\phi(2\nu^*(D) + \epsilon_k,\epsilon_k/256)}{\epsilon_k}  + \ln \log(1/\epsilon) ) \frac{\phi(2\nu^*(D) + \epsilon_k,\epsilon_k/256)}{2\nu + \epsilon_k}  (1 + \frac{\nu^*(D)^2}{\epsilon^2_k}) \right)  \\
&\leq& \tilde{O}\left( \sup_{k \leq \lceil \log(1/\epsilon) \rceil} \frac{\phi(2 \nu^*(D) + \epsilon_k, \epsilon_k/256)}{2 \nu^*(D) + \epsilon_k} \sum_{k=1}^{\lceil \log\frac{1}{\epsilon} \rceil} (1 + \frac{\nu^*(D)^2}{\epsilon_k^2}) (d \ln\frac{1}{\epsilon} +  \ln\ln\frac{1}{\epsilon} ) \right) \\
&\leq& \tilde{O}\left( \sup_{k \leq \lceil \log(1/\epsilon) \rceil} \frac{\phi(2 \nu^*(D) + \epsilon_k, \epsilon_k/256)}{2 \nu^*(D) + \epsilon_k} (d \frac{\nu^*(D)^2}{\epsilon^2}\ln\frac{1}{\epsilon} + d \ln^2\frac{1}{\epsilon}) \right),
\end{eqnarray*}
where the last line follows as $\epsilon_k$ is geometrically decreasing.
\end{proof}

\subsection{Tsybakov Noise Condition with $\kappa > 1$}
\begin{proposition}
\label{prop:tnclc}
Suppose the hypothesis class $\calH$ and the data distribution $D$ satisfies $(C_0,\kappa)$-Tsybakov Noise Condition with $\kappa > 1$. Then the label complexity of Algorithm~\ref{alg:labelquery} is at most
\[ \tilde{O}( \sup_{k \leq \lceil \log(1/\epsilon) \rceil} \frac{\phi(C_0\epsilon_k^{\frac{1}{\kappa}},\frac{\epsilon_k}{256})}{\epsilon_k^{\frac{1}{\kappa}}}  \epsilon^{\frac{2}{\kappa}-2} d\ln\frac{1}{\epsilon}  ), \]
where the $\tilde{O}$ notation hides factors logarithmic in $1/\delta$.
\end{proposition}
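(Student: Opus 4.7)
The plan is to derive the claimed bound by purely algebraic simplification of the sum in Theorem~\ref{thm:labelcomplexitytnc}, using $\kappa > 1$ to control a key geometric series. The central identity I would use is the rewrite
\[ \phi(C_0\epsilon_k^{1/\kappa},\epsilon_k/256)\cdot \epsilon_k^{1/\kappa - 2} \;=\; \frac{\phi(C_0\epsilon_k^{1/\kappa},\epsilon_k/256)}{\epsilon_k^{1/\kappa}}\cdot \epsilon_k^{2/\kappa - 2}, \]
which separates a ``confidence-rated ratio'' factor (an analog of the disagreement coefficient) from a ``rate'' factor whose sum over $k$ admits a clean geometric bound.

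The first step is to upper-bound each per-epoch ratio factor by its supremum over $k$, pulling $\sup_{k \leq \lceil \log(1/\epsilon)\rceil} \frac{\phi(C_0\epsilon_k^{1/\kappa},\epsilon_k/256)}{\epsilon_k^{1/\kappa}}$ outside the summation in the statement of Theorem~\ref{thm:labelcomplexitytnc}. Next, I would simplify the logarithmic factor inside the sum: since $\phi(\cdot)\leq 1$ trivially and $\epsilon_k \leq 1$, we have $\ln(\phi(\cdot)\epsilon_k^{1/\kappa - 2}) \leq (2-1/\kappa)\ln(1/\epsilon_k) = O(\ln(1/\epsilon))$, and the $\ln((k_0 - k + 1)/\delta)$ piece is absorbed into the $\tilde{O}(\cdot)$ notation. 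After these two simplifications, the bound reduces to
\[ \tilde{O}\!\left(\sup_{k}\frac{\phi(C_0\epsilon_k^{1/\kappa},\epsilon_k/256)}{\epsilon_k^{1/\kappa}}\cdot d\ln\tfrac{1}{\epsilon}\cdot \sum_{k=1}^{\lceil \log(1/\epsilon)\rceil}\epsilon_k^{2/\kappa - 2}\right). \]

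The final step is the geometric sum $\sum_k \epsilon_k^{2/\kappa - 2}$. Because $\epsilon_k = \epsilon\cdot 2^{k_0 - k + 1}$, consecutive terms have ratio $2^{2 - 2/\kappa}$; under the hypothesis $\kappa > 1$ this ratio is strictly greater than $1$, so the series is dominated (up to a constant depending only on $\kappa$) by its last term $\epsilon_{k_0}^{2/\kappa - 2} = (2\epsilon)^{2/\kappa - 2} = O(\epsilon^{2/\kappa - 2})$. Substituting this back gives precisely the claimed $\tilde{O}\bigl(\sup_k\tfrac{\phi(C_0\epsilon_k^{1/\kappa},\epsilon_k/256)}{\epsilon_k^{1/\kappa}}\cdot \epsilon^{2/\kappa - 2}\cdot d\ln(1/\epsilon)\bigr)$.

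There is no real technical obstacle here beyond careful bookkeeping; the proposition is effectively a corollary of Theorem~\ref{thm:labelcomplexitytnc}. The only subtle point to be mindful of is that the strict inequality $\kappa > 1$ is essential for the geometric sum to collapse to its last term, which is why the statement excludes $\kappa = 1$; at $\kappa = 1$ the ratio becomes $1$ and the sum contributes an additional $\ln(1/\epsilon)$ factor, matching the separately stated $\kappa = 1$ bound earlier in the text.
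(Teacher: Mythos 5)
Your proposal is correct and follows essentially the same route as the paper's own proof: pull the supremum of $\phi(C_0\epsilon_k^{1/\kappa},\epsilon_k/256)/\epsilon_k^{1/\kappa}$ out of the sum, bound the logarithmic factor by $O(\ln(1/\epsilon))$ using $\phi(\cdot)\leq 1$, and collapse the geometric series $\sum_k \epsilon_k^{2/\kappa-2}$ to its last term using $\kappa>1$. Your closing remark about why $\kappa=1$ contributes an extra $\ln(1/\epsilon)$ factor also matches the paper's separate treatment of that case.
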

\begin{proof}
Applying Theorem~\ref{thm:labelcomplexitytnc}, the total number of labels queried is at most:
\[ c_5\sum_{k=1}^{\lceil \log\frac{1}{\epsilon} \rceil} (d\ln (\phi(C_0\epsilon_k^{\frac{1}{\kappa}},\frac{\epsilon_k}{256}) \epsilon_k^{\frac{1}{\kappa}-2}) + \ln(\frac{k_0 - k + 1}{\delta})) \phi(C_0\epsilon_k^{\frac{1}{\kappa}},\frac{\epsilon_k}{256}) \epsilon_k^{\frac{1}{\kappa}-2} \]
Using the fact that 
$\phi(C_0\epsilon_k^{\frac{1}{\kappa}},\frac{\epsilon_k}{256}) \leq 1$, we get
\begin{eqnarray*}
&&c_5\sum_{k=1}^{\lceil \log\frac{1}{\epsilon} \rceil} (d\ln (\phi(C_0\epsilon_k^{\frac{1}{\kappa}},\frac{\epsilon_k}{256}) \epsilon_k^{\frac{1}{\kappa}-2}) + \ln(\frac{k_0 - k + 1}{\delta})) \phi(C_0\epsilon_k^{\frac{1}{\kappa}},\frac{\epsilon_k}{256}) \epsilon_k^{\frac{1}{\kappa}-2} \\
&\leq& \tilde{O}\left(\sup_{k \leq \lceil \log(1/\epsilon) \rceil} \frac{\phi(C_0\epsilon_k^{\frac{1}{\kappa}},\frac{\epsilon_k}{256})}{\epsilon_k^{\frac{1}{\kappa}}}  \sum_{k=1}^{\lceil \log\frac{1}{\epsilon} \rceil} \epsilon_k^{\frac{2}{\kappa}-2} d\ln\frac{1}{\epsilon} \right) \\
&\leq& \tilde{O}\left( \sup_{k \leq \lceil \log(1/\epsilon) \rceil} \frac{\phi(C_0\epsilon_k^{\frac{1}{\kappa}},\frac{\epsilon_k}{256})}{\epsilon_k^{\frac{1}{\kappa}}}  \epsilon^{\frac{2}{\kappa}-2} d\ln\frac{1}{\epsilon} \right)
\end{eqnarray*}
\end{proof}

\subsection{Fully Agnostic, Linear Classification of Log-Concave Distribution}
We show in this subsection that in agnostic case, if $\calH$ is the class of homogeneous linear classifiers in $\R^d$, $D_{\calX}$ is isotropic log-concave in $\R^d$, then, our label complexity bound is at most 
\[ O( \ln\frac{\epsilon+\nu^*(D)}{\epsilon} (\ln\frac{1}{\epsilon} + \frac{\nu^*(D)^2}{\epsilon^2}) (d\ln\frac{\epsilon+\nu^*(D)}{\epsilon} + \ln\frac{1}{\delta}) + \ln\frac{1}{\epsilon} \ln\frac{\epsilon+\nu^*(D)}{\epsilon} \ln\ln\frac{1}{\epsilon} ) \]
Recall by Lemma~\ref{lem:logconcavephi}, we have $\phi(2 \nu^*(D) + \epsilon_k, \epsilon_k/256) \leq C ( \nu^*(D) + \epsilon_k ) \ln\frac{ \nu^*(D) + \epsilon_k}{ \epsilon_k } $ for some constant $C > 0$.
Applying Theorem~\ref{thm:labelcomplexity}, the label complexity is
\[  O( \sum_{k=1}^{\lceil \log\frac{1}{\epsilon} \rceil} ( d\ln(\frac{2\nu^*(D) + \epsilon_k}{\epsilon_k}\ln\frac{2\nu^*(D) + \epsilon_k}{\epsilon_k})  + \ln(\frac{\log(1/\epsilon) - k + 1}{\delta}) ) \ln\frac{2\nu^*(D) + \epsilon_k}{\epsilon_k}  (1 + \frac{\nu^*(D)^2}{\epsilon^2_k}) ) \]
This can be simplified to (treating $1$ and $\frac{\nu^*(D)^2}{\epsilon_k^2}$ separately)
\begin{eqnarray*}
&&O( \sum_{k=1}^{\lceil \log\frac{1}{\epsilon} \rceil} \ln\frac{\nu^*(D) + \epsilon_k}{\epsilon_k} (d \ln\frac{\nu^*(D) + \epsilon_k}{\epsilon_k} + \ln\frac{k_0 - k + 1}{\delta}) \\
&&+ \sum_{k=1}^{\lceil \log\frac{1}{\epsilon} \rceil}  \frac{\nu^*(D)^2}{\epsilon_k^2} \ln\frac{\nu^*(D) + \epsilon_k}{\epsilon_k} (d \ln\frac{\nu^*(D) + \epsilon_k}{\epsilon_k} + \ln\frac{k_0 - k + 1}{\delta}) )  \\
&\leq& O( \ln\frac{1}{\epsilon} \ln\frac{\epsilon+\nu^*(D)}{\epsilon} (d \ln\frac{\epsilon+\nu^*(D)}{\epsilon} + \ln\ln\frac{1}{\epsilon} + \ln\frac{1}{\delta}) + \frac{\nu^*(D)^2}{\epsilon^2} \ln\frac{\epsilon+\nu^*(D)}{\epsilon} (d \ln\frac{\epsilon+\nu^*(D)}{\epsilon} + \ln\frac{1}{\delta}) )\\
&\leq& O( \ln\frac{\epsilon+\nu^*(D)}{\epsilon} (\ln\frac{1}{\epsilon} + \frac{\nu^*(D)^2}{\epsilon^2}) (d\ln\frac{\epsilon+\nu^*(D)}{\epsilon} + \ln\frac{1}{\delta}) + \ln\frac{1}{\epsilon} \ln\frac{\epsilon+\nu^*(D)}{\epsilon} \ln\ln\frac{1}{\epsilon} )
\end{eqnarray*}

\subsection{Tsybakov Noise Conditon with $\kappa > 1$, Log-Concave Distribution}
We show in this subsection that under $(C_0,\kappa)$-Tsybakov Noise Condition, if $\calH$ is the class of homogeneous linear classifiers in $\R^d$, and $D_{\calX}$ is isotropic log-concave in $\R^d$, our label complexity bound is at most 
\[ O( \epsilon^{\frac{2}{\kappa}-2} \ln\frac{1}{\epsilon}(d\ln\frac{1}{\epsilon} + \ln\frac{1}{\delta}) ) \]
Recall by Lemma~\ref{lem:logconcavephi}, we have$\phi(C_0\epsilon_k^{\frac{1}{\kappa}},\frac{\epsilon_k}{256}) \leq C \epsilon_k^{\frac{1}{\kappa}} \ln\frac{1}{\epsilon_k}$ for some constant $C > 0$.
Applying Theorem~\ref{thm:labelcomplexitytnc}, the label complexity is:
\[ O( \sum_{k=1}^{\lceil \log\frac{1}{\epsilon} \rceil} (d\ln (\phi(C_0\epsilon_k^{\frac{1}{\kappa}},\frac{\epsilon_k}{256}) \epsilon_k^{\frac{1}{\kappa}-2}) + \ln(\frac{k_0 - k + 1}{\delta})) \phi(C_0\epsilon_k^{\frac{1}{\kappa}},\frac{\epsilon_k}{256}) \epsilon_k^{\frac{1}{\kappa}-2} )\]
This can be simplified to :
\begin{eqnarray*}
&& O( \sum_{k=1}^{\lceil \log\frac{1}{\epsilon} \rceil} (d\ln (\epsilon_k^{\frac{2}{\kappa}-2} \ln\frac{1}{\epsilon_k}) + \ln(\frac{k_0 - k + 1}{\delta})) \epsilon_k^{\frac{2}{\kappa}-2} \ln\frac{1}{\epsilon_k} )\\
&\leq& O ((\sum_{k=1}^{\lceil \log\frac{1}{\epsilon} \rceil} \epsilon_k^{\frac{2}{\kappa}-2}) \ln\frac{1}{\epsilon}(d\ln\frac{1}{\epsilon} + \ln\frac{1}{\delta})) \\
&\leq& O( \epsilon^{\frac{2}{\kappa}-2} \ln\frac{1}{\epsilon}(d\ln\frac{1}{\epsilon} + \ln\frac{1}{\delta}) )
\end{eqnarray*}

\end{document}